\documentclass{article} 
\usepackage[final]{colm2025_conference}
\usepackage{microtype}
\usepackage{hyperref}
\usepackage{url}
\usepackage{booktabs}
\usepackage{lineno}

\definecolor{dpoColor}{HTML}{66c2a5}     
\definecolor{mpoColor}{HTML}{639CD9}%

\usepackage[many]{tcolorbox}
\tcbset{enhanced,sharp corners,boxrule=0.8pt}

\definecolor{pastelblue}{HTML}{E8F4FA}   
\definecolor{pastelblueframe}{HTML}{75AADB} 
\definecolor{pastelCyan}{HTML}{B3E5FC}     
\definecolor{cyanFrame}{HTML}{00B0FF}      
\definecolor{mint}{HTML}{C9F2E5}   
\colorlet  {mintFrame}{mint!80!pastelblueframe}



\usepackage{amsmath,amsfonts,bm}









\def\eqref#1{equation~\ref{#1}}









\def\1{\bm{1}}










\DeclareMathAlphabet{\mathsfit}{\encodingdefault}{\sfdefault}{m}{sl}
\SetMathAlphabet{\mathsfit}{bold}{\encodingdefault}{\sfdefault}{bx}{n}













\usepackage{graphicx}
\usepackage{subfigure}
\usepackage{natbib}

\usepackage{xcolor}
\usepackage{colortbl}
\usepackage{hyperref}
\definecolor{DarkBlue}{rgb}{0.1,0.1,0.5}
\definecolor{DarkGreen}{rgb}{0.1,0.5,0.1}
\definecolor{deepyellow}{RGB}{218, 174, 42}
\hypersetup{
	colorlinks   = true,
	linkcolor    = DarkBlue, 
	urlcolor     = DarkBlue, 
	citecolor    = DarkGreen 
}

\usepackage{xfrac}
\usepackage{amsthm}
\usepackage{thmtools,thm-restate}
\usepackage{amsmath,amsfonts,dsfont}
\usepackage[mathscr]{euscript}
\usepackage{booktabs} 

\usepackage{multirow}
\usepackage{svg}
\usepackage{makecell}
\usepackage{pgfplots}
\pgfplotsset{compat=1.18} 

\definecolor{DarkBlue}{rgb}{0.3,0.3,0.70}
\definecolor{azure}{rgb}{0.0, 0.5, 1.0}
\definecolor{darkcerulean}{rgb}{0.03, 0.27, 0.49}
\definecolor{denim}{rgb}{0.08, 0.38, 0.74}
\definecolor{DarkGreen}{rgb}{0.3,0.7,0.3}
\definecolor{lighter-gray}{gray}{0.95}
\hypersetup{
	colorlinks   = true,
	linkcolor    = DarkBlue, 
	urlcolor     = darkcerulean, 
	citecolor    = denim 
}
\definecolor{AlgHighlight}{HTML}{228B22}

\newtheoremstyle{thmstyle}
{0.5em} 
{0.15em} 
{} 
{} 
{\bfseries} 
{.} 
{.5em} 
{} 

\theoremstyle{thmstyle} 

\newtheorem{lemma}{Lemma}

\theoremstyle{definition}

\newtheorem{conjecture}{Conjecture}

\theoremstyle{remark}





\renewcommand{\1}{ \mathds{1}}

\newcommand{\CommentLines}[1]{}

\usepackage{array}
\newcolumntype{x}[1]{>{\centering\let\newline\\\arraybackslash\hspace{0pt}}m{#1}}
\usepackage{colortbl}

\usepackage{wrapfig}
\usepackage{svg}

\usepackage{amsmath} 
\usepackage{amssymb} 
\usepackage{colortbl} 
\usepackage{enumitem}

\definecolor{green}{HTML}{C6EFCE}
\definecolor{red}{HTML}{FFC7CE}
\definecolor{yellow}{HTML}{FFEB9C}

\usepackage{algorithm}
\usepackage{algorithmic}

\newcommand{\refa}{\textsc{Refa}}
\newcommand{\refabase}{\textsc{RefaBase}}
\newcommand{\simpo}{\textsc{Simpo}}


\definecolor{darkblue}{rgb}{0, 0, 0.5}
\hypersetup{colorlinks=true, citecolor=darkblue, linkcolor=darkblue, urlcolor=darkblue}

\title{\refa: Reference Free Alignment with Fine-Grained Length Control}





\author{
    Taneesh Gupta\thanks{Equal contribution.}\hspace{2mm}\thanks{Microsoft} \quad
    Rahul Madhavan\footnotemark[1]\hspace{2mm}\thanks{Indian Institute of Science (IISc), Bangalore} \\
    \hspace{0.5mm} Xuchao Zhang\footnotemark[2] \quad
    Chetan Bansal\footnotemark[2] \quad
    Saravan Rajmohan\footnotemark[2]
}

%

\begin{document}

\maketitle

\ifcolmsubmission
\linenumbers
\fi


\vspace{-0.1in}
\begin{abstract}
To mitigate reward hacking from response verbosity, modern preference optimization methods are increasingly adopting length normalization (e.g., SimPO, ORPO, LN-DPO). While effective against this bias, we demonstrate that length normalization itself introduces a failure mode: the \textbf{URSLA shortcut}. Here models learn to satisfy the alignment objective by prematurely truncating low-quality responses rather than learning from their semantic content. To address this, we introduce \textbf{\refa}, a new alignment framework that proposes probabilistic control on a structural token that controls termination. Our core innovation is a new class of regularizers that operate directly on the probability of the End-of-Sequence (EOS) token, a previously unexploited control lever. This token-level intervention provides a principled solution to the URSLA shortcut, ensuring genuine quality improvements. Furthermore, it unlocks a versatile mechanism for managing the alignment-efficiency tradeoff, enabling practitioners to fine-tune models that adhere to specific token budgets. Empirically, REFA achieves a \textbf{60.29\%} win rate and a \textbf{52.17\%} length-controlled win rate on AlpacaEval2 with Llama-3-8B-Instruct, demonstrating the power of our token-level control paradigm.
\end{abstract}

\section{Introduction}
\label{sec:introduction}

The twin goals of modern Large Language Model (LLM) research are enhancing alignment with human values and improving computational efficiency for practical deployment. While alignment is advanced through preference optimization techniques like DPO \citep{rafailov2024direct,xu2024dpo,wang2023rlhf,ouyang2022training}, efficiency is often pursued via model-centric approaches such as quantization \citep{dettmers2023qlora}, distillation \citep{shridhar2023distilling, agarwal2023gkd}, and architectural innovations that reduce computational load (e.g., Matryoshka representations \citep{kusupati2022matryoshka, devvrit2024matformer}). In this work, we argue for a complementary axis of efficiency: managing the inference-time budget by controlling output token length through post-training alignment. The ability to precisely regulate response verbosity is a critical tool for building cost-effective and responsive LLM-powered systems \citep{wang2024model,erol2025cost}.

However, achieving such control is complicated by a classic problem: reward hacking. In preference datasets, rewards often correlate spuriously with length, creating an incentive for models to generate longer responses to maximize their score \citep{gao2023scaling}. To counter this, a growing class of preference optimization methods (e.g., SimPO, ORPO, LN-DPO) incorporates length normalization directly into their objective. Yet, this fix for one form of reward hacking inadvertently creates another. As we formalize in Section \ref{sec:challenge_length_shortcut}, this leads to the \textbf{URSLA shortcut}: a new failure mode where models learn to satisfy the alignment objective by exploiting a statistical property of language generation to prematurely truncate low-quality responses. This presents a dichotomy: the very mechanism needed to prevent one length-based exploit enables another, making robust length control elusive.


We argue this challenge stems from the coarse-grained nature of current preference objectives, which not only treat sequence probabilities as opaque scores but are often limited to single preference pairs \citep{zhong2024dpo,zeng2024token}. To achieve robust alignment and practical control, we introduce \textbf{REFA}, a new alignment framework that proposes probabilistic control on a structural token that governs termination. Our framework is designed to leverage the full spectrum of available data by operating on \textbf{sets} of preferred and rejected responses \citep{gupta2025multipreferenceoptimizationgeneralizingdpo}. The core innovation is a new class of regularizers that operate directly on the probability of the End-of-Sequence (EOS) token. Viewing generation as a sequential decision-making process \citep{rafailov2024r}, the EOS token acts as the transition to a ``terminal state'' in the MDP, making it a uniquely powerful point of intervention. This token-level control provides a unified solution: it directly counteracts the URSLA shortcut and transforms output length into a controllable parameter.


This principled control, achieved at training time, enables REFA to serve as a versatile method for managing the alignment-efficiency tradeoff. The ability to fine-tune a model with specific length constraints is crucial for a range of applications. For instance, systems requiring concise outputs, such as generating summaries \citep{stiennon2020learning} or social media posts, can be optimized for brevity. Conversely, for complex reasoning tasks where detailed explanations are paramount \citep{wei2022chain,zhang2024chain}, REFA can encourage more verbose and thorough responses. Most critically, for large-scale deployments, our framework allows practitioners to fine-tune models that adhere to strict token budgets, directly managing inference costs and latency without compromising alignment quality.

In this work, we introduce REFA, a framework that establishes this novel approach of token-level control. Theoretically, we formalize the URSLA shortcut as a fundamental limitation of coarse-grained preference optimization. Algorithmically, we pioneer a new class of EOS probability regularizers that provide fine-grained, bidirectional length control. Empirically, this new approach enables REFA to achieve state-of-the-art results in iterative, on-policy alignment, reaching a \textbf{60.29\%} win rate on AlpacaEval2 with Llama-3-8B-Instruct. Furthermore, we demonstrate REFA's practical utility through budgeted alignment, showcasing its ability to steer generation towards specific length targets, bridging the gap between alignment quality and deployment efficiency.



\vspace{0.05in}
We now outline our key contributions in this work:
\begin{enumerate}[left=0pt,itemsep=2pt]
    \item \textbf{Theoretical Insight: Formalizing the URSLA Shortcut.} We identify and formalize the \textit{Uncertainty Reduction with Sequence Length Assertion (URSLA)} shortcut, a critical failure mode inherent in reference-free alignment methods that use length normalization. We prove that under the URSLA principle, these models are incentivized to shorten negative responses as a trivial means of satisfying the loss, highlighting a fundamental limitation of purely sequence-level optimization.

    \item \textbf{Algorithmic Novelty: A New Paradigm of Token-Level Control.} We pioneer a new approach to preference optimization by intervening directly at the token level. We propose a new class of \textbf{EOS probability regularizers} that act on the model's termination decision, a previously unexploited control lever. This mechanism provides a principled solution to the URSLA shortcut and establishes a new paradigm for fine-grained control over generative properties.

    \item \textbf{A Practical Toolkit for Budgeted Alignment.} We demonstrate that our token-level intervention is not just a theoretical fix but also a versatile mechanism for managing the crucial alignment-efficiency tradeoff. We develop and validate a family of regularizers that provide practitioners with explicit, bidirectional control over response length, enabling the fine-tuning of models that adhere to specific inference-time token budgets.

    \item \textbf{State-of-the-Art Alignment Performance.} We show that by resolving these underlying instabilities, REFA achieves state-of-the-art performance. In challenging iterative and on-policy settings, REFA significantly outperforms strong baselines, achieving a \textbf{60.29\%} win rate and a \textbf{52.17\%} length-controlled win rate on AlpacaEval2 with Llama-3-8B-Instruct, validating the effectiveness of our approach.
\end{enumerate}

\section{Related Work}
\label{sec:related_work_main}

The alignment of Large Language Models (LLMs) has rapidly evolved from early Reinforcement Learning from Human Feedback (RLHF) techniques \citep{ouyang2022training} to simpler, more stable Direct Preference Optimization (DPO) methods \citep{rafailov2024direct}. Our work builds upon three key trends that have emerged from this paradigm shift.

\paragraph{Reference-Free and Multi-Preference Optimization.}
To improve simplicity and data efficiency, the field is moving beyond the reference-based, pairwise formulation of DPO. \textbf{Reference-free} methods like SimPO \citep{meng2024simpo} and ORPO \citep{hong2024orpo} eliminate the need for a fixed reference policy, optimizing the model's log-probabilities directly. Concurrently, \textbf{multi-preference} methods like InfoNCA \citep{chen2024noise} and MPO \citep{gupta2025multipreferenceoptimizationgeneralizingdpo} leverage modern datasets with multiple graded responses per query \citep{cui2023ultrafeedback}, enabling a richer, set-wise contrast that better approximates the true preference landscape. REFA operates at this intersection, proposing a reference-free, multi-preference objective.

\textbf{Reference-Free Alignment:}
A common component in many alignment techniques, including original DPO, is a fixed reference model representing the policy before fine-tuning. Managing this reference model adds complexity, leading to growing interest in \textit{reference-free} methods \citep{meng2024simpo, yuan2023rrhf, Zhao2023SLiCHFSL}. These approaches optimize the policy's likelihoods directly, often leveraging richer feedback signals such as scalar rewards available in datasets like UltraFeedback \citep{cui2023ultrafeedback}. This simplification can make the training pipeline more robust and efficient.

\paragraph{Length as a Failure Mode for Alignment.}
A critical challenge in preference optimization is the tendency for models to exploit response length as a spurious correlate for reward. While the initial problem was a bias towards verbosity \citep{gao2023scaling}, the standard solution—length normalization—introduces a subtle but severe second-order problem, which we term the URSLA shortcut. Prior works have sought to manage length through sequence-level mechanisms like explicit regularization \citep{Park2024DisentanglingLF} or by directly normalizing the objective \citep{meng2024simpo, ahrabian2024practical}. REFA contributes a new approach by arguing that robust control requires moving beyond the sequence level to intervene directly on the model's termination logic at the token level.

\paragraph{Positioning REFA.}
REFA synthesizes these advancements into a single, coherent framework. It is a reference-free, multi-preference alignment method designed specifically to address the fundamental challenge of length-based reward hacking. Unlike prior works that apply coarse, sequence-level corrections, REFA introduces a novel approach to fine-grained control via an End-of-Sequence (EOS) probability regularizer. By identifying and solving the subtle failure modes of length normalization, REFA not only achieves more robust alignment but also provides a practical framework for managing the inference-time budget, directly connecting the goals of preference alignment and computational efficiency. 

We provide full details of our related work in Section \ref{sec:related_work}.

\section{The Challenge: A Subtle Length-Based Shortcut in Reference-Free Alignment}
\label{sec:challenge_length_shortcut}

The paradigm of preference optimization for aligning Large Language Models (LLMs) has seen a significant shift from reference-based methods like Direct Preference Optimization (DPO) \citep{rafailov2024direct} towards simpler, more direct reference-free approaches \citep{meng2024simpo, hong2024orpo}. While eliminating the reference model reduces complexity and can improve performance, it exposes a new set of challenges related to the inherent statistical properties of language and the dynamics of the optimization process itself. This section first establishes the necessary notation and then identifies a critical, second-order problem that arises from the standard solution to length bias, motivating the need for a more sophisticated approach to length control.

\subsection{Notation and Preliminaries}
\label{sec:notation_preliminaries}

Let $\mathcal{D}$ be a preference dataset consisting of tuples $(x, \{y_i, r_i\}_{i=1}^K)$, where $x$ is a query, $\{y_i\}_{i=1}^K$ is a set of $K$ candidate responses, and $r_i \in \mathbb{R}$ is a scalar reward indicating the quality of response $y_i$. Our goal is to fine-tune a policy model $\pi_\theta$ to increase the likelihood of high-reward responses.

The mean reward for a given query is $\bar{r} = \frac{1}{K}\sum_{i=1}^K r_i$. We partition the responses into a \textit{positive set} $Y^+ = \{y_i \mid r_i > \bar{r}\}$ and a \textit{negative set} $Y^- = \{y_i \mid r_i \le \bar{r}\}$. A key quantity in reference-free optimization is the policy's log-probability of a response, $\log \pi_\theta(y|x)$. Since raw log-probabilities are biased towards shorter sequences (as they represent a sum over fewer negative log-probability terms), reference-free methods commonly employ the \textbf{length-normalized log-probability}:
\begin{equation}
\label{eq:len_norm_log_prob}
\overline{\log \pi_\theta}(y \mid x) = \frac{1}{|y|} \log \pi_\theta(y \mid x) = \frac{1}{|y|}\sum_{t=1}^{|y|}\log P_{\theta}(y_t \mid x, y_{<t}),
\end{equation}
where $|y|$ is the number of tokens in response $y$. This metric reflects the average per-token confidence of the model.

\subsection{The Reference-Free Dilemma: Length Normalization and the URSLA Shortcut}
\label{subsec:ursla_shortcut}

\begin{wrapfigure}{r}{0.4\textwidth}
    \vspace{-0.2in} 
    \centering
    \includegraphics[width=\linewidth]{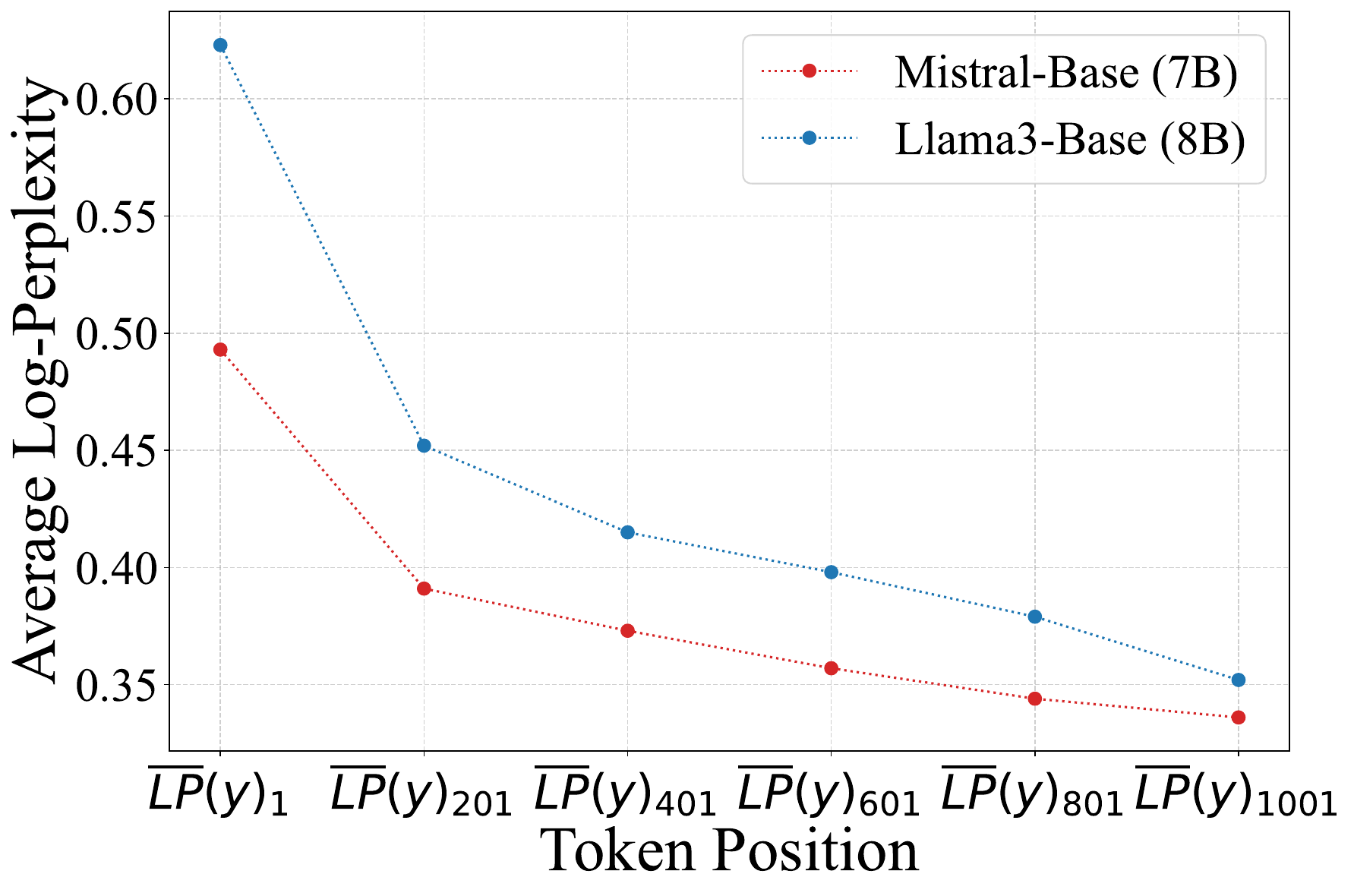}
    \caption{Empirical analysis supporting the URSLA conjecture (Conjecture~\ref{conj:ursla_main}) on Ultrafeedback. The plot shows a decreasing trend in average per-token negative log-probability (uncertainty) as sequence length increases for both Mistral-7B and Llama-3-8B base models.}
    \label{fig:URSLA-emp-analysis}
    \vspace{-0.2in} 
\end{wrapfigure}

Reference-free alignment objectives, such as that of SimPO \citep{meng2024simpo}, typically aim to maximize the margin between the scores of preferred and dispreferred responses. Using the length-normalized log-probability (Eq. \ref{eq:len_norm_log_prob}) as the score is a necessary first step to prevent the model from trivially favoring longer responses, which are often correlated with higher rewards in preference datasets \citep{dubois2024length}.

However, this very solution introduces a more subtle and problematic failure mode. The issue stems from a fundamental property of auto-regressive language generation: the relationship between sequence length and model uncertainty. We formalize this relationship in the following conjecture.

\begin{conjecture}[Uncertainty Reduction with Sequence Length Assertion (URSLA)]
\label{conj:ursla_main}
Let $\overline{LP}(y) = -\overline{\log \pi_\theta}(y \mid x)$ be the average per-token negative log-probability (a measure of model uncertainty or perplexity) for a response $y$. There exists a non-empty subset of coherent responses $\mathcal{Y}' \subseteq \mathcal{Y}$ such that for all $y \in \mathcal{Y}'$, increasing the length of $y$ reduces its average per-token uncertainty. Formally:
\begin{equation}
\frac{\partial \overline{LP}(y)}{\partial \operatorname{len}(y)} < 0 \quad \text{for all } y \in \mathcal{Y}'.
\end{equation}
\end{conjecture}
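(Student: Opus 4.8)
The plan is to reduce the stated derivative condition to a statement about \emph{marginal} versus \emph{cumulative} per-token surprisal, establish the latter through the monotonicity of conditional entropy for (locally) stationary sources, and close with an averaging argument that produces the non-empty subset $\mathcal{Y}'$. First I would write $\ell_t = -\log P_\theta(y_t \mid x, y_{<t})$ for the surprisal of the $t$-th token, so that the length-normalized quantity $\overline{LP}_n = \frac{1}{n}\sum_{t=1}^{n}\ell_t$ is simply a running average over the first $n$ tokens. Interpreting $\operatorname{len}(y)$ through the discrete act of appending one coherent token, the forward difference is
\begin{equation}
\overline{LP}_{n+1} - \overline{LP}_n = \frac{1}{n+1}\bigl(\ell_{n+1} - \overline{LP}_n\bigr),
\end{equation}
so the condition $\frac{\partial \overline{LP}(y)}{\partial \operatorname{len}(y)} < 0$ holds exactly when each newly generated token is \emph{less} surprising than the cumulative average, i.e. $\ell_{n+1} < \overline{LP}_n$. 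The entire conjecture thus rests on showing that, for a non-empty class of coherent continuations, marginal surprisal sits below the running mean.

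Next I would control this quantity in expectation. Taking expectations over the generation distribution, $\E[\ell_t \mid x] = H(Y_t \mid Y_{<t}, X)$ is the position-$t$ conditional entropy. For a stationary source this conditional entropy is non-increasing in $t$ by the standard chain-rule argument (Cover--Thomas), so the expected marginal surprisals $\bar\ell_t := \E[\ell_t]$ form a non-increasing sequence. Because the mean of a non-increasing sequence dominates its latest term, $\E[\overline{LP}_n] = \frac{1}{n}\sum_{t \le n}\bar\ell_t \ge \bar\ell_n \ge \bar\ell_{n+1}$, and substituting this into the forward-difference identity gives $\E[\overline{LP}_{n+1}] \le \E[\overline{LP}_n]$, with strict inequality wherever the conditional entropy strictly decreases. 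Hence the \emph{expected} running average is strictly decreasing in length.

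To pass from this distributional statement to the pointwise, non-empty claim, I would use a simple averaging argument: since the random variable $\Delta_n\overline{LP}(y) = \overline{LP}_{n+1} - \overline{LP}_n$ has strictly negative mean at some length $n$, the event $\{\Delta_n\overline{LP}(y) < 0\}$ must carry positive probability. Defining $\mathcal{Y}'$ to be the set of coherent responses on which the surprisal-below-average condition holds at the relevant lengths then yields a non-empty (indeed positive-measure) subset, which is precisely the existential assertion; the continuous derivative in the statement is recovered by smoothing $\overline{LP}$ over a length-indexed family of coherent prefixes.

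The main obstacle is the stationarity hypothesis underpinning the conditional-entropy monotonicity. Query-conditioned generation is generically non-stationary: boilerplate openings, formatting tokens, or topic drift can make early tokens anomalously predictable or late tokens anomalously surprising, so global monotonicity of $\bar\ell_t$ can fail. This is exactly why the conjecture is deliberately existential and restricted to a subset of \emph{coherent} responses rather than all of $\mathcal{Y}$: the rigorous content is that coherent natural-language continuations behave like an entropy-contracting, locally stationary source to which the monotonicity applies. Making this final reduction fully rigorous would require a generative model of ``coherence'' equipped with a provable entropy-contraction property; absent such a model, the role of Figure~\ref{fig:URSLA-emp-analysis} is to supply empirical verification that this entropy-contracting regime is non-empty and operative for the models of interest.
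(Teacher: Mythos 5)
The statement you are proving is labeled a \emph{conjecture} in the paper, and the paper does not prove it: its only support is the empirical trend in Figure~\ref{fig:URSLA-emp-analysis}, together with the caveat (in Appendix~\ref{sec:length_norm_eos}) that the claim may fail under domain shift, incoherence, or adversarial prompts. Your proposal therefore takes a genuinely different route --- it supplies a theoretical skeleton where the paper supplies none. The reduction via the forward-difference identity $\overline{LP}_{n+1} - \overline{LP}_n = \frac{1}{n+1}(\ell_{n+1} - \overline{LP}_n)$ is correct and clarifying: it shows the conjecture is equivalent to marginal surprisal falling below the running mean, which is a cleaner object to reason about than the informal derivative in the statement. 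The entropy-monotonicity step and the averaging argument that extracts a positive-measure $\mathcal{Y}'$ are also sound as far as they go, and you are candid that the stationarity hypothesis is the load-bearing, unproven assumption. What your approach buys is an explanation of \emph{why} the phenomenon should occur and a precise identification of the missing hypothesis; what the paper's approach buys is honesty about the epistemic status of the claim (it is asserted as a conjecture and checked empirically on the two models actually used).

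One additional gap you should flag beyond stationarity: the identity $\E[\ell_t \mid x] = H(Y_t \mid Y_{<t}, X)$ requires that the responses be sampled from $\pi_\theta$ itself. In the paper's setting the responses $y$ come from an external preference dataset (UltraFeedback) or from earlier checkpoints, so $\E[\ell_t]$ is a \emph{cross-entropy} between the data distribution and $\pi_\theta$, and the Cover--Thomas monotonicity of conditional entropy does not apply to cross-entropies even for stationary sources --- a model can become systematically more surprised as an off-policy response drifts away from its own preferred continuations. Also, your averaging argument yields a set of responses on which the forward difference is negative \emph{at some particular length} $n$, which is slightly weaker than the ``for all $y \in \mathcal{Y}'$'' derivative condition as stated; this is cosmetic given that the statement is existential in $\mathcal{Y}'$, but worth a sentence. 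Neither issue is fatal to a conjecture, but both belong alongside the stationarity caveat if the sketch is to be presented as the reason the conjecture is plausible rather than as a proof.
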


The URSLA conjecture, empirically supported by Figure \ref{fig:URSLA-emp-analysis}, posits that model uncertainty decreases for longer coherent sequences. This interaction with a length-normalized objective creates a critical vulnerability. Since preference optimization objectives seek to reduce the score $\overline{\log \pi_\theta}(y|x)$ for any negative response $y \in Y^-$, URSLA provides a trivial, non-semantic path to achieve this: by simply reducing the response's length.

This leads to an undesirable optimization shortcut: \textbf{the model can satisfy the loss by prematurely terminating rejected sequences, rather than learning the semantic reasons for their low quality.} This reward-hacking behavior undermines genuine alignment, as the model exploits a statistical artifact of length instead of improving its understanding of quality. Therefore, while length normalization is a prerequisite for reference-free alignment, it is insufficient on its own. It inadvertently creates a new failure mode that must be addressed with an explicit length control mechanism to ensure genuine alignment.

\section{Developing the REFA Framework}
\label{sec:developing_refa}

Having established the critical need for explicit length control in reference-free alignment, we now develop the \textbf{REFA} (Reference-Free Alignment) framework. Our approach begins by contextualizing REFA within the landscape of existing preference optimization objectives. We then systematically construct a loss function that not only leverages the strengths of prior work but also possesses a theoretically superior alignment objective.

\subsection{Technical Preliminaries: Key Preference Optimization Objectives}
\label{subsec:technical_preliminaries}

The design of REFA builds upon several key ideas in modern preference optimization. We briefly present the objectives for SimPO, InfoNCA, and MPO, as their components and limitations directly motivate our approach, using the notation established in Section \ref{sec:notation_preliminaries}.

\paragraph{Simple Preference Optimization (SimPO).}
SimPO \citep{meng2024simpo} is a prominent reference-free method that operates on preference pairs $(y_w, y_l)$. Its key innovation is using the length-normalized log-probability as an implicit reward signal. The objective maximizes the margin between the scores of the winning and losing responses:
\begin{equation}
\label{eq:simpo_loss}
L_{\text{SimPO}}(\theta) = -\mathbb{E}_{(x, y_w, y_l) \sim \mathcal{D}} \left[ \log \sigma \left( \beta \overline{\log \pi_{\theta}}(y_w|x) - \beta \overline{\log \pi_{\theta}}(y_l|x) - \gamma \right) \right],
\end{equation}
where $\sigma$ is the sigmoid function and $\gamma$ is a margin hyperparameter. While effective and reference-free, SimPO's primary limitation is its restriction to pairwise comparisons.

\paragraph{InfoNCA.}
InfoNCA \citep{chen2024noise} addresses the multi-preference setting by leveraging all $K>2$ responses. It is a reference-based method that minimizes the cross-entropy between a reward-derived target distribution ($p_i^{\text{target}} \propto e^{r_i}$) and a model distribution derived from the policy-reference ratio ($p_i^{\text{model}} \propto \pi_\theta(y_i|x)/\pi_{\text{ref}}(y_i|x)$). Its loss is:
\begin{equation}
\label{eq:infonca_loss}
L_{\text{InfoNCA}}(\theta; \pi_{\text{ref}}) = -\mathbb{E}_{x \sim \mathcal{D}} \left[ \sum_{i=1}^K p_i^{\text{target}} \log p_i^{\text{model}} \right].
\end{equation}
InfoNCA's strength is its ability to use all response data, but it requires a reference model and aligns to an explicit target distribution, which, as we will show, can be a suboptimal objective for alignment.

\paragraph{Multi-Preference Optimization (MPO).}
MPO \citep{gupta2025multipreferenceoptimizationgeneralizingdpo} also utilizes $K>2$ responses but employs a group-contrastive objective. It contrasts the set of positive responses $Y^+$ against all responses $Y$. MPO introduces deviation-based weighting, $w_y = e^{\alpha(r_y - \bar{r})}$, to emphasize more informative examples. Using a reference-based score $s_{\theta}(y|x) = \log(\pi_\theta(y|x)/\pi_{\text{ref}}(y|x))$, its loss is:
\begin{equation}
\label{eq:mpo_loss}
L_{\text{MPO}}(\theta; \pi_{\text{ref}}) = -\mathbb{E}_{x \sim \mathcal{D}} \left[ \log \frac{\sum_{y \in Y^+} w_y \exp(\beta s_{\theta}(y|x))}{\sum_{y \in Y} w_y \exp(\beta s_{\theta}(y|x))} \right].
\end{equation}
MPO's core contributions are its group-contrastive formulation and deviation weighting, though its original formulation retains a reference model. REFA aims to synthesize the strengths of these approaches within a single, coherent, reference-free framework that also solves the URSLA shortcut.

\subsection{REFA Variants: From Distribution Matching to Contrastive Alignment}
\label{subsec:refa_variants}

We develop the REFA framework by first analyzing a reference-free adaptation of InfoNCA and then proposing our primary method, REFA-Dynamic, which employs a more powerful contrastive objective.

\paragraph{REFA-InfoNCA.} As a baseline for reference-free multi-preference alignment, we first consider a version of InfoNCA without a reference model, which we term \textbf{REFA-InfoNCA}. It optimizes a cross-entropy objective to align the model's output distribution ($p_i^{\text{model}} \propto \pi_\theta(y_i|x)$) with a target distribution derived from rewards. While simple, its stationary point occurs when $p^{\text{model}} = p^{\text{target}}$. This can be suboptimal for alignment, as it may require the model to continue assigning non-trivial probability to undesirable responses. A detailed analysis is provided in Appendix~\ref{sec:analysis_infonca_no_ref}.

\paragraph{REFA-Dynamic (Our Primary Method).} To create a stronger alignment signal, we introduce \textbf{REFA-Dynamic}. This method uses a group-contrastive loss that generalizes the Bradley-Terry model to sets of responses in a reference-free setting. Let the base score for a response be its scaled, length-normalized log-probability, $s(y) = \beta \cdot \overline{\log \pi_\theta}(y|x)$, where $\beta$ is an inverse temperature. The REFA-Dynamic loss is:
\begin{equation}
\label{eq:refa_dynamic_unweighted}
L_{\text{REFA-Dynamic}}(\theta) = -\log \frac{\sum_{y \in Y^+} e^{s(y)}} {\sum_{y \in Y^+} e^{s(y)} + \gamma \sum_{y \in Y^-} e^{s(y)}}.
\end{equation}
Here, $\gamma > 0$ adjusts the penalty for the negative set $Y^-$. This formulation directly addresses key requirements for robust alignment:
\begin{itemize}[left=0pt,itemsep=2pt]
    \item \textbf{Suppression of Negative Responses:} By placing responses from $Y^-$ only in the denominator, the objective directly incentivizes reducing their assigned probabilities.
    \item \textbf{Collective Uplift of Positive Responses:} All positive responses in $Y^+$ are upweighted in the numerator, reducing the internal competition that arises from purely pairwise or cross-entropy objectives.
\end{itemize}
Crucially, this contrastive objective possesses a more desirable stationary point. As shown in Appendix~\ref{sec:length_control_refabase}, the loss is minimized as the probabilities of all rejected responses approach zero ($P_\theta(y|x) \to 0$ for all $y \in Y^-$), offering a theoretically superior objective compared to distribution matching.

\paragraph{W-REFA (Weighted REFA-Dynamic).} For off-policy settings with reliable, fine-grained reward scores, the binary partition can be enhanced. The \textbf{W-REFA} variant incorporates deviation-based weighting. The score for each response is additively adjusted: $s_{\text{wtd}}(y) = s(y) +  \alpha (r_y - \bar{r})$, where $\alpha$ is a hyperparameter. This variant is particularly effective in our off-policy experiments, while the unweighted REFA-Dynamic shows strong performance in on-policy settings where reward model noise is a greater concern. The full formulation is detailed in the appendix.
\vspace{-0.1in}
\section{Fine-Grained Length Control via EOS Regularization}
\label{sec:length_control_eos}

The REFA framework developed in the previous section provides a robust, multi-preference objective for reference-free alignment. However, as established in Section \ref{sec:challenge_length_shortcut}, the length-normalized scores it relies on are still vulnerable to the URSLA shortcut. In this section, we introduce our primary algorithmic novelty: a family of End-of-Sequence (EOS) probability regularizers designed to counteract this failure mode and provide a versatile mechanism for fine-grained length control.

\subsection{Counteracting the URSLA Shortcut with Targeted Regularization}
\label{subsec:targeted_regularization}

The core problem identified is that models can learn to shorten negative responses to satisfy the loss. To prevent this, we introduce a regularizer that penalizes premature termination.

    Our primary mechanism, the \textbf{Targeted Regularizer}, applies a penalty to the EOS probability of a response, where the penalty's magnitude is proportional to how much shorter the response is than a given target length. In our experiments, we dynamically \textbf{set this target to the maximum response length} within each query, which effectively encourages the model to avoid truncating any response relative to its peers. The regularizer is formulated as:
\begin{equation}
\label{eq:targeted_regularizer}
\mathcal{R}_{\text{target}}(\theta) = \sum_{y \in Y^+ \cup Y^-} \lambda P_\theta(\text{EOS at } |y|) \cdot \max(0, \text{TARGET\_LENGTH} - |y|),
\end{equation}
where $\lambda > 0$ is a hyperparameter scaling the strength of the regularization. By applying this penalty, we disincentivize the model from placing the EOS token early for responses in $Y^-$, directly counteracting the URSLA shortcut, which is especially apparent when we have a \textit{high variance in the response length}, for instance in off-policy responses.

\paragraph{The Final REFA Loss.} By integrating this regularizer with the REFA-Dynamic objective from Eq. \ref{eq:refa_dynamic_unweighted}, we arrive at the final loss function for our primary method:
\begin{equation}
\label{eq:refa_final_loss}
L_{\text{REFA}}(\theta) = -\log \frac{\sum_{y \in Y^+} e^{s(y)}} {\sum_{y \in Y^+} e^{s(y)} + \gamma \sum_{y \in Y^-} e^{s(y)}} + \mathcal{R}_{\text{target}}(\theta).
\end{equation}
This complete objective synergistically combines a theoretically sound multi-preference contrast with a targeted mechanism for robust length control, ensuring that alignment is achieved through genuine quality improvements. 
While this dynamic target $\mathcal{R}_{\text{target}}(\theta)$ is adaptive to peer responses, we note that more adaptive strategies per domain, or per context for setting the target length could be explored in future work. The full training procedure is detailed in Appendix \ref{app:refa_algorithm}. 

\subsection{A Set of Tools for Bidirectional and Budgeted Length Control}
\label{subsec:toolkit_length_control}

Beyond solving the URSLA shortcut by encouraging longer responses, the principle of EOS regularization provides a versatile method for controlling output length in either direction to meet specific application needs. We introduce two additional regularizers for such scenarios.

\paragraph{Budget-Independent Regularizer (for Decreasing Length).}
For applications where conciseness is desired (e.g., summarization, tweet generation), the \textbf{Budget-Independent Regularizer} can be employed. It is formulated as a penalty on the negative log-probability of the EOS token:
\begin{equation}
\label{eq:budget_independent_regularizer}
\mathcal{R}_{\text{indep}}(\theta) = -\sum_{y \in Y^+ \cup Y^-} \lambda \log P_\theta(\text{EOS at position } |y|).
\end{equation}
By minimizing this term, the model is encouraged to assign a \textit{higher} probability to the EOS token at its naturally occurring position, promoting more confident and timely termination, which typically leads to shorter, more concise responses.

\paragraph{Budgeted Regularizer (for Strict Budget Adherence).}
For scenarios with strict token budgets due to economic costs or platform constraints, the \textbf{Budgeted Regularizer} offers the most fine-grained control. It is designed to penalize premature termination before a specified budget $b$ while incentivizing termination at or after the budget. The formulation creates a "push-pull" dynamic around the budget:
\begin{equation}
\label{eq:budgeted_regularizer}
\mathcal{R}_{\text{budgeted}}(\theta) = \lambda \left( \frac{1}{b}\sum_{t=1}^{b-1} P_\theta(\text{EOS at token } t) - \frac{1}{|y| - b}\sum_{t=b+1}^{|y|} P_\theta(\text{EOS at token } t) \right).
\end{equation}
The first term penalizes the average EOS probability mass \textit{before} the budget, discouraging early stops. The second term, with its negative sign, effectively rewards EOS probability mass \textit{after} the budget. This allows practitioners to precisely steer the model's output length to conform to a target budget, directly managing the accuracy-efficiency tradeoff. The effectiveness of these regularizers is demonstrated empirically in Appendix \ref{sec:additional_experiments}.
\vspace{-0.15in}
\section{Experiments}
\subsection{Experimental Setup}
\label{sec:experimental setup}
We benchmark \textbf{\refa} on Mistral-7B and Llama-3-8B across three distinct training paradigms: offline, on-policy, and iterative. Appendix provides a full description of our baselines and experimental configurations.

\vspace{-0.15in}

\paragraph{Offline Training.}
we follow the pipeline established by Zephyr \citep{tunstall2023zephyr}. We first create SFT models by training the base models on UltraChat-200k \citep{cui2023ultrafeedback}. These SFT checkpoints are then used to initialize \textbf{$\refa$} for preference optimization on the static UltraFeedback dataset \citep{cui2023ultrafeedback}.

\vspace{-0.15in}

\paragraph{On-policy and Iterative Training.}
Our on-policy experiments begin with instruction-tuned models Llama-3-8B-Instruct and Mistral-7B-Instruct. To generate preference data aligned with the policy, we synthesize responses for each UltraFeedback prompt (5 candidates, temp=1.0), similar to \citet{meng2024simpo}. These candidates are scored by the Skywork-Reward-Llama-3.1-8B model \cite{liu2024skywork}, from which we select the top-2 and bottom-2 responses to form the preferred ($Y^+$) and rejected ($Y^-$) sets for \textbf{$\refa$}.

This on-policy data generation is also integrated into iterative framework inspired by SPPO \citep{wu2024self}. The UltraFeedback prompt set is partitioned, and at each round, the current model checkpoint generates fresh responses to be scored and selected. This newly synthesized data is then used to update the model with \textbf{$\refa$} before the next iteration. More details are provided in Appendix ~\ref{sec:additional_experimental_baselines}

\subsection{Experimental Results}

\begin{table*}[t]
\centering
\label{tab:results}
\resizebox{\textwidth}{!}{
\begin{tabular}{@{}lcccccccc@{}}
\toprule
\multirow{4}{*}{\textbf{Method}} & \multicolumn{4}{c}{\textbf{Mistral-Base (7B)}} & \multicolumn{4}{c}{\textbf{Llama-3-Base (8B)}} \\ \cmidrule(lr){2-5} \cmidrule(lr){6-9}
 & \multicolumn{2}{c}{\textbf{AlpacaEval 2}} & \multicolumn{1}{c}{\textbf{Arena-Hard}} & \multicolumn{1}{c}{\textbf{MT-Bench}} & \multicolumn{2}{c}{\textbf{AlpacaEval 2}} & \multicolumn{1}{c}{\textbf{Arena-Hard}} & \multicolumn{1}{c}{\textbf{MT-Bench}} \\ \cmidrule(lr){2-3} \cmidrule(lr){4-4} \cmidrule(lr){5-5} \cmidrule(lr){6-7} \cmidrule(lr){8-8} \cmidrule(lr){9-9}
 & \textbf{LC (\%)} & \textbf{WR (\%)} & \textbf{WR (\%)} & \textbf{GPT-4} & \textbf{LC (\%)} & \textbf{WR (\%)} & \textbf{WR (\%)} & \textbf{GPT-4} \\ \midrule
SFT\footnotemark[1] & 8.4 & 6.2 & 1.3 & 6.3 & 6.2 & 4.6 & 3.3 & 6.6 \\
RRHF\footnotemark[1] & 11.6 & 10.2 & 5.8 & 6.7 & 12.1 & 10.1 & 6.3 & 7.0 \\
SLiC-HF\footnotemark[1] & 10.9 & 8.9 & \underline{7.3} & \textbf{7.4} & 12.3 & 13.7 & 6.0 & 7.6 \\
DPO\footnotemark[1] & 15.1 & 12.5 & 10.4 & \underline{7.3} & 18.2 & 15.5 & 15.9 & \underline{7.7} \\
IPO\footnotemark[1] & 10.8 & 9.0 & 7.5 & 7.0 & 14.4 & 14.2 & 17.8 & 7.4 \\
CPO\footnotemark[1] & 9.8 & 8.9 & 6.7 & 6.9 & 10.8 & 8.1 & 5.8 & 7.4 \\
KTO\footnotemark[1] & 13.1 & 9.1 & 5.6 & 6.8 & 14.2 & 12.4 & 12.5 & \textbf{7.8} \\
ORPO\footnotemark[1] & 14.7 & 12.2 & 7.0 & 7.0 & 12.2 & 10.6 & 10.8 & 7.6 \\
R-DPO\footnotemark[1] & 17.4 & 12.8 & 10.5 & 7.0 & 17.6 & 14.4 & 17.2 & 7.5 \\
MPO\footnotemark[1] & 20.3 & 14.9 & 12.8 & \underline{7.3} & 20.1 & 15.6 & 18.5 & \textbf{7.8} \\
SimPO\footnotemark[1] & 21.5 & 20.8 & 16.6 & \underline{7.3} & 22.0 & 20.3 & 23.4 & \underline{7.7} \\
\midrule
\refa-InfoNCA & 20.4 & 19.1 & 14.2 & 7.1 & 22.8 & 17.1 & 20.2 & 7.5 \\
\refa-1-vs-k & 20.4 & 19.1 & 14.6 & 7.1 & 25.5 & 21.0 & \textbf{26.5} & 7.5 \\
\refa-dynamic & 22.8 & 20.4 & 15.7 & \underline{7.3} & 25.6 & 23.1 & \underline{25.4} & \textbf{7.8} \\
W-\refa & \textbf{24.6} & \textbf{23.0} & \textbf{16.6} & \textbf{7.4} & \textbf{26.6} & \textbf{24.2} & 23.7 & \textbf{7.8} \\
\bottomrule
\end{tabular}
}
\caption{Comparison of various preference optimization under off-policy setting for baselines on AlpacaEval, Arena-Hard and MT-Bench benchmarks. LC-WR represents length-controlled win rate, and WR represents raw win rate. Best results are in \textbf{bold}, second-best are \underline{underlined}. $\refa$ achieves SOTA performance across all metrics}
\label{tab:refa-Results-comparison}
\end{table*}

\begin{table*}[!tbph]
\centering
\resizebox{\textwidth}{!}{
\begin{tabular}{@{}lcccccccc@{}}
\toprule
\multirow{2}{*}{\textbf{Method}} & \multicolumn{4}{c}{\textbf{Mistral-Instruct (7B)}} & \multicolumn{4}{c}{\textbf{Llama-3-Instruct (8B)}} \\ 
\cmidrule(lr){2-5} \cmidrule(lr){6-9}
 & LC (\%) & WR (\%) & Arena-Hard & MT-Bench & LC (\%) & WR (\%) & Arena-Hard & MT-Bench \\ 
\midrule
SPPO\footnotemark[1] (Iter 1) & 24.79 & 23.51 & 18.7 & 7.21 & 31.73 & 31.74 & 34.8 & 8.1 \\
$\refa$ (Iter 1) & \textbf{27.58} & \textbf{28.79} & \textbf{25.3} & \textbf{7.56} & \textbf{37.42} & \textbf{38.72} & \textbf{34.10} & \textbf{7.85} \\
\midrule
SPPO\footnotemark[1] (Iter 2) & 26.89 & 27.62 & 20.4 & 7.49 & 35.15 & 35.98 & 38.0 & \textbf{8.2} \\
$\refa$ (Iter 2) & \textbf{31.57} & \textbf{36.85} & \textbf{26.8} & \textbf{7.7} & \textbf{47.88} & \textbf{50.83} & \textbf{42.20} & 8.12 \\
\midrule
SPPO\footnotemark[1] (Iter 3) & 28.53 & 31.02 & 23.3 & 7.59 & 38.78 & 39.85 & 40.1 & \textbf{8.2} \\
$\refa$ (Iter 3) & \textbf{34.87} & \textbf{40.85} & \textbf{23.8} & \textbf{7.62} & \textbf{52.17} & \textbf{60.29} & \textbf{52.70} & 8.16 \\
\bottomrule
\end{tabular}
}
\caption{Results on Mistral-Instruct and Llama-Instruct models in the iterative on-policy setting with Ultrafeedback prompts.\vspace{-0.23in}}
\label{tab:iterative}
\end{table*}
\footnotetext[1]{These are taken directly from the paper \simpo \citep{meng2024simpo}, MPO \citep{gupta2025multipreferenceoptimizationgeneralizingdpo}, SPPO \citep{wu2024self}}

\vspace{0.075in}
\paragraph{\refa-dynamic outperforms existing preference baselines:} We conducted an extensive comparison of the \refa-Dynamic framework against various preference optimization methods using the Mistral-7B and Llama3-8b for offline setting , Mistral-7B-Instruct and Llama3-8b-Instruct for online and iterative settings, as summarized in Table \ref{tab:refa-Results-comparison}. The results demonstrate that \refa-Dynamic consistently outperforms all baselines across the primary evaluation metrics. \textbf{Key Insights: } This highlight the efficacy of REFA-Dynamic in enhancing response quality, reinforcing its state-of-the-art performance in preference optimization.

\begin{table*}[!h]
\centering
\resizebox{\textwidth}{!}{%
\begin{tabular}{@{}lcccccccc@{}}
\toprule
\multirow{4}{*}{\textbf{Method}} & \multicolumn{4}{c}{\textbf{Mistral-Instruct (7B)}} & \multicolumn{4}{c}{\textbf{Llama-Instruct (8B)}} \\ 
\cmidrule(lr){2-5} \cmidrule(lr){6-9}
 & \multicolumn{2}{c}{\textbf{AlpacaEval 2}} & \textbf{Arena-Hard} & \textbf{MT-Bench} & \multicolumn{2}{c}{\textbf{AlpacaEval 2}} & \textbf{Arena-Hard} & \textbf{MT-Bench} \\ 
\cmidrule(lr){2-3} \cmidrule(lr){4-4} \cmidrule(lr){5-5} \cmidrule(lr){6-7} \cmidrule(lr){8-8} \cmidrule(lr){9-9}
 & \textbf{LC (\%)} & \textbf{WR (\%)} & \textbf{WR (\%)} & \textbf{GPT-4} & \textbf{LC (\%)} & \textbf{WR (\%)} & \textbf{WR (\%)} & \textbf{GPT-4} \\ 
\midrule
SFT & 17.1 & 14.7 & 12.6 & 7.5 & 26.0 & 25.3 & 22.3 & 8.1 \\
DPO\footnotemark[1] & 26.8 & 24.9 & 16.3 & 7.6 & 40.3 & 37.9 & 32.6 & 8.0 \\
IPO\footnotemark[1] & 20.3 & 20.3 & 16.2 & \textbf{7.8} & 35.6 & 35.6 & 30.5 & \textbf{8.3} \\
KTO\footnotemark[1] & 24.5 & 23.6 & 17.9 & 7.7 & 33.1 & 31.8 & 26.4 & 8.2 \\
ORPO\footnotemark[1] & 24.5 & 24.9 & 20.8 & 7.7 & 28.5 & 27.4 & 25.8 & 8.0 \\
R-DPO\footnotemark[1] & 27.3 & 24.5 & 16.1 & 7.5 & 41.1 & 37.8 & 33.1 & 8.0 \\
SimPO\footnotemark[1] & 32.1 & 34.8 & 21.0 & 7.6 & 44.7 & 40.5 & 33.8 & 8.0 \\
\midrule
REFA-InfoNCA & 30.2 & 32.1 & 22.7 & 7.6 & 42.8 & 43.5 & 38.9 & 8.0 \\
REFA-1vsk & \textbf{33.1} & \textbf{36.3} & \textbf{26.5} & 7.7 & 49.6 & 50.2 & 43.2 & 8.1 \\
REFA-Dynamic & 32.5 & 34.6 & 25.9 & 7.6 & \textbf{49.7} & \textbf{50.7} & \textbf{44.7} & 8.1 \\
\bottomrule
\end{tabular}%
}
\caption{Comparison of different alignment methods in an \textit{on-policy setting} across multiple evaluation benchmarks on Mistral-Instruct and Llama-Instruct models.}
\label{tab:alignment-results-rebuttal}
\end{table*}

\vspace{-0.1in}
\paragraph{Effect of EOS Probability Regularization on \refa-dynamic:} We analyzed the effect of the EOS probability regularization term $\lambda$ on the performance of REFA-dynamic on AlpacaEval2 for Mistral-7B-Base in offline-setting with different values of $\lambda$, as shown in Fig. \ref{fig:length_control_variation}. The results reveal that incorporating $\lambda$ significantly improves model performance. As $\lambda$ increases, we observe a notable rise in the Average Response Length, suggesting that higher regularization encourages longer responses. 

\textbf{Key findings: } Importance of careful tuning of $\lambda$ to balance verbosity and response quality, with moderate values (e.g., $\lambda = 5\text{e-}5$) yielding the best overall trade-off across all metrics. Our theoretical analysis in (Appendix~\ref{subsec:justification_eos_reg})  supports this by demonstrating that regularizing the EOS probability stabilizes sequence generation, reducing overly brief responses.

\begin{figure*}[!th]
    \centering
    \includegraphics[width=1.0\textwidth]{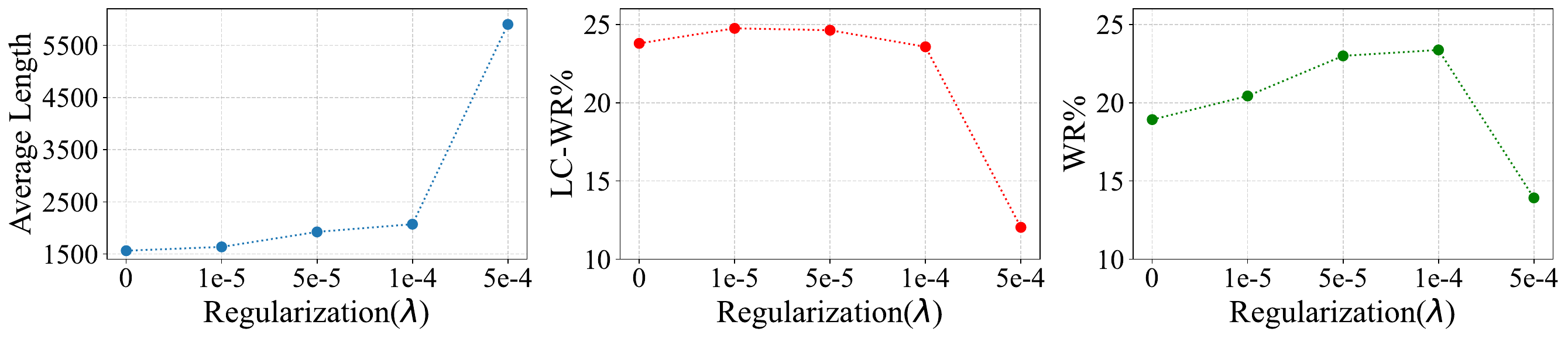}
    \vspace{-0.15in}
    \caption{Impact of $\lambda$ on Mistral-Base (7B) Performance: a) Average Response Length, b) Length-Controlled Win Rate (LC-WR), and c) Win Rate (WR).}
    \label{fig:length_control_variation}
\end{figure*}

\vspace{-0.1in}
\paragraph{Effect of Regularization on Token Efficiency in \refa}
\begin{wrapfigure}{r}{0.5\textwidth}
    \centering
    \vspace{-0.22in}
    \includegraphics[width=1.0\linewidth]{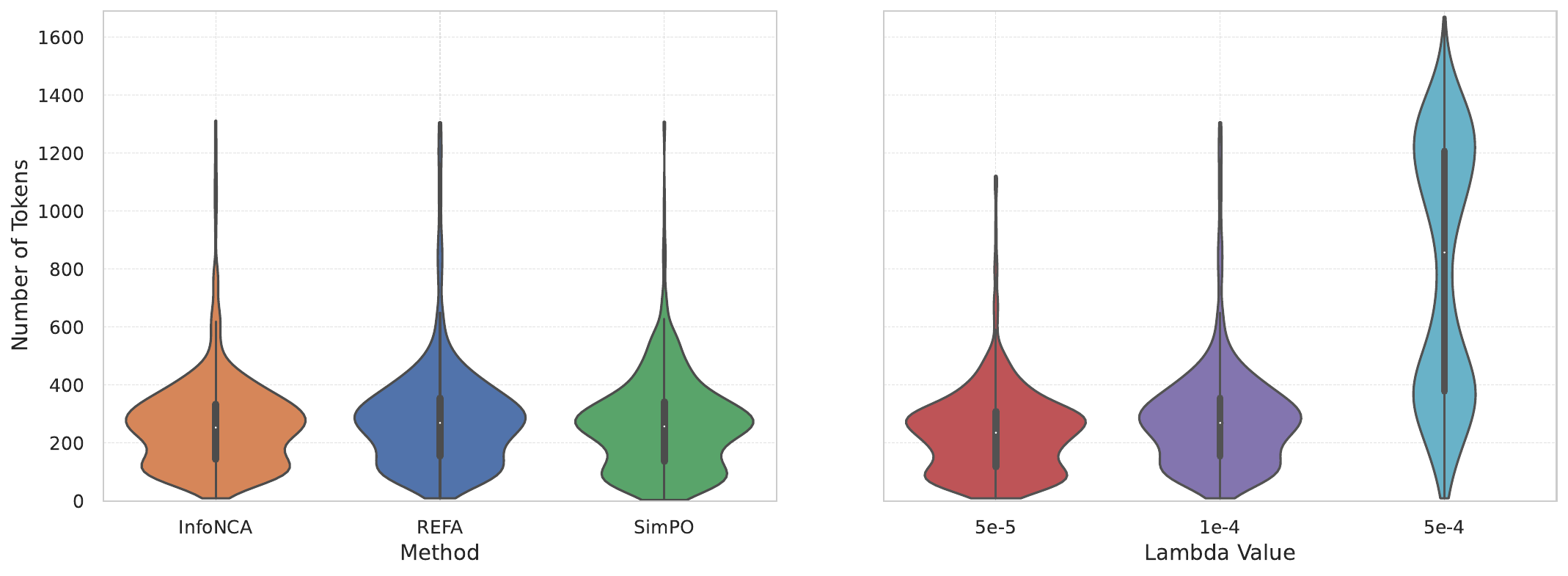}
    \vspace{-0.22in}
    \caption{Analysis of token distribution in \refa: (a) across alignment methods (InfoNCA, \refa, and \simpo). (b) across regularization coefficient $\lambda$ for the \refa\ method.}
    \label{fig:refa_token_distribution}
\end{wrapfigure}

Figure~\ref{fig:refa_token_distribution} shows REFA's effectiveness in controlling response length. Subfigure (a) indicates REFA yields a more balanced length distribution compared to the potential verbosity of InfoNCA or terseness of SimPO. Subfigure (b) confirms that increasing REFA's regularization coefficient $\lambda$ systematically increases average output length, demonstrating $\lambda$ provides fine-grained control over verbosity.

\clearpage

\section{Budgeted Length Control}

To rigorously evaluate the tradeoff between token efficiency and accuracy, we conducted two key ablation studies focusing on using End-of-Sequence (EOS) token regularization for token reduction. These experiments systematically isolate the impact of different regularization strategies on model performance and output length.

We conducted below experiments on subset of UltraFeedback dataset in online-setting for Llama3-8b-instruct and used AlpacaEval as the downstream evaluation dataset. More extensive experiments are being provided in Appendix $\ref{sec:additional_experiments}$

\vspace{-0.1in}
\subsection{Budget-Independent Regularization}
\vspace{-0.075in}
\label{subsec:ablation_budget_independent}
This experiment investigates the impact of budget-independent regularizer $\mathcal{R}_{\text{indep}}(\theta)$ strength, controlled by the hyperparameter $\lambda$. This regularization, $\mathcal{R}_{\text{indep}}(\theta) = - \sum \lambda \log P_{\theta}(\text{EOS at position } |y|)$, is designed to increase the log-probability of the EOS token. By varying $\lambda$, we evaluated the resulting tradeoff between verbosity and quality. The results demonstrate effective control over response length; as $\lambda$ increases, the average token count decreases, and the entire length distribution shifts towards shorter and precise outputs (Figure~\ref{fig:main_budget_indep_perf}). Critically, this ablation reveals a non-monotonic relationship between regularization strength and performance.

\vspace{-0.1in}
\begin{figure}[!tbph]
    \centering
    \includegraphics[width=0.9\linewidth]{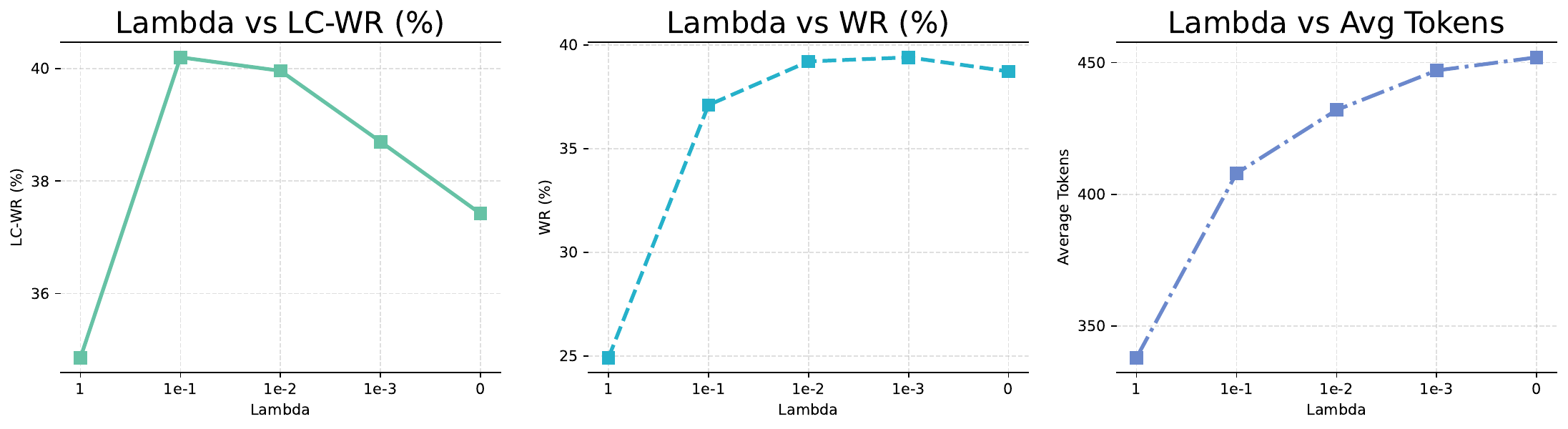} 
    \vspace{-0.1in}
    \caption{Performance (LC-WR, WR) and Average Tokens vs. $\lambda$ for the budget-independent EOS regularizer on AlpacaEval2. Note the peak LC-WR at $\lambda \approx 0.1$.}
    \label{fig:main_budget_indep_perf}
\end{figure}

\vspace{-0.19in}
\subsection{Budgeted Regularization}
\label{subsec:ablation_budgeted}
\vspace{-0.075in}
To evaluate the efficacy of precise, targeted length control, we investigates our novel ``budgeted'' EOS regularizer $\mathcal{R}_{\text{budgeted}}(\theta)$. This experiment demonstrates the regularization strength $\lambda$ across several pre-defined token budgets $b \in \{128, 256, 384, 512\}$. The formulation in eq.\ref{eq:budgeted_regularizer} explicitly penalizes premature termination (EOS before budget~$b$) and incentivizes termination at or after it.

\vspace{-0.1in}
\begin{figure}[!tbph]
    \centering
    \includegraphics[width=0.9\linewidth]{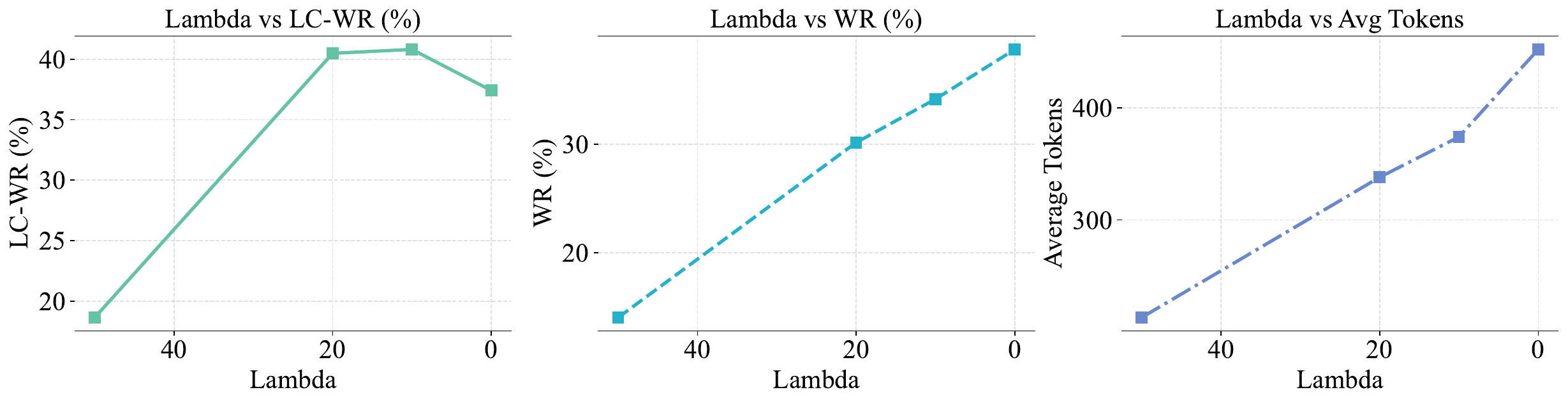}
    \vspace{-0.2in}
    \caption{Performance (LC-WR, WR) and Average Tokens vs. $\lambda$ for the budgeted EOS regularizer with a target budget of $b=256$ on AlpacaEval2.\vspace{-0.2in}}
    \label{fig:budget256_scores_main}
\end{figure}

\vspace{0.075in}
This approach demonstrates highly precise and effective length control. As $\lambda$ increases, the average token count of generated responses converges directly towards the target budget $b$ shown in figure~\ref{fig:budget256_scores_main}. A consistent accuracy-efficiency tradeoff is observed across all budgets: optimal LC-WR is achieved at a moderate $\lambda$, while overly strict enforcement degrades alignment quality. 


\bibliography{References}
\bibliographystyle{colm2025_conference}

\appendix
\onecolumn
\vspace{1cm}
\hrule
\par\vspace{0.5cm}
{\Large\bfseries\centering \textsc 
{Supplementary Materials}
\par\vspace{0.5cm}}
\hrule
\vspace{0.5cm}
\noindent These supplementary materials provide additional details, derivations, and experimental results for our paper. The appendix is organized as follows:
\begin{itemize}[leftmargin=1em]

    \item Section \ref{sec:additional_experiments} provides additional experiments for fine-grained length control
    
    \item Section \ref{sec:related_work} provides a detailed overview of related work pertaining to this paper.

    \item Section \ref{app:refa_algorithm} provides a look on the actual Algorithm used for REFA. This is the primary method studied in Section \ref{sec:experimental setup}

    \item Section \ref{app:theory_walkthrough} provides a narrative roadmap to the theoretical results that underpin the REFA framework.
    
    \item Section \ref{sec:analysis_infonca} analyzes the InfoNCA loss function, deriving its gradients, characterizing its stationary points, and examining the role of the reference model.

    \item Section \ref{sec:analysis_infonca_no_ref} provides additional analysis of the InfoNCA loss in a reference-free setting, highlighting issues like promoting low-quality responses and generating contradictory optimization signals.

    \item Section \ref{sec:weaknesses_infonca} details how addressing identified weaknesses in the reference-free InfoNCA formulation leads to a refined, reference-free multi-preference loss function, paving the way for the proposed \refa-base variants.

    \item Section \ref{sec:proposed_refa_base} introduces the \refa-base loss functions without length normalization, explaining how we incorporate multiple preferences and reward-based weighting to overcome prior limitations.

    \item Section \ref{sec:length_control_refabase} focuses on length normalization in \refa, demonstrating how normalizing per-token probabilities prevents trivial short-response biases and encourages genuinely improved responses.

    \item Section \ref{sec:length_norm_eos} explores the nuances in the relationship between average per-token uncertainty and sequence length, and justifies the addition of an EOS-probability regularizer to counter subtle length-related incentives.

    \item Section \ref{sec:additional_experimental_baselines} More detailed discussion about the Baselines and Evaluation Benchmarks.

    \item Section \ref{sec:additional_ablation_details} It provides addition ablations done for $\refa$ (Downstream analysis w.r.t $\lambda$, Error-Analysis and statistical-significance, Computational Efficiency)

    \item Section \ref{sec:qualitative comparison} It provides qualitative comparison of responses generated from $\refa$ and $\simpo$
    
\end{itemize}

\vspace{0.5cm}

\section{Additional Fine-Grained Length Control Experiments}
\label{sec:additional_experiments}
\vspace{-0.2in}
\begin{figure}[!tbph]
    \centering
    \includegraphics[width=0.8\linewidth]{images/score_analysis_with_lamda.pdf} 
    \vspace{-0.1in}
    \caption{Performance (LC-WR, WR) and Average Tokens vs. $\lambda$ for the budget-independent EOS regularizer on AlpacaEval2. Note the peak LC-WR at $\lambda \approx 0.1$.}
    \label{fig:appendix_budget_indep_perf}
\end{figure}
\vspace{-0.2in}
\subsection{Impact of Regularization on Performance and Length}
Figure \ref{fig:appendix_budget_indep_perf} demonstrates the relationship between the regularization strength $\lambda$, average response length, and alignment performance (LC-WR and WR). The results show a clear and predictable relationship: as $\lambda$ increases from 0, average length steadily decreases. This controlled increase in length initially leads to better alignment, with performance peaking at a moderate $\lambda \approx 0.1$. For $\lambda > 0.1$, performance declines, suggesting excessive regularization leads to overly verbose and suboptimal outputs.

\subsection{Impact of Regularization on Length Distribution}
The effect of the regularizer on the overall distribution of response lengths is visualized in Figure \ref{fig:token_hist_lambda_appendix}. With no regularization ($\lambda=0$), the model produces a distribution of lengths centered around a lower mean. As $\lambda$ increases, the entire distribution shifts towards shorter responses, with the mean token count increasing accordingly. This visualization confirms that the regularizer does not just affect the average length but systematically influences the model's overall generative behavior to produce longer sequences.
\vspace{-0.2in}
\begin{figure}[!tbph]
    \centering
    \includegraphics[width=\linewidth]{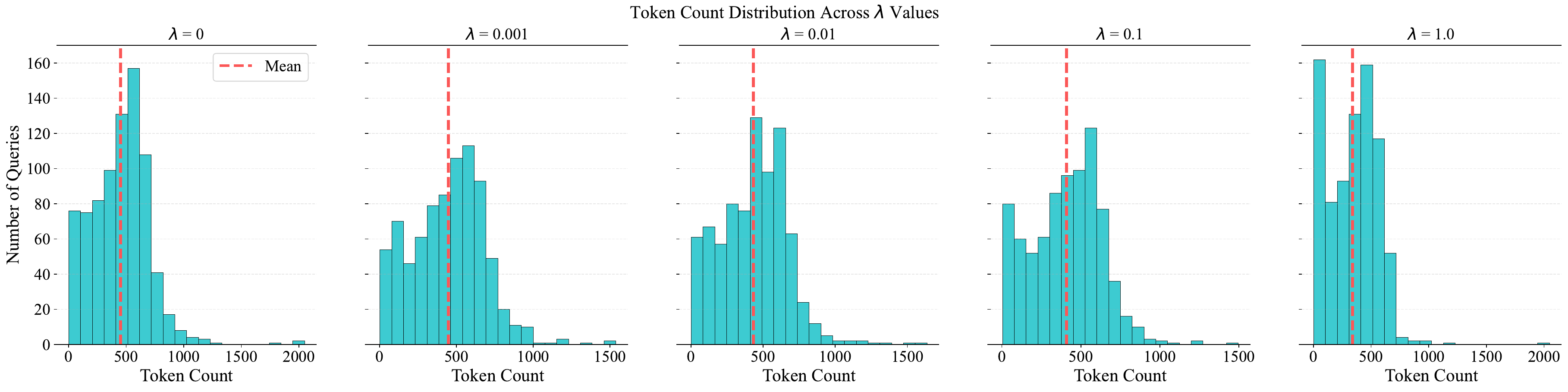} 
    \vspace{-0.2in}
    \caption{Token count distribution for varying $\lambda$ values using the budget-independent regularizer. As $\lambda$ increases, the mean of the distribution (red dashed line) shifts towards shorter responses.}
    \label{fig:token_hist_lambda_appendix}
\end{figure}
\vspace{-0.2in}

\subsection{Impact of Budgeted EOS Regularization on Performance and Length}
Figures \ref{fig:budget128_scores}, \ref{fig:budget256_scores}, \ref{fig:budget384_scores}, and \ref{fig:budget512_scores} illustrate the impact of this budgeted EOS regularizer for target length budgets $b=128$, $b=256$, and $b=512$ tokens, respectively. The x-axis represents varying strengths of the regularization parameter $\lambda$ .
Figures \ref{fig:budget128_hist}, \ref{fig:budget256_hist}, \ref{fig:budget384_hist}, and \ref{fig:budget512_hist} show the distribution of generated response lengths for different actual $\lambda$ values for $\lambda \in \{0, 10, 20, 50\}$ and target budgets $b\in\{128, 256, 512\}$. The red dashed line indicates the target budget $b$.

\begin{figure}[!tbph] 
    \centering
    \includegraphics[width=\linewidth]{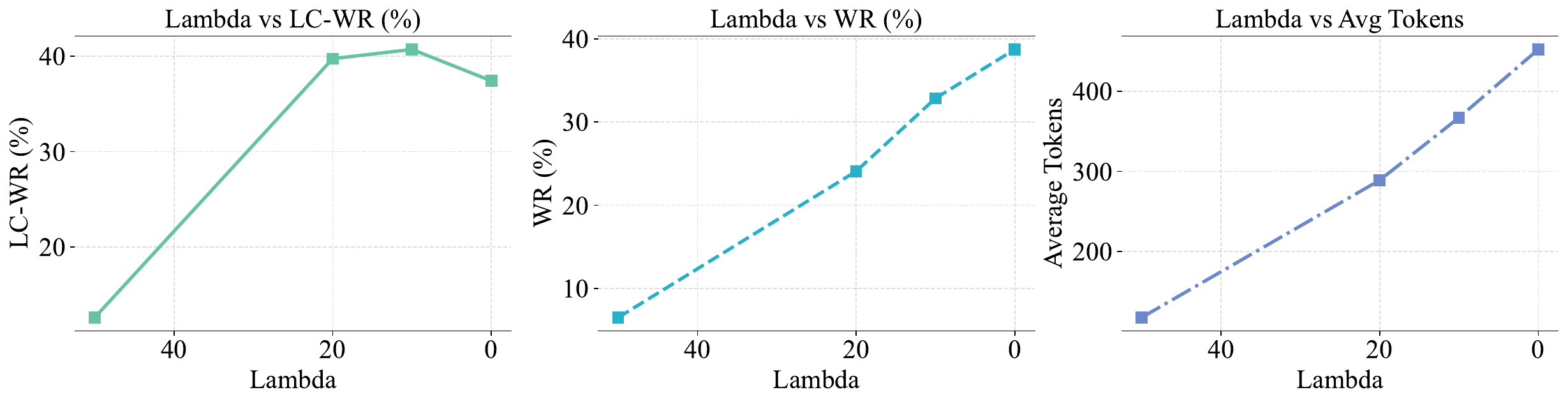}
    \vspace{-0.3in}
    \caption{Performance (LC-WR, WR) and Average Tokens vs. $\lambda$ for budget $b=128$ on AlpacaEval2.}
    \label{fig:budget128_scores}
\end{figure}

\clearpage

\vspace{-0.25in}
\begin{figure}[!tbph]
    \centering
    \includegraphics[width=\linewidth]{images/score_analysis_with_lamda_budget_256.pdf}
    \vspace{-0.2in}
    \caption{Performance (LC-WR, WR) and Average Tokens vs. $\lambda$ for budget $b=256$ on AlpacaEval2.}
    \label{fig:budget256_scores}
\end{figure}

\vspace{-0.1in}
\begin{figure}[!tbph]
    \centering
    \includegraphics[width=\linewidth]{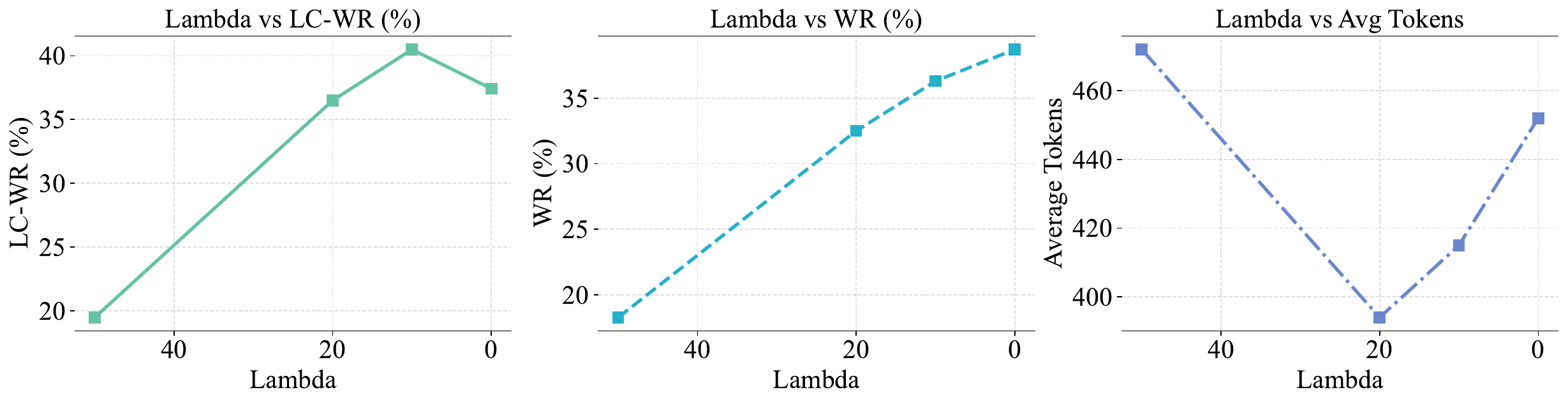}
    \vspace{-0.2in}
    \caption{Performance (LC-WR, WR) and Average Tokens vs. $\lambda$ for budget $b=384$ on AlpacaEval2.}
    \label{fig:budget384_scores}
\end{figure}

\vspace{-0.2in}
\begin{figure}[!tbph]
    \centering
    \includegraphics[width=\linewidth]{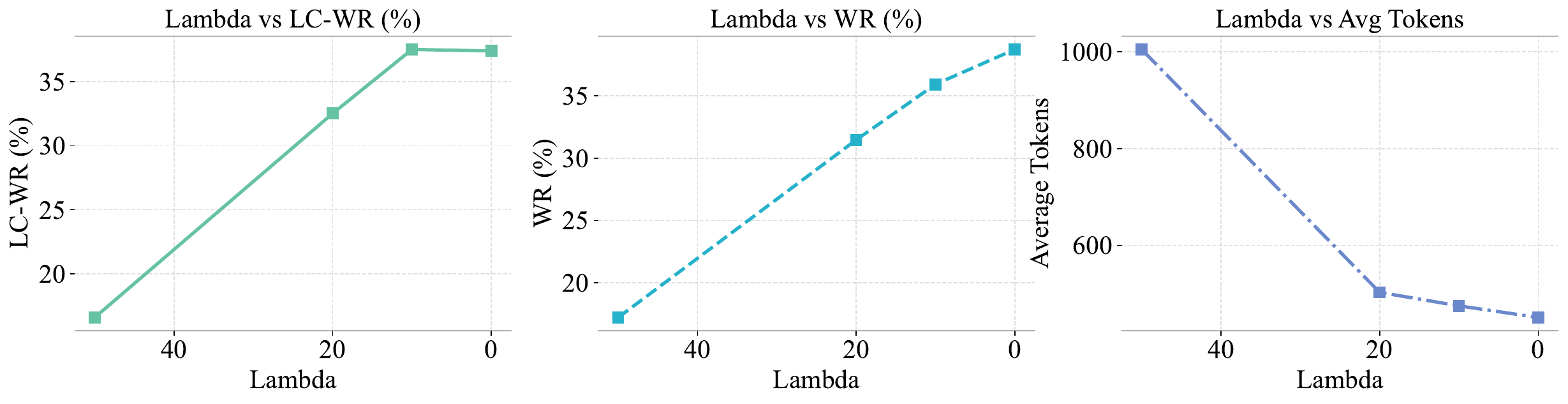}
    \vspace{-0.2in}
    \caption{Performance (LC-WR, WR) and Average Tokens vs. $\lambda$ for budget $b=512$ on AlpacaEval2.}
    \label{fig:budget512_scores}
\end{figure}

From these figures, we observe a clear \textbf{accuracy-efficiency tradeoff}. For each budget $b$, there is an optimal range for $\lambda$ that balances adherence to the budget with achieving high win rates. As $\lambda$ increases, the average token count generally converges towards the target budget, demonstrating effective length control. However, forcing responses to be too strictly confined to a short budget, or allowing excessive verbosity with a large budget, can be suboptimal for alignment quality. This allows practitioners to make informed decisions based on their specific accuracy versus token count requirements.

\subsection{Impact of Budgeted EOS Regularization on Response Length Distribution}
Figures \ref{fig:budget128_hist} -\ref{fig:budget512_hist} show the distribution of generated response lengths for different strengths of $\lambda$ ($\lambda\in\{0, 10, 20, 50\}$) and target budgets $b=128, 256, 512$. The red dashed line indicates the target budget $b$.

\begin{figure}[H]
    \centering
    \includegraphics[width=0.85\linewidth]{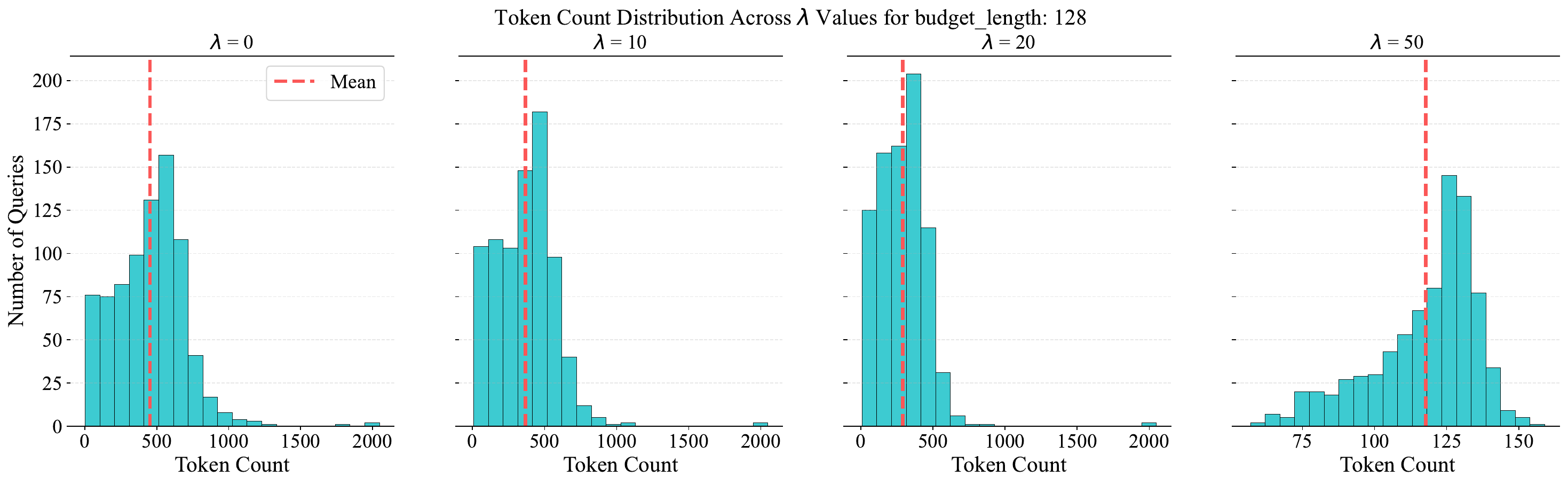}
    \caption{Token count distribution vs. $\lambda$ for budget $b=128$ on AlpacaEval2.}
    \label{fig:budget128_hist}
\end{figure}
\begin{figure}[H]
    \centering
    \includegraphics[width=0.85\linewidth]{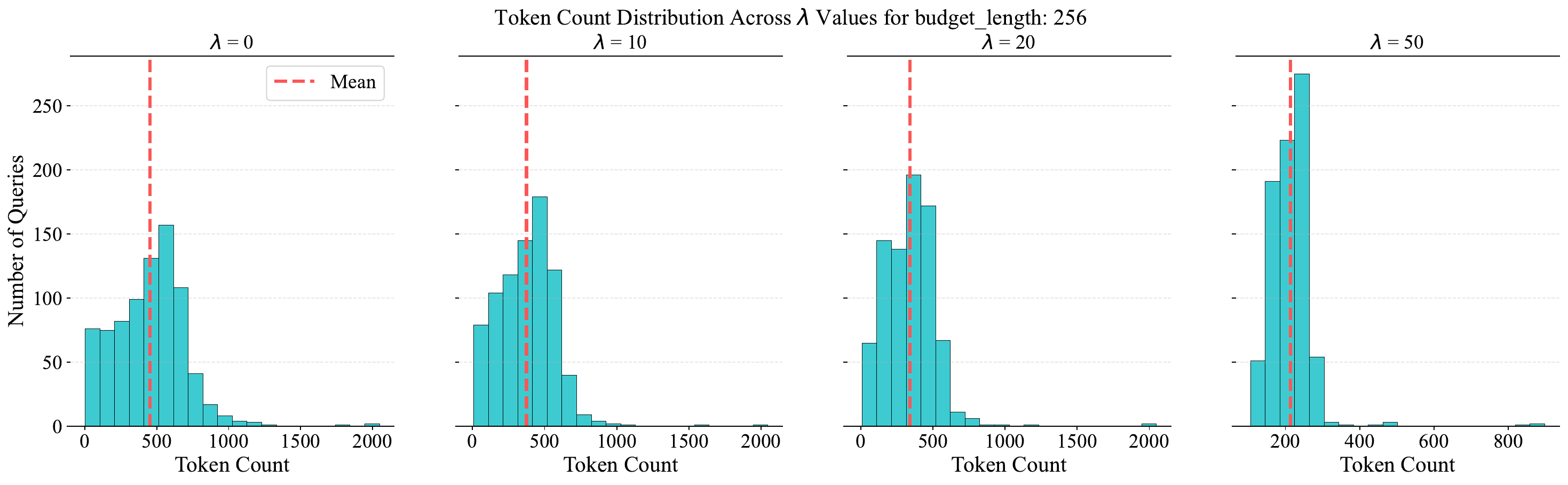}
    \caption{Token count distribution vs. $\lambda$ for budget $b=256$ on AlpacaEval2.}
    \label{fig:budget256_hist}
\end{figure}
\begin{figure}[H]
    \centering
    \includegraphics[width=0.85\linewidth]{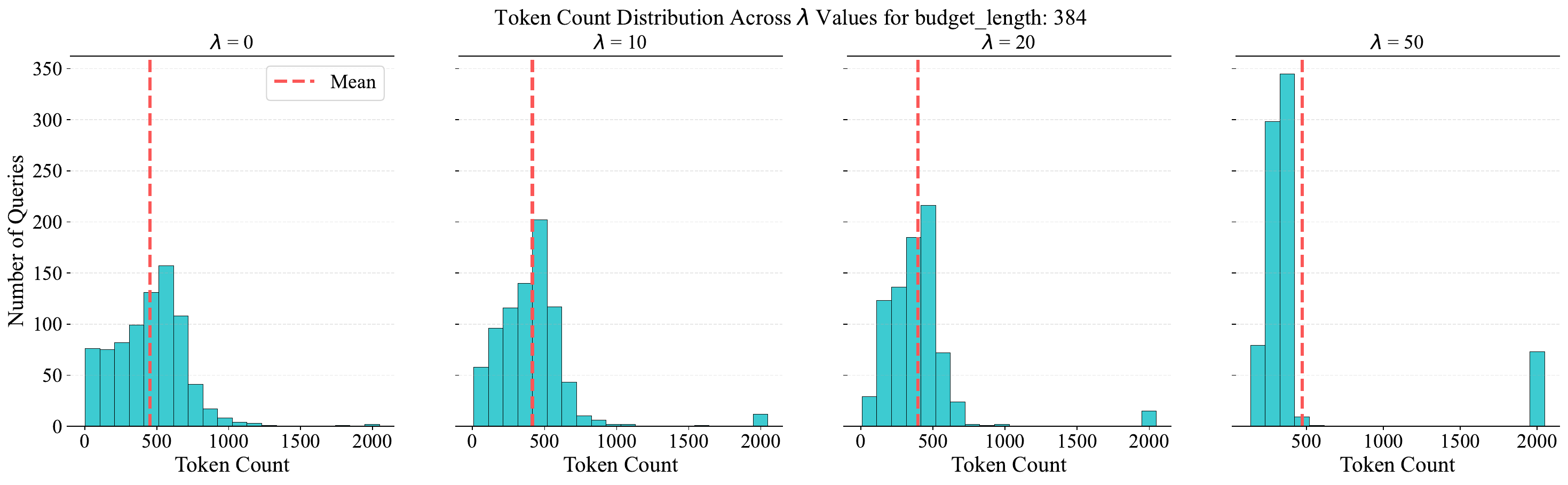}
    \caption{Token count distribution vs. $\lambda$ for budget $b=384$ on AlpacaEval2.}
    \label{fig:budget384_hist}
\end{figure}
\begin{figure}[H]
    \centering
    \includegraphics[width=0.85\linewidth]{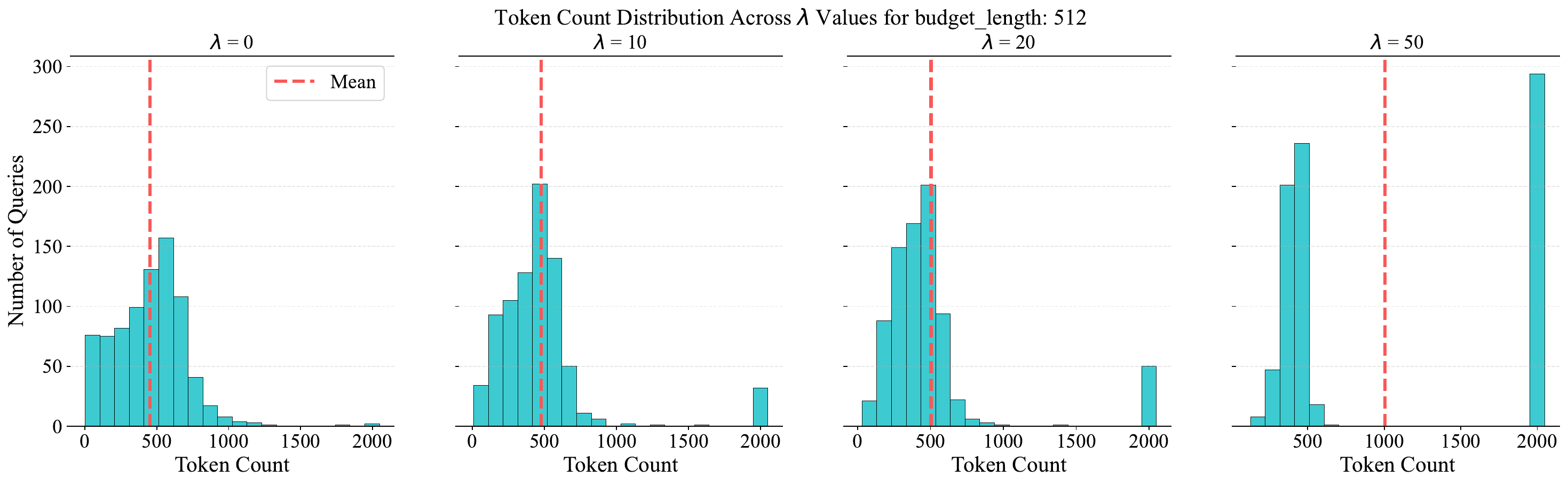}
    \caption{Token count distribution vs. $\lambda$ for budget $b=512$ on AlpacaEval2.}
    \label{fig:budget512_hist}
\end{figure}

These distributions clearly show that as $\lambda$ increases from 0 to 50, the response lengths shift and concentrate more tightly around the target budget $b$. This demonstrates the regularizer's effectiveness in penalizing premature termination and pushing generations to be longer and closer to the budget. Yet certain challenges remain.

\subsection{Challenges and Observations with Fixed Budgets on Variable-Length Data}

The response length distributions (Figures \ref{fig:budget128_hist}-\ref{fig:budget512_hist}, particularly for larger budgets like $b=384$ and $b=512$) suggest interesting interactions when applying a fixed budget $b$ to a dataset with inherently high variance in optimal response lengths.

\subsubsection{Adaptive Learning within Budget Constraints:} The model attempts to adapt its EOS probabilities based on the budget. For instance, if a query naturally warrants a response shorter than a large budget $b$, the regularizer (as formulated in Eq. \ref{eq:budgeted_regularizer} which penalizes early EOS) might push the model to extend responses, potentially beyond their natural endpoint for that query type if $\lambda$ is high. Conversely, if the query warrants a response much longer than $b$, the model is incentivized to terminate around or after $b$.

\begin{wrapfigure}{r}{0.5\textwidth}
    \centering
    \includegraphics[width=\linewidth]{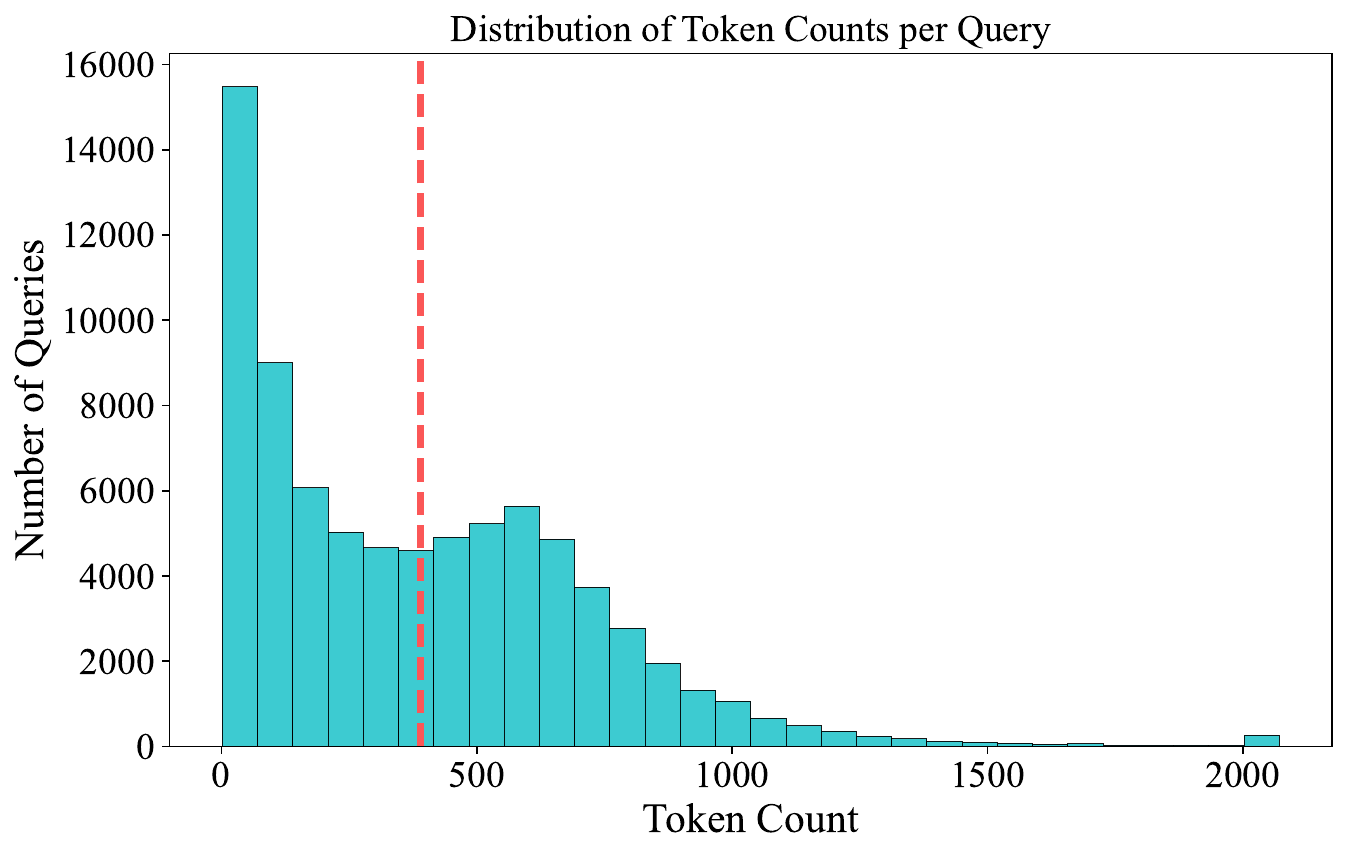}
    \caption{Histogram of Token count vs. Number of Queries on Ultrafeedback. \vspace{-0.25in}}
    \label{fig:histogram_ultrafeedback}
\end{wrapfigure}

\subsubsection{Potential for Bi-modal or Complex Distributions:}

When the inherent optimal lengths for different queries in the dataset vary significantly relative to a single fixed budget $b$, the model's attempt to satisfy the budgeted EOS regularizer across all examples can lead to complex, sometimes bi-modal, length distributions in the aggregate generations. This is because the regularization pressure acts differently depending on whether an unconstrained optimal response would be shorter or longer than $b$. 
    
Figure \ref{fig:histogram_ultrafeedback} shows the distribution of our training data, which clearly indicates that there are queries both longer, as well as shorter than the budget.
This suggests that while a fixed budget offers control, for datasets with very diverse length requirements, more adaptive budgeting strategies or careful tuning of $\lambda$ might be needed to avoid overly distorting natural response lengths for all query types. Our original, non-budgeted EOS regularizer (Equation 28 in the appendix of the main paper) offers a more dataset-adaptive approach by regularizing based on the EOS position in the training examples themselves.

These findings underscore the power of EOS probability control for managing length but also highlight the nuanced considerations when applying fixed-length targets to diverse datasets. REFA's framework provides the tools for such exploration.

\section{Extended Related Work}
\label{sec:related_work}

\paragraph{Preference Optimization for Alignment.}
Early alignment efforts focussed on RLHF through learning an intermediate reward model using reinforcement learning algorithms like PPO \citep{schulman2017proximal, ziegler2019fine, ouyang2022training}. While effective, this can be computationally expensive and noisy due to the intermediate reward estimation. Recent approaches, such as Direct Preference Optimization (DPO) \citep{rafailov2024direct}, streamline alignment by directly optimizing a contrastive loss over pairs of preferred and dispreferred responses, bypassing explicit reward models. Subsequent works have extended this idea, exploring variants like IPO \citep{azar2023general}, CPO \citep{xu2024contrastive}, ORPO \citep{Hong2024ORPOMP}, and R-DPO \citep{Park2024DisentanglingLF}, each offering alternative formulations or regularizations. Additionally, methods like RRHF \citep{yuan2023rrhf}, SLiC-HF \citep{Zhao2023SLiCHFSL}, GPO \citep{tang2024generalized}, KTO \citep{Ethayarajh2024KTOMA}, and SimPO \citep{meng2024simpo} propose diverse preference optimization objectives, some relying on reference models and others operating reference-free. 

\paragraph{Reference-Free Alignment.}
There is growing interest in \emph{reference-free} methods that can directly be used for post-training without a reference model. These approaches avoid the complexity of ratio modeling and can leverage richer data containing multiple candidate responses per query \citep{cui2023ultrafeedback, liu2024datasets, zhou2023beyond, wang2024preference}. Recently, objectives like SimPO \citep{meng2024simpo} have shown that focusing on the log-probability of sequences without a separate reference model can achieve strong performance, making the training pipeline simpler and potentially more robust.

\paragraph{Multi-Preference Optimization.}
The development of multi-preference and reference-free approaches is facilitated by datasets like the UltraFeedback dataset \citep{cui2023ultrafeedback} that provide scalar rewards corresponding to multiple responses related to a single query. Methods like InfoNCA \citep{chen2024noise} and MPO \citep{gupta2025multipreferenceoptimizationgeneralizingdpo} use such datasets to generalize beyond pairwise comparisons, simultaneously considering sets of positive and negative responses. Multi-preference objectives better approximate the true preference landscape, better enabling models to estimate the accepted and rejected distributions. 
While InfoNCA leverages a noise-contrastive framework to align responses according to scalar rewards, MPO introduces deviation-based weighting to emphasize highly positive or highly negative deviations more strongly, thus improving alignment quality.

\paragraph{Length as a Vector for Reward Hacking.}
A persistent challenge in preference alignment is the tendency for models to engage in "reward hacking" by exploiting spurious correlations in the training data \citep{gao2023scaling}. Response length is a primary vector for this behavior. It has been widely observed that longer, more verbose responses are often preferred by human and model raters, creating a naive incentive for models to simply ``write more'' to achieve a higher score \citep{chen2024odin}. Recognizing this, a growing class of preference optimization methods has emerged to explicitly tackle length bias. One prominent approach is \textit{length normalization}, where the implicit reward is based on the average per-token log-probability. This is a core feature in reference-free methods like SimPO \citep{meng2024simpo} and ORPO \citep{hong2024orpo}, as well as in dedicated works like LN-DPO \citep{ahrabian2024practical} and LMPO \citep{li2025length}. While these methods successfully counter the bias towards verbosity, our work shows they are vulnerable to a second-order exploit, like the URSLA shortcut, highlighting that a more fundamental approach to termination logic is required.

\paragraph{Efficiency in LLMs: From Model Compression to Inference Budgeting.}
The practical deployment of LLMs is fundamentally constrained by their computational cost. A significant body of research focuses on improving efficiency through \textbf{model-centric} approaches. These include reducing model size via quantization \citep{dettmers2023qlora}, knowledge distillation \citep{shridhar2023distilling}, and pruning \citep{bai2024beyond}, as well as developing more efficient architectures like Matryoshka models that allow for adaptive computation \citep{kusupati2022matryoshka, devvrit2024matformer}. A complementary line of work targets \textbf{inference-centric} optimizations like speculative decoding \citep{leviathan2023fast}. Our work introduces a third, less explored axis that bridges alignment and efficiency: \textbf{output budget control}. By manipulating the generative process at training time to produce outputs of a desired length, we can directly manage the primary drivers of inference cost and latency: the number of generated tokens. REFA contributes to this area by providing a principled mechanism to instill this control during the preference alignment phase itself.

\paragraph{Positioning REFA.}
REFA builds upon the principles of reference-free, multi-preference optimization established by methods like MPO and SimPO. However, its primary novelty lies in addressing the critical challenge of length-based reward hacking. Unlike prior methods that apply sequence-level normalization or regularization, REFA introduces a novel approach of fine-grained, token-level intervention. By identifying and solving the subtle failure modes introduced by length normalization itself, REFA not only achieves more robust alignment but also provides a practical framework for managing the inference-time budget, directly connecting the goals of preference alignment and computational efficiency.
\clearpage
\section{REFA Training Algorithms}
\label{app:refa_algorithm}

This section provides the detailed pseudocode and a step-by-step explanation for the training procedures of the primary REFA variants: the unweighted \textbf{REFA-Dynamic} and the weighted \textbf{W-REFA}.

\subsection{Algorithm for REFA-Dynamic}

Algorithm \ref{alg:refa_dynamic_training_appendix} details the procedure for our primary method. This version uses the unweighted, group-contrastive loss combined with the Targeted Regularizer. It is designed to be robust and is particularly effective in on-policy and iterative settings where reward model scores may be noisy or less calibrated. The core idea is to use the reward signal to partition responses into positive and negative sets, and then optimize a contrastive objective on the model's length-normalized probabilities, while simultaneously regularizing the model's termination behavior to prevent the URSLA shortcut.

\begin{algorithm}[h!]
   \caption{REFA-Dynamic Training Procedure}
   \label{alg:refa_dynamic_training_appendix}
\begin{algorithmic}
   \STATE {\bfseries Input:} Dataset $\mathcal{D} = \{(x, \{y_i, r_i\}_{i=1}^K)\}$, learning rate $\eta$, hyperparameters $\beta, \gamma, \lambda$.
   \REPEAT
      \STATE Sample a batch of queries $\mathcal{B} \subset \mathcal{D}$.
      \FORALL{query $(x, \{y_i, r_i\}_{i=1}^K)$ in the batch $\mathcal{B}$}
         \STATE Compute mean reward: $\bar{r} \gets \frac{1}{K}\sum_{i=1}^K r_i$.
         \STATE Partition responses: $Y^+ \gets \{y_i \mid r_i > \bar{r}\}$, \quad $Y^- \gets \{y_i \mid r_i \le \bar{r}\}$.
         \STATE Compute length-normalized log-probabilities $\overline{\log \pi_\theta}(y_i \mid x)$ for all $y_i \in Y^+ \cup Y^-$.
         \STATE Compute the base score for each response: $s(y_i) \gets \beta \cdot \overline{\log \pi_\theta}(y_i \mid x)$.
         \STATE Compute positive and negative aggregated scores for the contrastive loss: \\
            $P^+ \gets \sum_{y \in Y^+} e^{s(y)}$; \quad
            $P^- \gets \sum_{y \in Y^-} e^{s(y)}$.
         \STATE Set target length for regularization: $\text{TARGET\_LENGTH} \gets \max_{y_i \in Y^+ \cup Y^-} |y_i|$.
         \STATE Compute the Targeted Regularizer term: \\
            $\mathcal{R}_{\text{target}}(\theta) \gets \sum_{y \in Y^+ \cup Y^-} \lambda P_\theta(\text{EOS at } |y|) \cdot \max(0, \text{TARGET\_LENGTH} - |y|)$.
         \STATE Compute loss for the query: $L_{x}(\theta) \gets -\log\left(\frac{P^+}{P^+ + \gamma P^-}\right) + \mathcal{R}_{\text{target}}(\theta)$.
      \ENDFOR
      \STATE Compute average batch loss: $L_{\text{batch}}(\theta) \gets \frac{1}{|\mathcal{B}|}\sum_{x \in \mathcal{B}} L_{x}(\theta)$.
      \STATE Update parameters: $\theta \gets \theta - \eta \nabla_\theta L_{\text{batch}}(\theta)$.
   \UNTIL{convergence criteria met}
\end{algorithmic}
\end{algorithm}

\subsection{Explanation of the Procedure.}

The REFA-Dynamic training procedure is centered around a composite objective that simultaneously optimizes for preference alignment and robust length control.

First, the algorithm translates raw preference data into a structured signal for a group-wise contrastive loss. For each query, responses are partitioned into positive ($Y^+$) and negative ($Y^-$) sets based on their mean reward. Each response is then scored using its length-normalized log-probability, scaled by an inverse temperature $\beta$ ($s(y) = \beta \cdot \overline{\log \pi_\theta}(y|x)$). This ensures the optimization is sensitive to per-token quality rather than raw sequence length. The core alignment objective, $-\log\left(\frac{P^+}{P^+ + \gamma P^-}\right)$, generalizes the Bradley-Terry model to sets, maximizing the collective probability of the positive set while suppressing the negative set, with the hyperparameter $\gamma$ controlling the weighting strength for the negative examples.

The second component of the objective is the \textbf{Targeted Regularizer}, which is critical for counteracting the URSLA shortcut. It applies a penalty, scaled by $\lambda$, to the End-of-Sequence (EOS) probability of any response that is shorter than a dynamically set target (the maximum length in the batch). This mechanism explicitly disincentivizes the model from prematurely truncating negative responses as a trivial means of satisfying the loss. The final loss is the sum of the contrastive objective and this regularizer. The losses are averaged across a batch before a standard gradient descent step is used to update the model's parameters $\theta$, thereby jointly learning preference alignment and stable length behavior.

\subsection{Algorithm for W-REFA (Weighted REFA-Dynamic)}

Algorithm \ref{alg:w_refa_training_appendix} details the procedure for the weighted variant, W-REFA. This version is designed for scenarios, such as off-policy training, where fine-grained and reliable reward scores are available. It enhances REFA-Dynamic by incorporating a deviation-based weighting term directly into the score of each response, allowing the model to prioritize more informative examples (i.e., those with rewards far from the mean).

\begin{algorithm}[h!]
   \caption{W-REFA (Weighted REFA-Dynamic) Training Procedure}
   \label{alg:w_refa_training_appendix}
\begin{algorithmic}
   \STATE {\bfseries Input:} Dataset $\mathcal{D} = \{(x, \{y_i, r_i\}_{i=1}^K)\}$, learning rate $\eta$, hyperparameters $\alpha, \beta, \gamma, \lambda$.
   \REPEAT
      \STATE Sample a batch of queries $\mathcal{B} \subset \mathcal{D}$.
      \FORALL{query $(x, \{y_i, r_i\}_{i=1}^K)$ in the batch $\mathcal{B}$}
         \STATE Compute mean reward: $\bar{r} \gets \frac{1}{K}\sum_{i=1}^K r_i$.
         \STATE Partition responses: $Y^+ \gets \{y_i \mid r_i > \bar{r}\}$, \quad $Y^- \gets \{y_i \mid r_i \le \bar{r}\}$.
         \STATE Compute length-normalized log-probabilities $\overline{\log \pi_\theta}(y_i \mid x)$ for all $y_i \in Y^+ \cup Y^-$.
         \STATE Compute the weighted score for each response: \\
            $s_{\text{wtd}}(y_i) \gets \beta \cdot \overline{\log \pi_\theta}(y_i \mid x) + \beta \alpha (r_{y_i} - \bar{r})$.
         \STATE Compute positive and negative aggregated weighted scores: \\
            $P^+_{\text{wtd}} \gets \sum_{y \in Y^+} e^{s_{\text{wtd}}(y)}$; \quad
            $P^-_{\text{wtd}} \gets \sum_{y \in Y^-} e^{s_{\text{wtd}}(y)}$.
         \STATE Set target length for regularization: $\text{TARGET\_LENGTH} \gets \max_{y_i \in Y^+ \cup Y^-} |y_i|$.
         \STATE Compute the Targeted Regularizer term: \\
            $\mathcal{R}_{\text{target}}(\theta) \gets \sum_{y \in Y^+ \cup Y^-} \lambda P_\theta(\text{EOS at } |y|) \cdot \max(0, \text{TARGET\_LENGTH} - |y|)$.
         \STATE Compute loss for the query: $L_{x}(\theta) \gets -\log\left(\frac{P^+_{\text{wtd}}}{P^+_{\text{wtd}} + \gamma P^-_{\text{wtd}}}\right) + \mathcal{R}_{\text{target}}(\theta)$.
      \ENDFOR
      \STATE Compute average batch loss: $L_{\text{batch}}(\theta) \gets \frac{1}{|\mathcal{B}|}\sum_{x \in \mathcal{B}} L_{x}(\theta)$.
      \STATE Update parameters: $\theta \gets \theta - \eta \nabla_\theta L_{\text{batch}}(\theta)$.
   \UNTIL{convergence criteria met}
\end{algorithmic}
\end{algorithm}

\paragraph{Explanation of Key Differences.}
The W-REFA algorithm follows the same overall structure as REFA-Dynamic, with one critical modification in how scores are computed.
\begin{itemize}[left=0pt,itemsep=2pt]
    \item \textbf{Line 8 (Weighted Score Computation):} Instead of using only the model's scaled log-probability, the score $s_{\text{wtd}}(y_i)$ is now a composite. It starts with the base score ($ \beta \cdot \overline{\log \pi_\theta}(y_i \mid x)$) and then adds a \textit{deviation-based weight term}, $\beta \alpha (r_{y_i} - \bar{r})$.
    \item \textbf{The Role of $\alpha$:} The hyperparameter $\alpha$ scales the influence of the explicit reward signal. A positive $\alpha$ means that responses with rewards far above the mean receive a significant boost to their score, while those far below the mean receive a significant penalty. This creates an implicit curriculum, forcing the model to pay more attention to the most informative, high-deviation examples.
    \item \textbf{Subsequent Steps:} All subsequent steps (Lines 9-12) are analogous to those in Algorithm \ref{alg:refa_dynamic_training_appendix}, but they operate on these richer, reward-aware scores ($s_{\text{wtd}}$), thereby propagating the fine-grained reward information throughout the loss calculation.
\end{itemize}
\clearpage
\section{A Reader's Guide to the Theoretical Analysis}
\label{app:theory_walkthrough}

This section serves as a narrative roadmap to the formal results that underpin the REFA framework. Our goal is to provide the intuition behind our theoretical claims, explain how they connect, and build a rigorous, step-by-step case for our methodology. The theoretical analysis unfolds in a logical progression: we first deconstruct the properties and pitfalls of a baseline objective (InfoNCA), then use those insights to motivate and analyze our proposed contrastive objective (REFA-Dynamic), and finally, we formalize the subtle length-based shortcut that necessitates our novel EOS regularization, the core algorithmic contribution of this work.

\subsection{The Narrative and Intuition of Our Theoretical Results}

\paragraph{The Flaw in Reference-Based Alignment (Appendix \ref{sec:analysis_infonca}).}
Our theoretical journey begins by justifying the need for a reference-free framework. We analyze the standard, reference-based InfoNCA loss, which aims to match a model's output distribution to a target distribution derived from rewards. Through a formal stationary point analysis, we prove that the presence of the reference model, $\mu(y|x)$, fundamentally alters the optimization target. At equilibrium, the learned policy $\pi_\theta$ does not align directly with the desired preference distribution. Instead, it converges to a skewed distribution proportional to the product of the target probability and the reference model's probability, $p_i^{\text{target}}\mu(y_i|x)$. This result demonstrates that a fixed reference model can act as a permanent, potentially undesirable bias, preventing the policy from ever fully learning the true preference landscape. This provides a strong theoretical motivation for moving to a reference-free setting where the model can learn more directly from the provided preference data.

\paragraph{Weaknesses of Naive Reference-Free Alignment (Appendix \ref{sec:analysis_infonca_no_ref}).}
Having established the case for a reference-free approach, we analyze the most direct adaptation: a reference-free version of InfoNCA (which we term REFA-InfoNCA). While its stationary point is an intuitive `model\_distribution = target\_distribution`, our term-by-term gradient analysis (Lemmas \ref{lem:single_term_gradient} and \ref{lem:directional_influence}) uncovers critical flaws in its optimization dynamics. We prove that the cross-entropy objective provides a positive gradient push to \textit{all} responses with a non-zero target probability, regardless of their quality. This means the model is inadvertently encouraged to waste capacity learning to generate low-quality responses. Furthermore, the objective creates "gradient conflicts" among high-quality responses, as each is incentivized to increase its own probability at the expense of all others, hindering their collective improvement. This analysis reveals that a simple distribution-matching objective is ill-suited for the noisy, multi-faceted nature of preference alignment.

\paragraph{The Principled Construction of a Contrastive Objective (Appendix \ref{sec:weaknesses_infonca}).}
The identified weaknesses of a simple cross-entropy loss motivate the need for a more robust objective. This section of our theory details the systematic, step-by-step process of constructing the REFA-Dynamic loss by addressing each of the identified flaws. We show how moving from a sum of individual objectives to a single, group-contrastive term resolves the issue of gradient conflicts among positive responses. We then demonstrate how partitioning responses into positive and negative sets, and treating them asymmetrically in the loss function (i.e., positives in the numerator, negatives only in the denominator), prevents the model from directly promoting low-quality responses. This structured development shows that the final REFA-Dynamic loss is not an arbitrary formulation but is systematically engineered for robust multi-preference alignment.

\paragraph{The Theoretical Power of REFA-Dynamic (Appendix \ref{sec:length_control_refabase}).}
With the REFA-Dynamic loss constructed, we then formally prove its theoretical superiority for the task of alignment. Through a detailed gradient analysis of a simplified version of the loss (Lemma \ref{lem:refabase_gradient}), we show how the contrastive structure induces different and more desirable dynamics for positive and negative response sets. We formally prove in Lemma \ref{lem:positive_increases} that the gradient for any negative response is strictly positive, while the gradient for a positive response is negative. This means the objective has a clear and unambiguous directional influence: it exclusively pushes to increase the probability of positive responses while simultaneously suppressing the probability of all negative responses. This leads to a far more powerful convergence property: the stationary point of the REFA-Dynamic loss is achieved only as the probability of all negative responses approaches zero ($P_\theta(y|x) \to 0$ for all $y \in Y^-$). This is a much stronger guarantee of alignment than merely matching a target distribution.

\paragraph{The Necessity of Length Normalization (Appendix \ref{sec:length_control_refabase}, continued).}
Before we can apply the REFA-Dynamic loss, we must first address the foundational issue of scoring. In the same appendix, we prove in Lemma \ref{lem:length_inflation} that any objective based on raw, un-normalized log-probabilities is fatally flawed. Because shorter sequences accumulate less negative log-probability, they are assigned artificially high scores. This creates a trivial shortcut where the model can satisfy the loss by simply producing degenerate, short responses. This result rigorously establishes that the use of length-normalized scores is not an optional design choice but a fundamental prerequisite for any valid reference-free preference optimization method.

\paragraph{The Final Insight: The URSLA Shortcut (Appendix \ref{sec:length_norm_eos}).}
Our theoretical analysis concludes by formalizing the central problem that motivates our primary algorithmic contribution. Having established that length normalization is necessary, we then prove that it creates a new, more subtle problem. We introduce the \textbf{Uncertainty Reduction with Sequence Length Assertion (URSLA)} (Conjecture \ref{conj:uncertainty_reduction}), a principle stating that longer coherent sequences tend to have lower per-token uncertainty. Under this principle, we formally prove in Lemma \ref{lemma:length_reduction} that any length-normalized objective that penalizes negative responses (like REFA-Dynamic) will create a new and perverse incentive: the model is encouraged to \textit{shorten} those negative responses to increase their per-token uncertainty and thus satisfy the loss. This result provides the rigorous theoretical foundation for why an additional, explicit length control mechanism, our EOS regularizer, is an \textbf{essential component} for achieving robust, reference-free alignment.
\section{Analysis of the InfoNCA Loss}
\label{sec:analysis_infonca}

In this section, we provide a detailed analysis of the InfoNCA loss function introduced in previous works. We begin by establishing our notation and problem setup, then proceed to define the InfoNCA loss. Subsequently, we derive its gradients, characterize its stationary points, and examine the implications for the learned policy distribution when a reference distribution is present. Finally, we discuss how removing the reference model recovers a simpler scenario that aligns the learned distribution directly with the target distribution.

\subsection{Notation and Setup}
\label{subsec:infonca_notation}

We consider a query (or context) $x \in \mathcal{X}$ and a set of $K$ candidate responses $\{y_i\}_{i=1}^K$. Each response $y_i$ is associated with a scalar reward $r_i = r(x,y_i)$, representing how suitable or aligned the response is according to some evaluation metric or annotated feedback.

We assume access to:
\begin{itemize}[leftmargin=1em]
    \item A policy model $\pi_\theta(y|x)$ parameterized by $\theta$, which assigns a probability to each response $y_i$.
    \item A reference model $\mu(y|x)$, which provides a baseline distribution over responses. This reference model is fixed and not optimized.
\end{itemize}

We define the ratio:
\begin{equation}
r_\theta(x,y_i) \;:=\; \log \frac{\pi_\theta(y_i|x)}{\mu(y_i|x)}.
\end{equation}
For simplicity, we set $\beta=1$ in the definitions that follow, but the analysis easily generalizes to arbitrary positive $\beta$.

The target distribution derived from the rewards is given by a softmax transformation:
\begin{equation}
p_i^{\text{target}} \;:=\; \frac{ e^{\alpha r_i}}{\sum_{j=1}^K e^{\alpha r_j}},
\end{equation}
where $\alpha > 0$ is an inverse temperature-like parameter controlling the sharpness of the target distribution.

\subsection{The InfoNCA Loss}
\label{subsec:infonca_loss_definition}

The InfoNCA loss aims to match the distribution induced by $r_\theta(x,y)$ to the target distribution $p_i^{\text{target}}$. Given that $r_\theta(x,y_i) = \log(\pi_\theta(y_i|x)/\mu(y_i|x))$, we define the model distribution as:
\begin{equation}
p_i^{\text{model}} \;:=\; \frac{e^{r_\theta(x,y_i)}}{\sum_{j=1}^K e^{r_\theta(x,y_j)}}.
\end{equation}

The InfoNCA loss is the cross-entropy between the target distribution $p_i^{\text{target}}$ and the model distribution $p_i^{\text{model}}$:
\begin{equation}
L_{\text{InfoNCA}}(\theta) \;=\; -\sum_{i=1}^K p_i^{\text{target}} \,\log p_i^{\text{model}}.
\end{equation}

Minimizing $L_{\text{InfoNCA}}$ encourages $p_i^{\text{model}}$ to align with $p_i^{\text{target}}$.

\subsection{Gradient Derivation}
\label{subsec:infonca_gradient}

To understand the optimization dynamics, we compute the gradient of $L_{\text{InfoNCA}}$ with respect to the log-ratios $r_\theta(x,y_i)$.

First, note that:
\[
\log p_i^{\text{model}} \;=\; r_\theta(x,y_i) - \log\biggl(\sum_{j=1}^K e^{r_\theta(x,y_j)}\biggr).
\]

Differentiating $\log p_k^{\text{model}}$ with respect to $r_\theta(x,y_i)$:
\[
\frac{\partial \log p_k^{\text{model}}}{\partial r_\theta(x,y_i)} \;=\; \delta_{ik} - p_i^{\text{model}},
\]
where $\delta_{ik}=1$ if $i=k$ and $0$ otherwise.

Now, differentiate the loss:
\[
\frac{\partial L_{\text{InfoNCA}}}{\partial r_\theta(x,y_i)} \;=\; -\sum_{k=1}^K p_k^{\text{target}}\frac{\partial \log p_k^{\text{model}}}{\partial r_\theta(x,y_i)}.
\]

Substituting the derivative of $\log p_k^{\text{model}}$:
\[
\frac{\partial L_{\text{InfoNCA}}}{\partial r_\theta(x,y_i)} = -\sum_{k=1}^K p_k^{\text{target}}(\delta_{ik}-p_i^{\text{model}}).
\]

Evaluating the sum yields the well-known cross-entropy gradient form:
\[
\frac{\partial L_{\text{InfoNCA}}}{\partial r_\theta(x,y_i)} \;=\; p_i^{\text{model}} - p_i^{\text{target}}.
\]

Thus, the gradient simplifies to the difference between the model and target distributions.

\subsection{Stationary Points}
\label{subsec:infonca_stationary}

A stationary point occurs where the gradient is zero for all $i$:
\[
p_i^{\text{model}} - p_i^{\text{target}} = 0 \quad \implies \quad p_i^{\text{model}} = p_i^{\text{target}}\;\;\forall i.
\]

At this stationary point, the ratio distribution $e^{r_\theta(x,y)}/\sum_j e^{r_\theta(x,y_j)}$ perfectly matches the reward-derived target distribution. However, note that $p_i^{\text{model}}$ is defined on the ratio $\pi_\theta(y|x)/\mu(y|x)$, not directly on $\pi_\theta(y|x)$ alone.

\paragraph{Implications for the Ratio \(\pi_\theta(y|x)/\mu(y|x)\)}
\label{subsec:infonca_ratio_implications}

Recall:
\[
p_i^{\text{model}} = \frac{\pi_\theta(y_i|x)/\mu(y_i|x)}{\sum_{j=1}^K \pi_\theta(y_j|x)/\mu(y_j|x)}.
\]

At the stationary point:
\[
p_i^{\text{target}} = p_i^{\text{model}} = \frac{\frac{\pi_\theta(y_i|x)}{\mu(y_i|x)}}{\sum_{j=1}^K \frac{\pi_\theta(y_j|x)}{\mu(y_j|x)}}.
\]
Rearranging:
\[
\frac{\pi_\theta(y_i|x)}{\mu(y_i|x)} = p_i^{\text{target}} \sum_{j=1}^K \frac{\pi_\theta(y_j|x)}{\mu(y_j|x)}.
\]

Define:
\[
A := \sum_{j=1}^K \frac{\pi_\theta(y_j|x)}{\mu(y_j|x)}.
\]

Then:
\[
\pi_\theta(y_i|x) = p_i^{\text{target}} \mu(y_i|x) A.
\]

This shows that at equilibrium, $\pi_\theta(y_i|x)$ is proportional to the product $p_i^{\text{target}}\mu(y_i|x)$. Note that we have not assumed $\sum_i \pi_\theta(y_i|x) = 1$; $\pi_\theta(y_i|x)$ could be any nonnegative values. The shape of the solution is defined by:
\[
\pi_\theta(y_i|x) \propto p_i^{\text{target}} \mu(y_i|x).
\]

If we subsequently impose normalization to interpret $\pi_\theta(y_i|x)$ as a probability distribution, we would set:
\[
\hat{\pi}_\theta(y_i|x) = \frac{\pi_\theta(y_i|x)}{\sum_{k=1}^K \pi_\theta(y_k|x)} = \frac{p_i^{\text{target}}\mu(y_i|x)}{\sum_{k=1}^K p_k^{\text{target}}\mu(y_k|x)}.
\]

Define:
\[
M := \sum_{k=1}^K p_k^{\text{target}}\mu(y_k|x),
\]
then:
\[
\hat{\pi}_\theta(y_i|x) = \frac{p_i^{\text{target}}\mu(y_i|x)}{M}.
\]

In this normalized view, the equilibrium policy distribution aligns with a $\mu$-weighted version of the target distribution rather than $p_i^{\text{target}}$ directly. This may point to a potential improvement over the InfoNCA loss function, simply by removing the reference model, as we show in the subsection below.

\subsection{Removing the Reference Model: A Reference-Free Scenario}
\label{subsec:infonca_reference_free}

The presence of the reference model $\mu(y|x)$ skews the stationary solution. If we were to remove $\mu(y|x)$ from the formulation, i.e., consider a scenario where $r_\theta(x,y) := \log \pi_\theta(y|x)$ without dividing by $\mu(y|x)$, then the model distribution $p_i^{\text{model}}$ would directly be:
\[
p_i^{\text{model}} = \frac{\pi_\theta(y_i|x)}{\sum_{j=1}^K \pi_\theta(y_j|x)},
\]
which is simply the policy distribution itself.

In that reference-free case, setting $p_i^{\text{model}} = p_i^{\text{target}}$ at the stationary point would imply:
\[
\pi_\theta(y_i|x) = p_i^{\text{target}},
\]
directly aligning the policy distribution with the target distribution. This is the desirable outcome if no baseline or reference distribution is involved. Thus, the introduction of $\mu(y|x)$ in the denominator shifts the stationary solution, necessitating careful design if we want direct alignment with $p_i^{\text{target}}$.

In summary, the InfoNCA loss aligns the ratio $\pi_\theta(y|x)/\mu(y|x)$ to the target $p_i^{\text{target}}$. To achieve direct alignment of $\pi_\theta(y|x)$ with $p_i^{\text{target}}$, a reference-free formulation is required.

\section{Additional Analysis of the InfoNCA Loss Without a Reference Model}
\label{sec:analysis_infonca_no_ref}

In the previous section, we analyzed the InfoNCA loss in the presence of a reference model $\mu(y|x)$. Here, we revisit the InfoNCA formulation under a simpler, reference-free scenario and highlight several issues that arise when optimizing this objective. Specifically, we show how the loss may inadvertently increase the probabilities of low-rated responses and lead to contradictory optimization signals.

\subsection{Reference-Free InfoNCA}
\label{subsec:infonca_no_ref_setup}

Removing the reference model from the formulation amounts to setting:
\begin{equation}
r_\theta(x,y_i) \;:=\; \log \pi_\theta(y_i|x).
\end{equation}
In this case, the model distribution becomes:
\begin{equation}
p_i^{\text{model}} = \frac{\pi_\theta(y_i|x)}{\sum_{j=1}^K \pi_\theta(y_j|x)}.
\end{equation}

The InfoNCA loss reduces to the standard cross-entropy between the target and model distributions:
\begin{equation}
L_{\text{InfoNCA}}(\theta) \;=\; -\sum_{i=1}^K p_i^{\text{target}} \log p_i^{\text{model}},
\end{equation}
with $p_i^{\text{target}}$ defined as before.

\subsection{Term-by-Term Gradient Dynamics}
\label{subsec:infonca_no_ref_single_term}

\begin{lemma}[Gradient of a Single-Term Objective]
\label{lem:single_term_gradient}
Let $\ell_i(\theta)$ be defined as:
\[
\ell_i(\theta) := -p_i^{\text{target}} \log p_i^{\text{model}},
\]
where
\[
p_i^{\text{model}} = \frac{\pi_\theta(y_i|x)}{\sum_{j=1}^K \pi_\theta(y_j|x)}
\]
and $p_i^{\text{target}}$ is a fixed scalar satisfying $p_i^{\text{target}} > 0$. Assume $\pi_\theta(y_j|x) > 0$ for all $j=1,\dots,K$.

Then, the partial derivatives of $\ell_i(\theta)$ with respect to $\pi_\theta(y_i|x)$ and $\pi_\theta(y_j|x)$ for $j \neq i$ are:
\[
\frac{\partial \ell_i(\theta)}{\partial \pi_\theta(y_i|x)} = -p_i^{\text{target}} \left(\frac{1}{\pi_\theta(y_i|x)} - \frac{1}{\sum_{j=1}^K \pi_\theta(y_j|x)}\right),
\]
and for each $j \neq i$:
\[
\frac{\partial \ell_i(\theta)}{\partial \pi_\theta(y_j|x)} = p_i^{\text{target}}\frac{1}{\sum_{k=1}^K \pi_\theta(y_k|x)}.
\]

\end{lemma}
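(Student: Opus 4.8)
The plan is to treat the response probabilities $\pi_\theta(y_j \mid x)$ as the independent coordinates with respect to which we differentiate, and to exploit the fact that the only source of coupling between them is the shared normalizer in the denominator of $p_i^{\text{model}}$. To make this transparent, I would first introduce the shorthand $\pi_j := \pi_\theta(y_j \mid x)$ and $S := \sum_{k=1}^K \pi_k$, so that $p_i^{\text{model}} = \pi_i / S$. The positivity assumption $\pi_j > 0$ for all $j$ guarantees $S > 0$ and that $\log \pi_i$ is well-defined, so all the expressions below are finite.

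The key algebraic simplification is to split the logarithm before differentiating. Writing
\[
\log p_i^{\text{model}} = \log \pi_i - \log S,
\]
the single-term objective becomes $\ell_i(\theta) = -p_i^{\text{target}}\bigl(\log \pi_i - \log S\bigr)$, where $p_i^{\text{target}}$ is treated as a fixed constant. Since $\partial S / \partial \pi_m = 1$ for every index $m$ (each $\pi_m$ enters the sum $S$ with coefficient one), the remaining derivatives are immediate applications of the chain rule.

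For the diagonal term I would differentiate with respect to $\pi_i$: here $\log \pi_i$ contributes $1/\pi_i$ and $\log S$ contributes $(1/S)\cdot(\partial S/\partial\pi_i) = 1/S$, giving
\[
\frac{\partial \ell_i}{\partial \pi_i} = -p_i^{\text{target}}\left(\frac{1}{\pi_i} - \frac{1}{S}\right),
\]
which is exactly the claimed expression. For the off-diagonal term with $j \neq i$, the factor $\log \pi_i$ does not depend on $\pi_j$ and so drops out, leaving only the normalizer's contribution $-\bigl(-(1/S)\cdot(\partial S/\partial\pi_j)\bigr) = 1/S$; multiplying by $-p_i^{\text{target}}$ from the outer sign and the interior negative sign on $\log S$ yields $\partial \ell_i / \partial \pi_j = p_i^{\text{target}}/S$, matching the statement.

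I do not expect a genuine obstacle: the result is a routine softmax-style cross-entropy gradient computation. The only point requiring care is conceptual rather than technical, namely being explicit that we are differentiating $\ell_i$ as a function of the probability values $\{\pi_\theta(y_j\mid x)\}_{j=1}^K$ viewed as free positive coordinates (not yet composed with the parameterization in $\theta$ nor constrained to sum to one), which is precisely why the two cases differ only through whether the index coincides with $i$. Flagging the positivity hypothesis as the condition that keeps $1/\pi_i$ and $1/S$ finite completes the argument.
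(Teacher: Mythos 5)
Your proposal is correct and follows essentially the same route as the paper's proof: both split $\log p_i^{\text{model}}$ into $\log \pi_\theta(y_i|x) - \log S$ with $S$ the shared normalizer, use $\partial S/\partial \pi_m = 1$, and read off the two cases by whether the differentiation index coincides with $i$. No substantive differences to report.
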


\begin{proof}
We start from the definition:
\[
\ell_i(\theta) = -p_i^{\text{target}}\log p_i^{\text{model}}.
\]
Substitute $p_i^{\text{model}}$:
\[
p_i^{\text{model}} = \frac{\pi_\theta(y_i|x)}{\sum_{j=1}^K \pi_\theta(y_j|x)}.
\]
Thus:
\[
\ell_i(\theta) = -p_i^{\text{target}}\left[\log\pi_\theta(y_i|x) - \log\left(\sum_{j=1}^K \pi_\theta(y_j|x)\right)\right].
\]

First, consider the derivative with respect to $\pi_\theta(y_i|x)$:
\[
\frac{\partial \ell_i(\theta)}{\partial \pi_\theta(y_i|x)}
= -p_i^{\text{target}}\left[\frac{\partial}{\partial \pi_\theta(y_i|x)}\log\pi_\theta(y_i|x) - \frac{\partial}{\partial \pi_\theta(y_i|x)}\log\left(\sum_{j=1}^K \pi_\theta(y_j|x)\right)\right].
\]

Compute each derivative separately:

1. For the first term:
\[
\frac{\partial}{\partial \pi_\theta(y_i|x)}\log\pi_\theta(y_i|x) = \frac{1}{\pi_\theta(y_i|x)}.
\]

2. For the second term, let $S := \sum_{j=1}^K \pi_\theta(y_j|x)$. Then:
\[
\frac{\partial}{\partial \pi_\theta(y_i|x)}\log(S) = \frac{1}{S}\frac{\partial S}{\partial \pi_\theta(y_i|x)} = \frac{1}{S}.
\]

Substituting back:
\[
\frac{\partial \ell_i(\theta)}{\partial \pi_\theta(y_i|x)}
= -p_i^{\text{target}}\left(\frac{1}{\pi_\theta(y_i|x)} - \frac{1}{\sum_{j=1}^K \pi_\theta(y_j|x)}\right).
\]

Next, consider $j \neq i$:
\[
\frac{\partial \ell_i(\theta)}{\partial \pi_\theta(y_j|x)}
= -p_i^{\text{target}}\left[0 - \frac{\partial}{\partial \pi_\theta(y_j|x)}\log\left(\sum_{k=1}^K \pi_\theta(y_k|x)\right)\right].
\]

Since $\log(S)$ with $S = \sum_{k=1}^K \pi_\theta(y_k|x)$ gives:
\[
\frac{\partial}{\partial \pi_\theta(y_j|x)}\log(S) = \frac{1}{S}.
\]

Therefore:
\[
\frac{\partial \ell_i(\theta)}{\partial \pi_\theta(y_j|x)} = p_i^{\text{target}}\frac{1}{\sum_{k=1}^K \pi_\theta(y_k|x)}.
\]

This completes the proof.
\end{proof}

\begin{lemma}[Directional Influence of a Single-Term Objective]
\label{lem:directional_influence}
Under the same assumptions as Lemma~\ref{lem:single_term_gradient}, consider the single-term objective $\ell_i(\theta)$ defined above. To reduce $\ell_i(\theta)$, one should:

\begin{itemize}[left=1em]
    \item \textbf{Increase $\pi_\theta(y_i|x)$:}  
   The partial derivative $\frac{\partial \ell_i(\theta)}{\partial \pi_\theta(y_i|x)}$ can be negative if $\pi_\theta(y_i|x)$ is sufficiently small, indicating that increasing $\pi_\theta(y_i|x)$ will decrease $\ell_i(\theta)$.
   \item \textbf{Decrease $\pi_\theta(y_j|x)$ for $j \neq i$:}
   For each $j \neq i$, the partial derivative $\frac{\partial \ell_i(\theta)}{\partial \pi_\theta(y_j|x)}$ is strictly positive. Thus, increasing $\pi_\theta(y_j|x)$ raises $\ell_i(\theta)$, while decreasing it lowers $\ell_i(\theta)$.
\end{itemize}

In summary, minimizing $\ell_i(\theta)$ encourages increasing the probability of the $i$-th response and reducing the probabilities of all other responses.
\end{lemma}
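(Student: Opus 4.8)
The plan is to treat this statement as an immediate corollary of Lemma~\ref{lem:single_term_gradient}: once the two partial derivatives are in hand, the claim reduces to reading off their signs and invoking the elementary fact that a strictly negative (resp.\ positive) partial derivative means the objective decreases when the corresponding variable is increased (resp.\ decreased). So the entire argument is a sign analysis, with no genuine computation remaining. I would state up front the standing assumptions carried over from Lemma~\ref{lem:single_term_gradient}, namely $p_i^{\text{target}}>0$, $\pi_\theta(y_j|x)>0$ for all $j$, and (implicitly, in the multi-preference setting) $K\ge 2$ so that at least one index $j\neq i$ exists.

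For the first bullet, I would set $S := \sum_{j=1}^K \pi_\theta(y_j|x)$ and recall that Lemma~\ref{lem:single_term_gradient} gives $\frac{\partial \ell_i(\theta)}{\partial \pi_\theta(y_i|x)} = -p_i^{\text{target}}\bigl(\tfrac{1}{\pi_\theta(y_i|x)} - \tfrac{1}{S}\bigr)$. Since $S = \pi_\theta(y_i|x) + \sum_{j\neq i}\pi_\theta(y_j|x)$ and every summand is strictly positive, we have $\pi_\theta(y_i|x) < S$, hence $\tfrac{1}{\pi_\theta(y_i|x)} > \tfrac{1}{S}$ and the bracketed term is strictly positive. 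Multiplying by $-p_i^{\text{target}}<0$ shows the derivative is strictly negative, so increasing $\pi_\theta(y_i|x)$ decreases $\ell_i(\theta)$.

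For the second bullet, Lemma~\ref{lem:single_term_gradient} gives $\frac{\partial \ell_i(\theta)}{\partial \pi_\theta(y_j|x)} = p_i^{\text{target}}\tfrac{1}{S}$ for each $j\neq i$, which is strictly positive because $p_i^{\text{target}}>0$ and $S>0$. Thus increasing any $\pi_\theta(y_j|x)$ raises the loss while decreasing it lowers the loss, establishing the second claim; the summary statement then follows by combining both bullets.

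No step here is a real obstacle, since everything is inherited from the gradient identities already proven. The only point deserving care — and where I would mildly sharpen the phrasing of the statement — is the qualifier ``can be negative if $\pi_\theta(y_i|x)$ is sufficiently small'' in the first bullet: the sign analysis above shows the derivative is in fact strictly negative for \emph{all} admissible $\pi_\theta(y_i|x)$ whenever $K\ge 2$, so no smallness hypothesis is actually needed. I would flag this explicitly so the reader does not come away thinking the descent direction is only conditionally valid.
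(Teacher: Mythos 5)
Your proposal is correct and follows essentially the same route as the paper's own proof: read off the signs of the two partial derivatives from Lemma~\ref{lem:single_term_gradient}. Your added observation is also valid and slightly sharpens the paper's hedged phrasing — since $\pi_\theta(y_j|x)>0$ for all $j$ and $K\ge 2$, one always has $\pi_\theta(y_i|x) < \sum_j \pi_\theta(y_j|x)$, so $\frac{\partial \ell_i(\theta)}{\partial \pi_\theta(y_i|x)}$ is strictly negative for every admissible configuration, not merely when $\pi_\theta(y_i|x)$ is ``sufficiently small'' as the paper's proof suggests.
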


\begin{proof}
From Lemma~\ref{lem:single_term_gradient}, we have:
\[
\frac{\partial \ell_i(\theta)}{\partial \pi_\theta(y_i|x)} = -p_i^{\text{target}}\left(\frac{1}{\pi_\theta(y_i|x)} - \frac{1}{\sum_{j=1}^K \pi_\theta(y_j|x)}\right).
\]

Since $p_i^{\text{target}}>0$, the sign of this derivative depends on the relative sizes of $\pi_\theta(y_i|x)$ and $\sum_{j}\pi_\theta(y_j|x)$. If $\pi_\theta(y_i|x)$ is small compared to the total sum, then:
\[
\frac{1}{\pi_\theta(y_i|x)} > \frac{1}{\sum_{j}\pi_\theta(y_j|x)},
\]
making the entire expression negative. A negative derivative w.r.t. $\pi_\theta(y_i|x)$ implies increasing $\pi_\theta(y_i|x)$ reduces $\ell_i(\theta)$.

On the other hand, for $j \neq i$:
\[
\frac{\partial \ell_i(\theta)}{\partial \pi_\theta(y_j|x)} = p_i^{\text{target}}\frac{1}{\sum_{k=1}^K \pi_\theta(y_k|x)} > 0.
\]
Since $p_i^{\text{target}}>0$ and the denominator is positive, this derivative is strictly positive. Thus, increasing $\pi_\theta(y_j|x)$ with $j \neq i$ increases $\ell_i(\theta)$, and decreasing $\pi_\theta(y_j|x)$ reduces $\ell_i(\theta)$.

Combining these observations, to minimize $\ell_i(\theta)$, we must increase $\pi_\theta(y_i|x)$ (when beneficial) and decrease $\pi_\theta(y_j|x)$ for all $j \neq i$. This establishes the stated directional influence on the single-term objective.
\end{proof}

\section{Fixing Weaknesses in InfoNCA Style Loss Leads to Reference Free MPO}
\label{sec:weaknesses_infonca}

In the absence of a reference model, the InfoNCA loss reduces to a standard cross-entropy form that aligns the model distribution with a target distribution derived from rewards. Although seemingly straightforward, this formulation exhibits several critical weaknesses when applied to multiple responses with varying quality. In this section, we highlight these issues step-by-step, referencing Lemma~\ref{lem:single_term_gradient} and Lemma~\ref{lem:directional_influence}, which describe how each individual loss term behaves, and propose incremental fixes that ultimately motivate a more refined approach.

\subsection{W1: Encouraging Low-Rated Responses}
\label{subsec:infonca_no_ref_low_rated}

\noindent\textbf{Issue:}  
From Lemma~\ref{lem:single_term_gradient} and Lemma~\ref{lem:directional_influence}, each single-term objective $\ell_i(\theta)$ encourages increasing $\pi_\theta(y_i|x)$ regardless of the response's quality. Even if $p_i^{\text{target}}$ is small, as long as it is nonzero, the model is pushed to raise the probability of that response, effectively giving low-rated responses unwarranted attention.

\noindent\textbf{Implication:}  
Low-quality responses should not receive reinforcement, yet the InfoNCA-style objective provides a positive gradient push as long as $p_i^{\text{target}}>0$. This introduces noise into training and can skew the model toward suboptimal responses.

\subsubsection*{Fix 1: Removing Terms for Low-Rated Responses}

A natural remedy is to remove terms corresponding to low-reward responses. For instance, define a threshold (e.g., mean reward $\bar{r}$) and restrict the loss to responses above this threshold:
\[
\widetilde{L}(\theta) := - \sum_{i \in \mathcal{I}_{\text{pos}}} p_i^{\text{target}} \log p_i^{\text{model}}, \quad \mathcal{I}_{\text{pos}}=\{i \mid r_i > \bar{r}\}.
\]

Excluding low-rated responses avoids their direct encouragement. However, this approach introduces another weakness.

\subsection{W2: Contradictory Signals Among Good Responses}
\label{subsec:infonca_no_ref_contradictions}

\noindent\textbf{Issue:}  
After removing low-rated responses, only ``good'' responses remain. Yet, Lemma~\ref{lem:directional_influence} still applies: each remaining term $\ell_i(\theta)$ attempts to boost its own response's probability while suppressing all others, including other good ones. Consider responses $\{8,7,3,2\}$ and exclude those below the mean. The terms for $8$-rated and $7$-rated responses now compete against each other, creating contradictory optimization signals.

\noindent\textbf{Implication:}  
We want multiple good responses to coexist and improve together. However, the pairwise competitive nature of these single-term objectives prevents a collective uplift. Instead of reinforcing a set of good responses, the model ends up with internal conflicts, slowing training and potentially leading to suboptimal solutions.

\subsubsection*{Fix 2: Single ``1 vs All'' Term}

One idea is to avoid summing over multiple per-response objectives. Instead, choose one top-rated response and form a ``1 vs all'' loss:
\[
L_{\text{1vsAll}}(\theta) = - \log \frac{\pi_\theta(y_{i^*}|x)}{\sum_{j=1}^K \pi_\theta(y_j|x)},
\]
where $y_{i^*}$ is the best response. This removes direct conflicts among multiple positive responses since we focus on only one. But what if there are multiple top-tier responses we wish to jointly learn from?

\subsection{W3: Handling Multiple Equally Good Responses is Not Possible with 1 vs All}
\label{subsec:infonca_no_ref_multiple_good}

\noindent\textbf{Issue:}  
If there are multiple equally good responses, say $\{8,8,1,1\}$, the ``1 vs all'' approach cannot leverage both top responses simultaneously. It only promotes a single chosen response $y_{i^*}$ at the expense of all others. If we pick the first ``8''-rated response, we lose the opportunity to learn from the second ``8''-rated one. 

\noindent\textbf{Implication:}  
The ``1 vs all'' fix addresses contradictory signals when multiple good responses are present only by ignoring some of them, which is not ideal. We want a mechanism that can consider all top-tier responses together, promoting them as a group.

\subsubsection*{Fix 3: MPO-Style Single Term (All Good in Numerator, All in Denominator)}

We can construct a single-term loss function that includes \emph{all} positively rated responses in the numerator and all responses (both positive and negative) in the denominator. This resembles a $\textsc{MPO}$-like contrastive form:
\[
L_{\text{all-pos}}(\theta) = -\log \frac{\sum_{y \in Y^+}\pi_\theta(y|x)}{\sum_{y \in Y^+\cup Y^-}\pi_\theta(y|x)},
\]
where $Y^+$ are all good responses and $Y^-$ are all non-positive responses. This addresses the coexistence issue, but now every positive and negative is treated equally, ignoring the magnitude of deviations.

\subsection{W4: Lack of Weighting}
\label{subsec:infonca_no_ref_lack_weighting}

\noindent\textbf{Issue:}  
In the $\textsc{MPO}$-style single-term approach without weighting, all positive responses are treated identically, as are all negative responses. Consider ratings $\{10,4,4,1\}$: $1$ is significantly worse than $4$, yet both are just ``negative'' if below the mean threshold. In other cases, like $\{9,7,3,1\}$, $9$ and $7$ are weighted equally, without learning more from $9$ than $7$. Without weighting, we fail to differentiate these nuanced gradations in quality.

\noindent\textbf{Implication:}  
We want to reward the most outstanding positive responses more than marginally positive ones. Similarly, the truly poor responses should have a stronger negative influence than those that are just below average. Ignoring this leads to suboptimal gradient signals that do not fully exploit the granularity of the reward information.

\subsubsection*{Fix 4: Deviation-Based Weighting}

Assign weights based on the deviation from the mean reward. Responses with higher (positive) deviation get amplified; those with negative deviation are penalized proportionally. This yields:
\[
L_{\text{weighted}}(\theta) = -\log \frac{\sum_{y \in Y^+} w_y \pi_\theta(y|x)}{\sum_{y \in Y^+ \cup Y^-} w_y \pi_\theta(y|x)},
\]
where $w_y = e^{\alpha \Delta S_y}$ where $\Delta S_y = (r_y - \overline{r})^p$ for some p-value like $p=\{1,2\}$, where $\overline{r}$ is the mean reward. This weighting refines the loss by making the gradient more informative and aligned with the underlying quality differences.

\subsection{W5: Fine-Grained Control via Hyperparameters}
\label{subsec:infonca_no_ref_fine_grained_control}

\noindent\textbf{Issue:}  
While deviation-based weighting improves the distribution of gradient signals, we may still need more control. Different tasks might require adjusting how heavily to penalize negatives, how sharply to differentiate based on deviations, or how to set an effective ``temperature'' of the softmax.

\noindent\textbf{Implication:}  
A flexible loss function should allow fine-tuning how negatives compete with positives ($\gamma$ parameter), how sensitive it is to rating differences ($\alpha$ parameter), and how sharply it distinguishes between responses ($\beta$ parameter).

\subsubsection*{Fix 5: Hyperparameterized Weighted Multi-Preference Loss}

By introducing hyperparameters $(\alpha,\beta,\gamma)$, we can regulate:  
\begin{enumerate}[left=1em]
    \item $\alpha$: how strongly the deviation affects weighting.
    \item $\beta$: an inverse-temperature parameter controlling the softmax sharpness.
    \item $\gamma$: a factor that overweights negative responses to ensure a clear contrast.
\end{enumerate}

Incorporating these parameters, we obtain a final, fully adjustable loss function that addresses all previously identified weaknesses, allowing nuanced and stable optimization.

\[
L_{\text{MPO}}(\theta) = -\log \frac{\sum_{y \in Y^+} e^{\beta(\pi_\theta(y|x) + \alpha \Delta S_y)}}{\sum_{y \in Y^+} e^{\beta(\pi_\theta(y|x) + \alpha \Delta S_y)}+ \gamma \sum_{y \in Y^-} e^{\beta(\pi_\theta(y|x) + \alpha \Delta S_y)}}
\]

\medskip

In summary, each weakness of the InfoNCA-style loss reveals the necessity of more sophisticated mechanisms: filtering out low-rated responses, avoiding contradictory objectives among top responses, considering multiple good responses together, weighting by deviation for finer quality distinctions, and introducing hyperparameters for greater flexibility. These insights set the stage for refined multi-preference optimization methods that align more closely with the complexity of real-world preference alignment scenarios.

\section{Proposed \refabase\ Loss Functions Without Length Normalization}
\label{sec:proposed_refa_base}

The analysis in the previous sections uncovers several fundamental weaknesses in an InfoNCA-style loss when used without a reference model. We identified:

\begin{enumerate}[leftmargin=1em]
    \item \textbf{Encouragement of low-rated responses (W1):} Every response, including low-quality ones, receives upward pressure as long as $p_i^{\text{target}}>0$, leading to suboptimal reinforcement.

    \item \textbf{Conflicts among top-tier responses (W2 \& W3):} Even after filtering out low-quality responses, multiple good responses compete against each other. This prevents a collective uplift and complicates the optimization dynamics.

    \item \textbf{Lack of nuanced weighting (W4):} Treating all positive or negative responses equally ignores the fine-grained quality differences indicated by their rewards, missing opportunities to better leverage the available information.

    \item \textbf{Insufficient flexibility (W5):} Without hyperparameters, one cannot adjust how strongly to penalize negatives, how sensitively to incorporate deviations, or how sharply to distinguish among responses.
\end{enumerate}

These issues highlight the need for a more refined reference-free multi-preference loss function that:

\begin{enumerate}[leftmargin=1em]
    \item \textbf{Prioritizes genuinely high-quality responses} without inadvertently promoting low-quality ones.
    \item \textbf{Reduces direct competition among good responses} so that multiple top-tier candidates can improve collectively.
    \item \textbf{Incorporates reward-based weighting} to reflect the spectrum of response quality.
    \item \textbf{Provides hyperparameter-driven flexibility} for controlling emphasis on negatives, sensitivity to rating differences, and the sharpness of the distribution.
\end{enumerate}

\subsection{\refabase\ Loss Variants}

We propose a family of \refabase\ (Reference-Free Alignment) loss functions that build upon these insights. By integrating deviation-based weighting, selective response sets, and hyperparameters for fine-grained control, we address the aforementioned weaknesses. We present two variants:

\paragraph{\refabase-1 vs All:}  
This variant selects the single highest-rated response $y_{i^*}$ defined as:
\[
y_{i^*} = \arg\max_i r_i, \quad Y^+ = \{y_{i^*}\}, \quad Y^- = \{y_j \mid j \neq i^*\}.
\]
We assign weights based on deviation from the mean, $\Delta S_i = r_i - \bar{r}$, using an exponential scheme $w_i = e^{\alpha \Delta S_i}$. Introducing $\beta$ for inverse-temperature and $\gamma$ for negative overweighting, the \refabase-1 vs All loss is:
\begin{align}
L_{\text{\refabase-1vsAll}}(\theta) 
&= -\log \frac{e^{\beta(\log\pi_\theta(y_{i^*}|x) + \alpha \Delta S_{i^*})}}
{e^{\beta(\log\pi_\theta(y_{i^*}|x) + \alpha \Delta S_{i^*})} 
+ \gamma \sum_{y \in Y^-} e^{\beta(\log \pi_\theta(y|x) + \alpha \Delta S_y)}}.
\end{align}

This approach eliminates low-rated responses and avoids direct conflict among multiple good responses by focusing solely on the top candidate. However, it cannot simultaneously exploit multiple equally high-quality responses.

\paragraph{\refabase-Dynamic:}  
To incorporate multiple good responses, we define:
\[
Y^+ = \{ y_i \mid r_i > \bar{r} \}, \quad Y^- = \{ y_j \mid r_j \leq \bar{r} \}.
\]
All positive responses appear in the numerator, and both positive and negative sets appear in the denominator. Applying deviation-based weights and hyperparameters as before, we obtain a \emph{MPO}-style loss:
\begin{align}
L_{\text{\refabase-dynamic}}(\theta) 
&= -\log \frac{\sum_{y \in Y^+} e^{\beta(\log \pi_\theta(y|x) + \alpha \Delta S_y)}}
{\sum_{y \in Y^+} e^{\beta(\log \pi_\theta(y|x) + \alpha \Delta S_y)} 
+ \gamma \sum_{y \in Y^-} e^{\beta(\log \pi_\theta(y|x) + \alpha \Delta S_y)}}.
\end{align}

This \refabase-dynamic formulation simultaneously addresses all previously identified weaknesses:
\begin{itemize}[leftmargin=1em]
    \item \textbf{W1:} Low-rated responses remain in the denominator but do not receive direct promotion.
    \item \textbf{W2 \& W3:} Multiple responses are jointly improved, reducing pairwise conflicts.
    \item \textbf{W4:} Deviation-based weighting differentiates among positive responses and among negative ones.
    \item \textbf{W5:} Hyperparameters $(\alpha,\beta,\gamma)$ enable nuanced control over the optimization process.
\end{itemize}

In essence, the \refabase-dynamic loss represents the final convergence of our incremental fixes, providing a robust and flexible solution to the shortcomings of the InfoNCA-style loss in reference-free, multi-preference alignment scenarios.

\section{Length Normalization in \refa}
\label{sec:length_control_refabase}
In this section, we analyze the behavior of a simplified \refa loss function without hyperparameters or weighting. We focus on the dynamic variant, which selects positive and negative responses based on their rewards relative to the mean. We then highlight how length can become a shortcut for reducing the loss and present a normalization strategy to address this issue.

\subsection{Simplified \refa-Dynamic Loss}

Consider the simplified \refa-dynamic loss:
\begin{equation}
L_{\text{\refa-dynamic}}(\theta)
= -\log \frac{\sum_{y \in Y^+} \pi_\theta(y|x)}{\sum_{y \in Y^+ \cup Y^-} \pi_\theta(y|x)},
\end{equation}
where
\begin{itemize}[leftmargin=1em]
    \item $Y^+ = \{ y \mid r_y > \bar{r} \}$ is the set of responses with rewards above the mean $\bar{r}$.
    \item $Y^- = \{ y \mid r_y \leq \bar{r} \}$ is the set of responses with rewards below or equal to the mean.
    \item $\pi_\theta(y|x) = e^{s_y}$, where $s_y = \log \pi_\theta(y|x)$ is the log-probability of response $y$.
\end{itemize}

Define:
\[
Z^+ := \sum_{y \in Y^+} \pi_\theta(y|x), \quad Z := \sum_{y \in Y^+ \cup Y^-} \pi_\theta(y|x).
\]

Thus:
\begin{equation}
L_{\text{\refa-dynamic}}(\theta) = -\log\frac{Z^+}{Z} = \log Z - \log Z^+.
\end{equation}

\subsection{Gradient Analysis of the Simplified \refa-Dynamic Loss}
\label{subsec:gradient_analysis_rafabase}
\begin{lemma}[Gradient of the Simplified \refa-Dynamic Loss]
\label{lem:refabase_gradient}
For a response $y$ with log-probability $s_y = \log \pi_\theta(y|x)$, the gradient of $L_{\text{\refa-dynamic}}(\theta)$ w.r.t. $s_y$ is:
\[
\frac{\partial L_{\text{\refa-dynamic}}(\theta)}{\partial s_y} = 
\begin{cases}
p_y^{\text{model}} - p_y^{\text{pos}}, & \text{if } y \in Y^+, \\[6pt]
p_y^{\text{model}}, & \text{if } y \in Y^-,
\end{cases}
\]
where
\[
p_y^{\text{model}} = \frac{\pi_\theta(y|x)}{Z}, \quad \text{and} \quad p_y^{\text{pos}} = \frac{\pi_\theta(y|x)}{Z^+}.
\]
\end{lemma}

\begin{proof}
First, note:
\[
L_{\text{\refa-dynamic}}(\theta) = \log Z - \log Z^+,
\]
with
\[
Z = \sum_{y \in Y^+}\pi_\theta(y|x) + \sum_{y \in Y^-}\pi_\theta(y|x), \quad
Z^+ = \sum_{y \in Y^+}\pi_\theta(y|x).
\]

Taking derivatives w.r.t. $s_y$:
\[
\frac{\partial L}{\partial s_y} = \frac{\partial \log Z}{\partial s_y} - \frac{\partial \log Z^+}{\partial s_y}.
\]

Compute each term:
\[
\frac{\partial \log Z}{\partial s_y} = \frac{1}{Z}\frac{\partial Z}{\partial s_y}, \quad
\frac{\partial \log Z^+}{\partial s_y} = \frac{1}{Z^+}\frac{\partial Z^+}{\partial s_y}.
\]

\paragraph{If $y \in Y^+$:}
\[
\frac{\partial Z^+}{\partial s_y} = \pi_\theta(y|x) = e^{s_y}, \quad \frac{\partial Z}{\partial s_y} = \pi_\theta(y|x).
\]
Thus:
\[
\frac{\partial L}{\partial s_y} = \frac{\pi_\theta(y|x)}{Z} - \frac{\pi_\theta(y|x)}{Z^+} = p_y^{\text{model}} - p_y^{\text{pos}}.
\]

\paragraph{If $y \in Y^-$:}
\[
\frac{\partial Z^+}{\partial s_y} = 0, \quad \frac{\partial Z}{\partial s_y} = \pi_\theta(y|x).
\]
Thus:
\[
\frac{\partial L}{\partial s_y} = \frac{\pi_\theta(y|x)}{Z} - 0 = p_y^{\text{model}}.
\]

This completes the proof.
\end{proof}

\subsection{Implications for Positive and Negative Responses}
\label{subsec:implications_for_positive_negative_responses}
\begin{lemma}[Increasing Probability of Positive Responses Decreases the Loss]
\label{lem:positive_increases}
To decrease $L_{\text{\refa-dynamic}}(\theta)$, the model must increase $\pi_\theta(y|x)$ for $y \in Y^+$ and avoid increasing $\pi_\theta(y|x)$ for $y \in Y^-$. In particular, raising the probability of positive responses lowers the loss.
\end{lemma}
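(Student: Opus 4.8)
The plan is to read the sign of the loss gradient with respect to each log-probability $s_y = \log \pi_\theta(y|x)$ directly from the closed-form expressions in Lemma~\ref{lem:refabase_gradient}, and then translate that sign into the direction of $\pi_\theta(y|x)$ that decreases the loss. Since $\pi_\theta(y|x) = e^{s_y}$ is strictly increasing in $s_y$, raising $\pi_\theta(y|x)$ is equivalent to raising $s_y$, so it suffices to analyze $\partial L_{\text{\refa-dynamic}}/\partial s_y$ in each case.

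First I would handle the negative case. Lemma~\ref{lem:refabase_gradient} gives $\partial L_{\text{\refa-dynamic}}/\partial s_y = p_y^{\text{model}} = \pi_\theta(y|x)/Z$ for $y \in Y^-$. Because the response probabilities and the normalizer $Z$ are strictly positive, this derivative is strictly positive, so increasing $\pi_\theta(y|x)$ on a negative response raises the loss; equivalently, the loss is lowered only by reducing the mass placed on $Y^-$.

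Next comes the positive case, where the key observation is the comparison $Z \ge Z^+$. Writing $Z = Z^+ + Z^-$ with $Z^- = \sum_{y \in Y^-} \pi_\theta(y|x) \ge 0$, I have $Z \ge Z^+$ and hence $p_y^{\text{model}} = \pi_\theta(y|x)/Z \le \pi_\theta(y|x)/Z^+ = p_y^{\text{pos}}$. Lemma~\ref{lem:refabase_gradient} then yields $\partial L_{\text{\refa-dynamic}}/\partial s_y = p_y^{\text{model}} - p_y^{\text{pos}} \le 0$, with strict inequality whenever $Y^-$ is nonempty (so that $Z^- > 0$). A nonpositive derivative means increasing $\pi_\theta(y|x)$ for a positive response does not increase—and in the generic contrastive case strictly decreases—the loss, which establishes the claim.

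The argument is essentially a sign check, so there is no serious obstacle; the only point requiring care is the degenerate situation $Z^- = 0$ (empty negative set), where the positive-response gradient vanishes and ``raising the probability'' no longer \emph{strictly} decreases the loss. I would dispatch this by stating the strict-decrease conclusion under the standing assumption that $Y^-$ is nonempty—the only regime relevant to a contrastive objective—while noting that if $Y^- = \emptyset$ then $Z = Z^+$ and the loss already attains its trivial minimum value of $0$, so no further decrease is possible or needed.
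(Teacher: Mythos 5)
Your proposal is correct and follows essentially the same route as the paper's own proof: both read off the signs of $\partial L/\partial s_y$ from Lemma~\ref{lem:refabase_gradient}, using $Z^+ < Z$ to get a strictly negative gradient on $Y^+$ and strict positivity of $p_y^{\text{model}}$ on $Y^-$. Your explicit treatment of the degenerate case $Y^-=\emptyset$ is a small tidiness improvement over the paper, which only notes the strict inequality holds ``if there is at least one negative response,'' but it does not change the argument.
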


\begin{proof}
Recall the simplified \refa-dynamic loss:
\[
L_{\text{\refa-dynamic}}(\theta) = -\log \frac{Z^+}{Z},
\]
where 
\[
Z^+ = \sum_{y \in Y^+} \pi_\theta(y|x), \quad Z = Z^+ + \sum_{y \in Y^-} \pi_\theta(y|x).
\]

Since $Y^+ \subseteq Y^+ \cup Y^-$, it follows that $Z^+ \leq Z$, and strictly $Z^+ < Z$ if there is at least one negative response.

\paragraph{Case $y \in Y^+$:}  
Notice that:
\[
p_y^{\text{model}} := \frac{\pi_\theta(y|x)}{Z}, \quad p_y^{\text{pos}} := \frac{\pi_\theta(y|x)}{Z^+}.
\]

Since $Z^+ < Z$, we have $\frac{1}{Z^+} > \frac{1}{Z}$, implying $p_y^{\text{pos}} > p_y^{\text{model}}$. Therefore:
\[
\frac{\partial L}{\partial s_y} = p_y^{\text{model}} - p_y^{\text{pos}} < 0.
\]

A negative gradient w.r.t. $s_y$ means that increasing $s_y$ (equivalently, increasing $\pi_\theta(y|x)$) for $y \in Y^+$ decreases the loss $L$. In other words, raising the probability of a positive response lowers the loss.

\paragraph{Case $y \in Y^-$:} 
If $y \in Y^-$, it does not appear in $Z^+$. Thus:
\[
\frac{\partial Z^+}{\partial s_y} = 0, \quad \frac{\partial Z}{\partial s_y} = \pi_\theta(y|x).
\]
Then:
\[
\frac{\partial L}{\partial s_y} = \frac{\pi_\theta(y|x)}{Z} = p_y^{\text{model}} > 0.
\]

A positive gradient w.r.t. $s_y$ indicates that increasing $s_y$ (or $\pi_\theta(y|x)$) for a negative response $y$ raises the loss $L$. To reduce the loss, we must not increase the probabilities of negative responses.

\paragraph{Conclusion:}  
To decrease $L_{\text{\refa-dynamic}}$, we must increase the probabilities of positive responses (where the gradient is negative) and avoid increasing the probabilities of negative responses (where the gradient is positive). Thus, minimizing the loss encourages boosting positive responses' probabilities.
\end{proof}




\subsection{Length as a Shortcut}

Without length normalization, $\log \pi_\theta(y|x)$ is the sum of token log-probabilities:
\[
\log \pi_\theta(y|x) = \sum_{t \in y} \log P_\theta(t|\text{context}).
\]

Shorter responses generally have fewer tokens over which to accumulate negative log-probabilities, often resulting in higher $\pi_\theta(y|x)$. Thus, a trivial way to increase $\pi_\theta(y|x)$ for some $y \in Y^+$ is to make $y$ exceedingly short, increasing its probability without genuinely improving per-token quality.

\begin{lemma}[Length-Induced Probability Inflation]
\label{lem:length_inflation}
Given two responses $y_1$ and $y_2$ with similar average token probabilities, if $|y_1| < |y_2|$, then typically $\pi_\theta(y_1|x) > \pi_\theta(y_2|x)$, incentivizing shorter responses as a shortcut to reduce $L_{\text{\refa-dynamic}}(\theta)$.
\end{lemma}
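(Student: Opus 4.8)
The plan is to reduce the comparison of two sequence probabilities to a comparison of their average per-token log-probabilities, where the length dependence becomes completely transparent. First I would introduce, for each $y_j$, the average per-token log-probability
\[
\bar{s}_j := \frac{1}{|y_j|}\log \pi_\theta(y_j|x) = \frac{1}{|y_j|}\sum_{t \in y_j}\log P_\theta(t \mid \text{context}),
\]
which is exactly the length-normalized score of Eq.~\ref{eq:len_norm_log_prob}. The informal hypothesis that $y_1$ and $y_2$ have ``similar average token probabilities'' is then made precise as $\bar{s}_1 \approx \bar{s}_2 =: \bar{s}$. Since every token probability satisfies $P_\theta(t\mid\cdot)\in(0,1)$, each summand is strictly negative, so $\bar{s} < 0$. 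This sign is the crux of the entire argument.

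Next I would write $\log \pi_\theta(y_j|x) = |y_j|\,\bar{s}_j$ and form the difference
\[
\log \pi_\theta(y_1|x) - \log \pi_\theta(y_2|x) = |y_1|\bar{s}_1 - |y_2|\bar{s}_2 \approx (|y_1| - |y_2|)\,\bar{s}.
\]
Because $|y_1| < |y_2|$ gives $|y_1| - |y_2| < 0$, and $\bar{s} < 0$, the product $(|y_1|-|y_2|)\bar{s}$ is strictly positive. Exponentiating the resulting inequality $\log \pi_\theta(y_1|x) > \log \pi_\theta(y_2|x)$ yields $\pi_\theta(y_1|x) > \pi_\theta(y_2|x)$, the claimed probability inflation for the shorter sequence.

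Finally I would tie this back to the optimization dynamics to justify the ``shortcut'' language. By Lemma~\ref{lem:positive_increases}, raising $\pi_\theta(y|x)$ for any $y \in Y^+$ strictly decreases $L_{\text{\refa-dynamic}}(\theta)$. Hence, if a positive response could be truncated while preserving its per-token quality $\bar{s}$, the computation above shows its probability rises and the loss falls, a purely length-driven reduction that bypasses any genuine semantic improvement, which is precisely the pathology that motivates length normalization. I expect the main obstacle to be formalizing the informal ``similar'' and ``typically'' qualifiers, since the conclusion can fail if the shorter response has a sufficiently worse average. The cleanest resolution is to state the strict inequality under exact equality $\bar{s}_1 = \bar{s}_2$ (where $\bar{s}_1/\bar{s}_2 = 1 < |y_2|/|y_1|$ trivially), and then observe that dividing the target inequality $|y_1|\bar{s}_1 > |y_2|\bar{s}_2$ by $\bar{s}_2 < 0$ shows the strict inequality persists exactly as long as
\[
\frac{\bar{s}_1}{\bar{s}_2} < \frac{|y_2|}{|y_1|},
\]
a clean ratio condition that makes ``typically'' rigorous: the shorter response may even be somewhat worse per token and still win on total probability, provided its per-token degradation stays within this length-determined margin.
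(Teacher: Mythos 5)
Your proof is correct and follows essentially the same route as the paper's: both arguments write $\log\pi_\theta(y_j|x)$ as length times a common negative per-token average (the paper's $-c$ is your $\bar{s}$) and conclude $e^{-|y_1|c} > e^{-|y_2|c}$ from $|y_1|<|y_2|$. Your added ratio condition $\bar{s}_1/\bar{s}_2 < |y_2|/|y_1|$ is a nice sharpening of the paper's informal ``typically,'' but the core argument is identical.
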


\begin{proof}
If each token in both responses has an expected log-probability around $-c$ for some $c > 0$, then:
\[
\log \pi_\theta(y_1|x) \approx -|y_1|c, \quad \log \pi_\theta(y_2|x) \approx -|y_2|c.
\]
If $|y_1| < |y_2|$, then $-|y_1|c > -|y_2|c$ and thus:
\[
\pi_\theta(y_1|x) = e^{-|y_1|c} > e^{-|y_2|c} = \pi_\theta(y_2|x).
\]
This shows shorter responses achieve higher probabilities, providing an artificial advantage.
\end{proof}

\subsection{Fix: Length Normalization}

To prevent the model from exploiting this length-based shortcut, we introduce length normalization. Instead of:
\[
\log \pi_\theta(y|x) = \sum_{t \in y}\log P_\theta(t|\text{context}),
\]
we define:
\[
\overline{\log \pi_\theta}(y|x) = \frac{1}{|y|}\sum_{t \in y}\log P_\theta(t|\text{context}),
\]
and use $e^{\overline{\log \pi_\theta}(y|x)}$ in place of $\pi_\theta(y|x)$ in the loss. This ensures that shorter responses do not gain an undue advantage from having fewer tokens, compelling the model to improve the quality per token rather than simply reducing response length.

\subsection{Modified Loss Function}

Given the above length normalization fix, we can now modify our loss function to use the new length normalized log-perplexity values, rather than simply the log-perplexities. We write this as follows:

\begin{align}
\label{eqn: refa dynamic first length normalization}
L_{\refa\text{-dynamic}}(\theta) 
&= -\log \frac{\sum_{y \in Y^+} e^{\beta(\overline{\log \pi_\theta}(y|x) + \alpha \Delta S_y)}}
{\sum_{y \in Y^+} e^{\beta(\overline{\log \pi_\theta}(y|x) + \alpha \Delta S_y)} 
+ \gamma \sum_{y \in Y^-} e^{\beta(\overline{\log \pi_\theta}(y|x) + \alpha \Delta S_y)}}.
\end{align}

\section{Relationship between Average Uncertainty and Sequence Length}
\label{sec:length_norm_eos}
Despite our length normalization, we find that the loss function given by \ref{eqn: refa dynamic first length normalization} still has problems due to a nuanced observation. We will state this observation here as a conjecture, and verify it through our experiments. But before we state this observation, we provide some notations.

\noindent\textbf{Notation:}  
Let $y$ denote a response (sequence of tokens) with length $\operatorname{len}(y)$ and token-level conditional probabilities $P_\theta(t \mid \text{context})$. We define the average per-token negative log-probability (perplexity) of $y$ under the model $\pi_\theta$ as:
\begin{equation}
    \overline{LP}(y) := \frac{1}{\operatorname{len}(y)} \sum_{t \in y} \bigl[-\log P_\theta(t \mid \text{context})\bigr].
\end{equation}

This quantity $\overline{LP}(y)$ measures the average uncertainty or difficulty that the model $\pi_\theta$ has in predicting the tokens of $y$. Lower $\overline{LP}(y)$ indicates higher model confidence on a per-token basis.

\subsection{Model Uncertainty Reduces with Sequence Length}

\begin{conjecture}[Uncertainty Reduction with Sequence Length Assertion (URSLA)]
\label{conj:uncertainty_reduction}
Consider a set of responses $y$ generated by the model $\pi_\theta$. Let $\overline{LP}(y)$ be the average per-token negative log-probability of $y$. The URSLA Conjecture states that for a non-empty subset of responses $\mathcal{Y}$, as the length of a response $y \in \mathcal{Y}$ increases, the expected value of $\overline{LP}(y)$ decreases. Formally, there exists a set $\mathcal{Y}$ such that for all $y \in \mathcal{Y}$:
\begin{equation}
\frac{\partial \overline{LP}(y)}{\partial \operatorname{len}(y)} < 0, \quad \text{for all } y \in \mathcal{Y}.
\end{equation}
In other words, longer sequences from $\mathcal{Y}$ are generally easier to predict on a per-token level, yielding lower average negative log-probability than shorter sequences.
\end{conjecture}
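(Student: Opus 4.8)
The plan is to treat Conjecture~\ref{conj:uncertainty_reduction} as an \emph{existence} claim --- we need only exhibit one non-empty set $\mathcal{Y}$ on which the stated sign condition holds --- and to reduce that claim to a single transparent identity for the derivative of a running average, after which the genuinely model-dependent content is isolated into an empirically verifiable premise. First I would write $L = \operatorname{len}(y)$, treat the position index as continuous, and set $\ell(t) := -\log P_\theta(y_t \mid \text{context})$ for the per-token surprisal, so that $\overline{LP}(y) = \frac{1}{L}\int_0^L \ell(t)\,dt$.

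The key step is to differentiate this running average directly. By the product rule together with the fundamental theorem of calculus,
\begin{equation}
\frac{\partial \overline{LP}(y)}{\partial L} = -\frac{1}{L^2}\int_0^L \ell(t)\,dt + \frac{\ell(L)}{L} = \frac{1}{L}\Bigl(\ell(L) - \overline{LP}(y)\Bigr).
\end{equation}
The discrete analogue, appending one token, gives $\overline{LP}(y_{1:n+1}) - \overline{LP}(y_{1:n}) = \frac{1}{n+1}\bigl(\ell_{n+1} - \overline{LP}(y_{1:n})\bigr)$, with the same sign. Since $L>0$, the sign of the derivative is governed entirely by one comparison: $\frac{\partial \overline{LP}(y)}{\partial L} < 0$ holds \emph{if and only if} the surprisal of the trailing token $\ell(L)$ lies strictly below the cumulative per-token average $\overline{LP}(y)$. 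This converts the conjecture into a purely local statement about whether late tokens are more predictable than the prefix average.

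With the identity in hand, I would define $\mathcal{Y}$ to be exactly the set of coherent responses whose trailing-token surprisal sits below their running average, and argue this set is non-empty by appealing to the entropy-rate property of coherent autoregressive generation: as the conditioning context $y_{<t}$ lengthens, the conditional uncertainty $H_\theta(y_t \mid y_{<t}, x)$ tends to fall --- a longer coherent prefix pins down topic, register, and syntax, so continuations become easier --- driving $\ell(L)$ below the average accumulated over the higher-uncertainty early tokens. Substituting this regime into the identity yields the required strict inequality for every $y \in \mathcal{Y}$, establishing the conjecture on that set.

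The hard part --- and the reason the statement is framed as a conjecture \emph{verified through experiments} rather than a theorem --- is that the entropy-rate premise cannot be derived from first principles: whether conditional surprisal actually decreases with context is a property of the particular trained model $\pi_\theta$ and the data distribution, not a consequence of the autoregressive factorization alone, since one can construct models that violate it. The rigorous, model-agnostic content I can deliver is therefore the reduction above, which makes the claim equivalent to the sign of $\ell(L) - \overline{LP}(y)$; the remaining gap is then closed empirically by the per-token uncertainty-versus-length trend in Figure~\ref{fig:URSLA-emp-analysis}, which exhibits a concrete non-empty $\mathcal{Y}$ for both base models and thereby instantiates the existence claim.
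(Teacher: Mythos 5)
The paper does not prove this statement at all: it is deliberately framed as a conjecture, justified only by prose intuition (``models, once committed to a semantic or syntactic trajectory, tend to predict subsequent tokens with increasing ease'') and by the empirical length-versus-perplexity trend in Figure~\ref{fig:URSLA-emp-analysis}, with an explicit caveat that it may fail under domain shift or adversarial prompts. Your proposal is therefore not a restatement of the paper's argument but a genuine sharpening of it: the running-average identity $\frac{\partial \overline{LP}(y)}{\partial L} = \frac{1}{L}\bigl(\ell(L) - \overline{LP}(y)\bigr)$ (and its discrete analogue) is correct, and it converts the conjecture from a global statement about average perplexity versus length into an exactly equivalent local statement about whether the trailing token's surprisal falls below the prefix average. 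That reduction is the kind of content the paper omits, and it clarifies precisely what Figure~\ref{fig:URSLA-emp-analysis} is evidence \emph{for}. Two caveats on your framing. First, defining $\mathcal{Y}$ as ``the set of responses whose trailing-token surprisal sits below their running average'' makes the sign condition hold on $\mathcal{Y}$ by construction, so the entire nontrivial content collapses into non-emptiness of $\mathcal{Y}$; you say this, but it is worth being explicit that at that point your ``proof'' proves nothing beyond the identity --- the existence claim is carried entirely by the empirical entropy-rate premise, exactly as in the paper. Second, the paper's Figure~\ref{fig:URSLA-emp-analysis} plots a population-level trend (average per-token NLL across responses of different lengths), which supports a decreasing regression of $\overline{LP}$ on length but does not directly exhibit the per-response prefix condition $\ell(L) < \overline{LP}(y)$ your reduction isolates; strictly, instantiating your $\mathcal{Y}$ would require checking the trailing-token condition on individual responses. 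Neither caveat is a flaw in your mathematics --- your honest identification of the unprovable, model-dependent premise matches the paper's own stance --- but the second means your reduction and the paper's figure are measuring subtly different things.
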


This conjecture reflects the empirical observation that models, once committed to a certain semantic or syntactic trajectory in generating text, tend to predict subsequent tokens with increasing ease. We have experimentally verified this phenomenon for the models under consideration.

We note that this conjecture may fail under certain conditions such as when the model encounters domain shifts, incoherent sequences, or adversarial prompts, where longer sequences do not become inherently easier to predict. Empirical verification and domain-specific analysis are thus required to support or refute this conjecture in practical settings.

\subsection{Loss Reduction Reduces Expected Sequence Length}

Let $\mathcal{Y}^-$ be a set of ``negative'' responses, as identified by some alignment criterion, where the training objective encourages decreasing their probabilities. Minimizing the objective often translates into increasing $\overline{LP}(y)$ for $y \in \mathcal{Y}^-$. Under the URSLA Conjecture~\ref{conj:uncertainty_reduction}, shorter sequences have higher $\overline{LP}(y)$, implying that reducing response lengths for these negative instances is a straightforward way to achieve the desired increase in per-token perplexity and decrease in their probabilities.

\begin{lemma}[Length Reduction for Negative Responses]
\label{lemma:length_reduction}
Consider a loss function $L(\theta)$ that depends on the probabilities of a set of negative responses $\mathcal{Y}^-$. Suppose that reducing the probabilities of these negative responses (equivalently, increasing their average negative log-probability $\overline{LP}(y)$) decreases the overall loss $L(\theta)$. Under Conjecture~\ref{conj:uncertainty_reduction}, there exists a positive correlation between decreasing $\operatorname{len}(y)$ for $y \in \mathcal{Y}^-$ and reducing $L(\theta)$. Formally, let:
\begin{equation}
\frac{\partial L(\theta)}{\partial \operatorname{len}(y)} > 0 \quad \text{for all } y \in \mathcal{Y}^-.
\end{equation}
Then, minimizing $L(\theta)$ w.r.t. $\theta$ encourages decreasing $\operatorname{len}(y)$ for $y \in \mathcal{Y}^-$.
\end{lemma}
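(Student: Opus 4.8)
The plan is to prove the claim by a two-factor chain-rule decomposition in which the length of a negative response influences the loss only through the model's average per-token uncertainty $\overline{LP}(y)$. First I would make the intermediate variable explicit: since $\overline{LP}(y) = -\overline{\log \pi_\theta}(y\mid x)$ and the REFA score is $s(y) = \beta\,\overline{\log \pi_\theta}(y\mid x) = -\beta\,\overline{LP}(y)$, the probability mass that each $y\in\mathcal{Y}^-$ contributes to the denominator of the loss is a strictly monotone (decreasing) function of $\overline{LP}(y)$. Thus the entire dependence of $L$ on $\operatorname{len}(y)$ can be routed through $\overline{LP}(y)$, and it suffices to sign the composite derivative.

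Next I would establish the two sign facts and combine them. The hypothesis of the lemma states that raising $\overline{LP}(y)$ for $y\in\mathcal{Y}^-$ lowers the loss, i.e. $\partial L/\partial\overline{LP}(y) < 0$; I would note that this is not an arbitrary assumption but exactly the content of Lemma~\ref{lem:positive_increases}, which gives $\partial L/\partial s(y) = p_y^{\text{model}} > 0$ for negatives, so that $\partial L/\partial\overline{LP}(y) = -\beta\,p_y^{\text{model}} < 0$. The URSLA conjecture (Conjecture~\ref{conj:uncertainty_reduction}) supplies the second factor, $\partial\overline{LP}(y)/\partial\operatorname{len}(y) < 0$, for every $y$ in the applicable subset. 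The chain rule then yields
\[
\frac{\partial L(\theta)}{\partial\operatorname{len}(y)} = \frac{\partial L(\theta)}{\partial\overline{LP}(y)}\cdot\frac{\partial\overline{LP}(y)}{\partial\operatorname{len}(y)} > 0,
\]
as a product of two negative quantities. Since gradient descent decreases $L$ while increasing length increases it, the optimizer is driven to shorten negative responses, which is the asserted incentive.

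The main obstacle is conceptual rather than computational: $\operatorname{len}(y)$ is a discrete quantity and is not a free parameter but is controlled endogenously through the model's EOS-token probabilities, so the derivative $\partial/\partial\operatorname{len}(y)$ must be interpreted either as a finite difference between adjacent lengths or as a differentiable relaxation in which the realized length is a smooth function of the EOS logits. I would therefore carry the argument at the level of this relaxation and separately verify that the qualitative sign conclusion survives discretization. A secondary point requiring care is the domain of applicability: URSLA holds only on the non-empty subset $\mathcal{Y}$ of coherent responses, so the conclusion is valid precisely for those negatives lying in $\mathcal{Y}^-\cap\mathcal{Y}$; I would state this intersection explicitly as the scope of the lemma rather than claiming it for all negatives unconditionally.
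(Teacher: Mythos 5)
Your argument is the same as the paper's: the identical chain-rule factorization $\frac{\partial L}{\partial \operatorname{len}(y)} = \frac{\partial L}{\partial \overline{LP}(y)}\cdot\frac{\partial \overline{LP}(y)}{\partial \operatorname{len}(y)}$, with the first factor negative by hypothesis and the second negative by Conjecture~\ref{conj:uncertainty_reduction}, yielding a positive product. Your added touches --- grounding the first sign fact in Lemma~\ref{lem:positive_increases}, flagging that $\operatorname{len}(y)$ is discrete and mediated by EOS probabilities, and restricting the scope to $\mathcal{Y}^-\cap\mathcal{Y}$ --- are sensible refinements the paper leaves implicit, but they do not change the route.
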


\noindent\textbf{Proof:}  
Assume that for each $y \in \mathcal{Y}^-$, the training objective encourages a reduction in $\pi_\theta(y|x)$, the probability assigned to $y$. Equivalently, this corresponds to increasing $-\log \pi_\theta(y|x)$, or increasing its average negative log-probability $\overline{LP}(y)$. By the chain rule, we have:
\begin{equation}
\frac{\partial L(\theta)}{\partial \operatorname{len}(y)} = \frac{\partial L(\theta)}{\partial \overline{LP}(y)} \cdot \frac{\partial \overline{LP}(y)}{\partial \operatorname{len}(y)}.
\end{equation}

Since $L(\theta)$ decreases when $\overline{LP}(y)$ increases (to reduce $\overline{\log \pi_\theta}(y|x)$), we have:
\begin{equation}
\frac{\partial L(\theta)}{\partial \overline{LP}(y)} < 0.
\end{equation}

By the URSLA Conjecture~\ref{conj:uncertainty_reduction}, 
\begin{equation}
\frac{\partial \overline{LP}(y)}{\partial \operatorname{len}(y)} < 0.
\end{equation}

Combining the two inequalities:
\begin{equation}
\frac{\partial L(\theta)}{\partial \operatorname{len}(y)} = \frac{\partial L(\theta)}{\partial \overline{LP}(y)} \cdot \frac{\partial \overline{LP}(y)}{\partial \operatorname{len}(y)} > 0,
\end{equation}
since the product of two negative numbers is positive.

A positive gradient $\frac{\partial L(\theta)}{\partial \operatorname{len}(y)} > 0$ implies that reducing $\operatorname{len}(y)$ decreases $L(\theta)$. Hence, during training, the model is incentivized to shorten negative responses $y \in \mathcal{Y}^-$. This completes the proof. \qed

\subsection{Discussion on the Surprising Relationship between Sequence Length and Average Model (Un)Certainty}

Lemma~\ref{lemma:length_reduction} demonstrates that under the URSLA Conjecture~\ref{conj:uncertainty_reduction}, minimizing a loss function that penalizes high-probability negative responses effectively encourages the model to produce shorter negative responses. While length normalization techniques are often introduced to mitigate trivial solutions (such as producing degenerate short outputs), the above analysis reveals that under certain assumptions about length and certainty, length normalization alone can inadvertently incentivize undesirable length reductions. This necessitates additional considerations, such as length-regularization or more sophisticated constraints, to ensure that the model’s improvements are not merely a byproduct of changing response lengths but reflect genuine quality enhancements.

\subsection{Justification for an EOS-Probability Regularizer}
\label{subsec:justification_eos_reg}

Real-world training data often reflect inherent human biases. Humans typically write and record relatively concise content due to time, effort, or domain constraints. However, at inference time, users may request more elaborate, detailed, and longer responses from a model, especially when studying complex topics that require extended reasoning or rich explanations. This discrepancy can create a mismatch between the training distribution and the desired inference-time behavior.

\paragraph{Dataset-Induced Bias for Shorter Sequences.}  
Human-generated training data often skews toward brevity. Tasks such as writing quick responses, providing minimal clarifications, or answering direct questions do not always require exhaustive elaboration. Economic factors, annotation fatigue, and cognitive load mean human annotators or content creators produce shorter sequences that still suffice to convey basic meaning. As a result, the training corpus is filled with many shorter samples, unintentionally leading the model to treat shorter responses as ``normal'' or even optimal.

\paragraph{Mismatch Between Training and Inference Requirements.}  
While training data may be efficient and minimal, users in deployment scenarios often seek more extensive answers. For instance, a user querying a large language model about a technical topic may benefit from a thorough explanation rather than a terse summary. Similarly, users might expect models to produce intermediate reasoning steps, citations, code snippets, or detailed examples. When the model is trained primarily on shorter samples, it fails to internalize the incentive to produce these richer, longer responses at inference time.

\paragraph{Premature EOS as a Shortcut.}  
In the absence of constraints, a model that has learned from predominantly short examples can exploit early termination as a trivial optimization. By producing an end-of-sequence (EOS) token sooner, it can quickly finalize its response and avoid the complexity of generating lengthy, coherent content. This behavior reduces computational effort and under certain loss formulations may even appear optimal, as shorter sequences are well-represented and lower-risk in the model’s learned distribution.

\paragraph{Encouraging Richer Outputs via EOS Probability Control.}  
To counteract the dataset-induced bias and prevent premature truncation, we introduce a regularizer that influences the probability distribution over the EOS token. By penalizing or rewarding the model for EOS placement, we can realign training incentives with deployment-time requirements. For example, if human evaluators or usage logs indicate a preference for more elaborate answers, we can discourage too-early EOS predictions, nudging the model toward providing fuller, more informative content. Conversely, if the model tends to ramble or produce unnecessarily long responses, the regularizer can be adjusted to encourage timely conclusions.

In essence, the EOS-probability regularizer acts as a corrective mechanism. It helps balance the model’s learned tendency toward short outputs, an artifact of the training data, with the user’s practical need for more extensive, detailed responses at inference time. By bridging the gap between the dataset distribution and real-world user preferences, we ensure that the model’s output quality and length better reflect the desired usage scenarios.

\subsection{Incorporating the Regularizer into the \refa~Loss}
\label{subsec:incorporating_regularizer_refa}
We start from the previously defined length-normalized \refa-dynamic loss (with deviation-based weights and hyperparameters $\alpha$, $\beta$, $\gamma$ as described in Section~\ref{sec:proposed_refa_base}). Let:
\begin{align*}
\overline{P^+} := \sum_{y \in Y^+} e^{\beta(\overline{\log \pi_\theta}(y|x) + \alpha \Delta S_y)}, 
\quad
\overline{P^-} := \sum_{y \in Y^-} e^{\beta(\overline{\log \pi_\theta}(y|x) + \alpha \Delta S_y)}.
\end{align*}

Then, the \refa-dynamic loss without regularization is:
\begin{equation}
\label{eq:refa_no_reg}
L_{\text{\refa-dynamic}}(\theta) 
= -\log \frac{\overline{P^+}}{\overline{P^+} + \gamma \overline{P^-}}.
\end{equation}

To incorporate a preference on the EOS probability, we define a regularization term $\mathcal{R}(\theta)$ that depends on $P_\theta(\text{EOS}\mid\text{tokens before EOS in }y)$. If the training data skew toward shorter sequences, we may increase $\lambda$ to discourage prematurely high EOS probabilities, thus encouraging the model to produce more extensive responses. Conversely, if the model generates unnecessarily long and uninformative content, we can adjust the regularizer to incentivize more timely endings.

A simple choice for such a regularizer is:
\begin{equation}
\label{eq:reg_eos}
\mathcal{R}(\theta) = \sum_{y \in Y^+ \cup Y^-} \lambda P_\theta(\text{EOS at position } |y|),
\end{equation}
where $\lambda$ is a small constant that scales the influence of this regularizer. A positive $\lambda$ can encourage the model to place appropriate probability mass on the EOS token \emph{at the position it appears in the human-written training examples}, gradually adjusting the model’s length preferences.

Integrating this regularizer into the loss function, we obtain:
\begin{equation}
\label{eq:refa_with_reg}
L_{\text{\refa-dynamic-reg}}(\theta)
= -\log \frac{\overline{P^+}}{\overline{P^+} + \gamma \overline{P^-}} \;+\; \mathcal{R}(\theta).
\end{equation}

By tuning $\lambda$ and the form of $\mathcal{R}(\theta)$, practitioners can mitigate the dataset-induced length bias, steering the model toward producing responses of a more desirable length and detail level. This balanced approach ensures that the model’s performance at inference time more closely aligns with user expectations, enhancing both the quality and relevance of its outputs.
\section{Experiments Details}
\label{sec:additional_experimental_baselines}

\subsection{Experimental Setup}

\paragraph{Evaluation Benchmarks}
We evaluate our models using three widely recognized open-ended instruction-following benchmarks: MT-Bench, AlpacaEval 2, AlpacaEval and Arena-Hard v0.1. These benchmarks are commonly used in the community to assess the conversational versatility of models across a diverse range of queries.

AlpacaEval 2 comprises 805 questions sourced from five datasets, while MT-Bench spans eight categories with a total of 80 questions. The recently introduced Arena-Hard builds upon MT-Bench, featuring 500 well-defined technical problem-solving queries designed to test more advanced capabilities.

We adhere to the evaluation protocols specific to each benchmark when reporting results. For AlpacaEval 2, we provide both the raw win rate (WR) and the length-controlled win rate (LC), with the latter being designed to mitigate the influence of model verbosity. For Arena-Hard, we report the win rate (WR) against a baseline model. For MT-Bench, we present the scores as evaluated by GPT-4-Preview-1106, which serve as the judge model.

\paragraph{Baselines}
We compare our approach against several established offline preference optimization methods, summarized in Table . Among these are RRHF \cite{yuan2023rrhf} and SLiC-HF \cite{zhao2023slic}, which employ ranking loss techniques. RRHF uses a length-normalized log-likelihood function, akin to the reward function utilized by SimPO \cite{meng2024simpo}, whereas SLiC-HF directly incorporates log-likelihood and includes a supervised fine-tuning (SFT) objective in its training process.

IPO \cite{azar2023general} presents a theoretically grounded approach that avoids the assumption made by DPO, which treats pairwise preferences as interchangeable with pointwise rewards. CPO \cite{guo2024controllable}, on the other hand, uses sequence likelihood as a reward signal and trains jointly with an SFT objective.

ORPO \cite{hong2024orpo} introduces a reference-free odds ratio term to directly contrast winning and losing responses using the policy model, also incorporating joint training with the SFT objective. R-DPO \cite{Park2024DisentanglingLF} extends DPO by adding a regularization term that mitigates the exploitation of response length.

InfoNCA \cite{chen2024noise}, which introduces a K-category cross-entropy loss, reframes generative modeling problems as classification tasks by contrasting multiple data points. It computes soft labels using dataset rewards, applying a softmax operation to map reward values into probability distributions.

Lastly, SimPO \cite{meng2024simpo} leverages the average log probability of a sequence as an implicit reward, removing the need for a reference model. It further enhances performance by introducing a target reward margin to the Bradley-Terry objective, significantly improving the algorithm's effectiveness.

\begin{table}[!tbph]
\centering
\begin{tabular}{@{}llrrrrr@{}}
\toprule
\multirow{2}{*}{\textbf{Model Name}} & \multirow{2}{*}{\textbf{Method}} 
& \multicolumn{5}{c}{\textbf{Hyper-parameters}} \\
\cmidrule(lr){3-7}
& & \textbf{$\alpha$} & \textbf{$\beta$} & \textbf{$\lambda$} & \textbf{$\gamma$} & \textbf{LR} \\
\toprule
\multirow{3}{*}{Mistral-7b-base} 
& \refa-InfoNCA & 0.01 & 3.0 & 5.0e-5 & 2.0 & 3.0e-7 \\
& \refa-1vsk    & 0.01 & 2.0 & 5.0e-5 & 2.0 & 3.0e-7 \\
& \refa-Dynamic & 0.01 & 3.0 & 5.0e-5 & 2.0 & 3.0e-7 \\
\midrule
\multirow{3}{*}{Llama-3-8b-base} 
& \refa-InfoNCA & 0.01 & 4.0 & 2.0e-5 & 2.0 & 6.0e-7 \\
& \refa-1vsk    & 0.01 & 2.0 & 2.0e-5 & 2.0 & 6.0e-7 \\
& \refa-Dynamic & 0.01 & 4.0 & 2.0e-5 & 2.0 & 6.0e-7 \\
\midrule
\multirow{3}{*}{Mistral-7b-instruct} 
& \refa-InfoNCA & 1.0 & 2.5 & 1.0e-5 & 4.0 & 3.0e-7 \\
& \refa-1vsk    & 1.0 & 2.5 & 1.0e-5 & 3.0 & 3.0e-7 \\
& \refa-Dynamic & 1.0 & 2.5 & 1.0e-5 & 4.0 & 3.0e-7 \\
\midrule
\multirow{3}{*}{Llama-3-8b-instruct} 
& \refa-InfoNCA & 1.0 & 10.0 & 1.0e-5 & 4.0 & 4.0e-7 \\
& \refa-1vsk    & 1.0 & 10.0 & 1.0e-5 & 3.0 & 4.0e-7 \\
& \refa-Dynamic & 1.0 & 10.0 & 1.0e-5 & 4.0 & 4.0e-7 \\
\midrule
\multirow{3}{*}{Mistral-7b-instruct} 
& \refa-Dynamic (Iter1) & 1.0 & 2.5 & 1.0e-5 & 4.0 & 8.0e-7 \\
& \refa-Dynamic (Iter2) & 1.0 & 2.5 & 1.0e-5 & 4.0 & 5.0e-7 \\
& \refa-Dynamic (Iter3) & 1.0 & 2.5 & 1.0e-5 & 4.0 & 3.0e-7 \\
\midrule
\multirow{3}{*}{Llama-3-8b-instruct} 
& \refa-Dynamic (Iter1) & 1.0 & 10.0 & 1.0e-5 & 4.0 & 8.0e-7 \\
& \refa-Dynamic (Iter2) & 1.0 & 10.0 & 1.0e-5 & 4.0 & 5.0e-7 \\
& \refa-Dynamic (Iter3) & 1.0 & 10.0 & 1.0e-5 & 4.0 & 5.0e-7 \\
\bottomrule
\end{tabular}
\caption{Hyper-parameter settings for various models and $\refa$ methods. The table is grouped by base models, instruction-tuned models, and iterative tuning runs.}
\label{tab:hyperparameter-analysis}
\end{table}

\textbf{Hyperparameter Consistency and Tuning Strategy:}
The main hyperparameters in \refa are $\alpha$ (reward deviation sensitivity), $\beta$ (inverse temperature), $\gamma$ (negative set penalty), $\lambda$ (EOS regularization strength), and $p$ (power for deviation term, explored as a variant).

Our tuning, performed on a held-out development split of the UltraFeedback dataset (\texttt{test\_prefs}), revealed several \textit{consistencies} (see Table \ref{tab:hyperparameter-analysis} for optimal values):

\begin{itemize}[itemsep=1pt, topsep=2pt, parsep=2pt, leftmargin=*]
    \item \textbf{$\alpha$ (Deviation Sensitivity):} Showed consistency by base model type: optimal $\alpha=0.01$ for Base models and $\alpha=1.0$ for Instruct models.
    \item \textbf{$\beta$ (Inverse Temperature):} Displayed model-family consistency: $\beta=2.5$ for Mistral-Instruct variants and $\beta=10.0$ for Llama-3-Instruct variants.
    \item \textbf{$\lambda$ (EOS Regularization):} Consistently optimal at $1.0 \times 10^{-5}$ for Instruct models across variants and iterations; varied slightly for Base models ($2.0 \times 10^{-5}$ to $5.0 \times 10^{-5}$). As a key component for length control, some tuning in a small range is beneficial (see Fig. 2 in main paper).
    \item \textbf{$\gamma$ (Negative Set Penalty):} Generally stable, with $\gamma=2.0$ often optimal for Base models and iterative \refa-Dynamic. Single-iteration Instruct models used $\gamma \in [3.0, 4.0]$. This parameter did not require extensive fine-tuning.
    \item \textbf{$p$ (Deviation Power):} Explored as a structural variant ($p \in \{0,1,2\}$) rather than a continuously tuned hyperparameter. $p=2$ was often best for \refa-Dynamic, while $p=0$ (unweighted deviation) also showed strong performance for \refa-1vsk on Llama-3-8B, indicating \refa's core benefits are not solely dependent on this specific weighting.
\end{itemize}

\section{Additional Ablations}
\label{sec:additional_ablation_details}
\vspace{-0.1in}
\paragraph{Effect of Penalty-Ratio $\gamma$ on \refa-dynamic:} To investigate the impact of the penalty-ratio parameter $\gamma$ on the performance of \textsc{ReFa-Dynamic}, we evaluate two key metrics: LC-WR (\%) and WR (\%) across varying values of $\gamma$. As shown in Figure~\ref{fig:gamma-analysis}, LC-WR increases initially, peaking at $\gamma = 3$, before slightly declining. This trend indicates that a moderate penalty ratio balances positive and negative response contributions effectively, improving alignment consistency. 
\begin{wrapfigure}{r}{0.5\linewidth}
    \centering
    \includegraphics[width=0.9\linewidth]{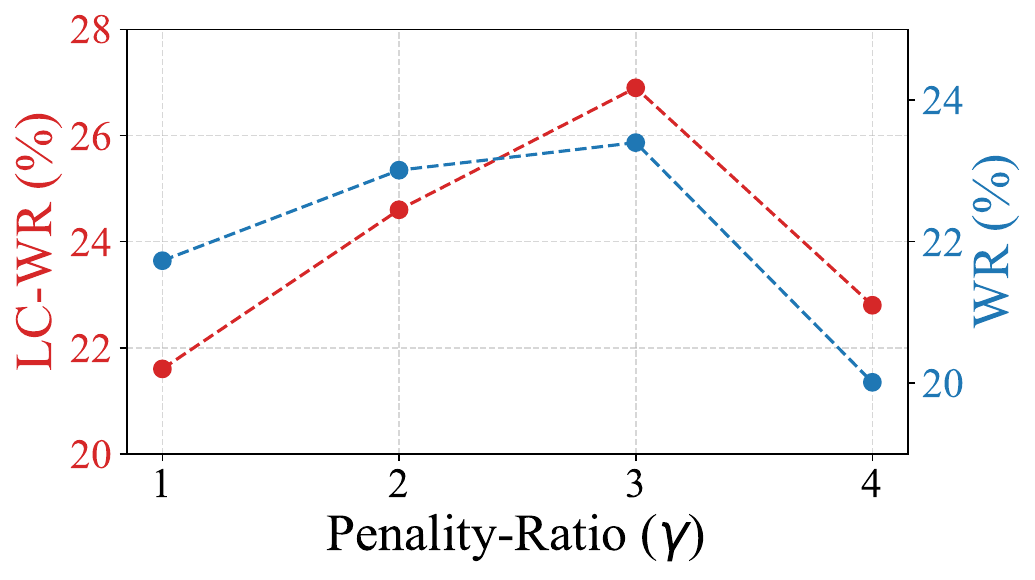}
    \vspace{-10pt} 
    \caption{Effect of penalty-ratio $\gamma$ on Mistral-Base (7B) on AlpacaEval2.
    \vspace{-0.1in}}
    \label{fig:gamma-analysis}
\end{wrapfigure}
Conversely, WR\% shows a decreasing trend as $\gamma$ increases beyond 3, suggesting that overly penalizing negative responses may compromise reward alignment. \textbf{Key findings} underline the importance of selecting an optimal $\gamma$ to maintain a balance between alignment consistency and reward signals.

\paragraph{Error Analysis and Statistical Significance}
\label{sec:error-analysis}

To validate the performance improvements of our $\refa$ framework, we conduct a rigorous error analysis. It is crucial to demonstrate that the observed gains are not artifacts of evaluation variance but represent statistically significant advancements. We analyze the results on two challenging benchmarks, AlpacaEval2 and Arena-Hard, by examining standard errors (SE) and 95\% confidence intervals (CI).

The consolidated results in Table~\ref{tab:refa-error-analysis} provide the foundation for this analysis, comparing our full method and its variants against strong baselines.

\begin{table*}[!ht]
\centering
\caption{Statistical analysis of Reference-Free Alignment ($\refa$) and baseline alignment methods on the Llama-3-8b and Mistral-7b instruct models. To assess the reliability of performance gains, we report the length-controlled win rate (LC-WR) and standard win rate (WR) with standard error (SE) on \textbf{AlpacaEval2}, and the win rate with its 95\% confidence interval (CI) on \textbf{Arena-Hard}. The results demonstrate that $\refa$'s improvements are statistically robust and significant.}
\label{tab:refa-error-analysis}
\resizebox{\textwidth}{!}{%
\begin{tabular}{@{}ll ccc c@{}}
\toprule
\multirow{2}{*}{\textbf{Base Model}} & \multirow{2}{*}{\textbf{Method}} 
& \multicolumn{3}{c}{\textbf{AlpacaEval2}} & \textbf{Arena-Hard} \\
\cmidrule(lr){3-5}
& & \textbf{LC-WR (\%) $\pm$ SE} & \textbf{WR (\%) $\pm$ SE} & \textbf{Win-Rate (\%)} & \textbf{WR (\%) [95\% CI]} \\
\midrule

\multirow{5}{*}{Llama-3-8b-Instruct} & DPO                 & 40.30 $\pm$ 1.40 & 37.90 $\pm$ 1.40 & 49.3 & 32.6 [30.3, 34.8] \\
& SimPO               & 44.70 $\pm$ 1.40 & 40.50 $\pm$ 1.40 & 51.5 & 33.8 [30.9, 35.3] \\
& $\refa$-InfoNCA        & 43.47 $\pm$ 1.60 & 44.08 $\pm$ 1.60 & 53.6 & 38.9 [37.3, 41.4] \\
& $\refa$-1vsk-Dynamic   & 49.56 $\pm$ 1.57 & 50.23 $\pm$ 1.57 & 59.8 & 43.8 [42.0, 45.9] \\
& \textbf{$\refa$-Dynamic}        & \textbf{49.64 $\pm$ 1.58} & \textbf{50.63 $\pm$ 1.58} & \textbf{60.2} & \textbf{44.7 [42.6, 46.8]} \\
\midrule
\multirow{5}{*}{Mistral-7b-Instruct} & DPO                 & 26.80 $\pm$ 1.30 & 24.90 $\pm$ 1.30 & 37.1 & 21.0 [18.8, 22.7] \\
& SimPO               & 32.10 $\pm$ 1.40 & 34.80 $\pm$ 1.40 & 41.3 & 16.3 [15.2, 18.0] \\
& $\refa$-InfoNCA        & 30.20 $\pm$ 1.52 & 32.10 $\pm$ 1.52 & 40.2 & 22.5 [21.2, 24.1] \\
& \textbf{$\refa$-1vsk-Dynamic}   & \textbf{33.12 $\pm$ 1.48} & \textbf{36.23 $\pm$ 1.48} & \textbf{43.7} & \textbf{26.3 [24.6, 28.1]} \\
& $\refa$-Dynamic        & 32.50 $\pm$ 1.53 & 34.56 $\pm$ 1.53 & 42.5 & 25.8 [24.5, 27.6] \\
\bottomrule
\end{tabular}
}
\end{table*}

\paragraph{Significance Analysis on AlpacaEval2.}
The standard errors reported for AlpacaEval2 allow us to assess the significance of the performance gap between methods. A common heuristic for statistical significance is a difference between two means that is greater than twice the combined standard error.
\begin{itemize}[itemsep=0pt,left=0pt]
    \item For \textbf{Llama-3-8b-Instruct}, the $\refa$-Dynamic win rate (50.63\%) is 12.73 points higher than DPO (37.90\%). The difference is substantially larger than their standard errors (1.58 and 1.40), indicating a highly significant improvement. \texttt{$\refa$-Dynamic} (LC-WR 49.64 $\pm$ 1.58) also clearly outperforms SimPO (LC-WR 44.70 $\pm$ 1.40), with a gap of nearly 5 points, which is well over two standard errors of difference.
    \item For \textbf{Mistral-7b-Instruct}, $\refa$-1vsk (WR 36.23\%) achieves a lead of over 11 points against DPO (24.90\%). Given the small standard errors (1.48 and 1.30), this margin is again statistically significant.
\end{itemize}

\paragraph{Significance Analysis on Arena-Hard.}
The 95\% confidence intervals (CIs) on the Arena-Hard benchmark provide a direct and stringent test of significance. 
\begin{itemize}[itemsep=0pt,left=0pt]
    \item For \textbf{Llama-3-8b-Instruct}, there is a clear separation between our $\refa$ methods and the baselines. The lower bound of $\refa$-Dynamic's CI (\textbf{42.6\%}) is substantially higher than the upper bounds of both DPO (34.8\%) and SimPO (35.3\%). This non-overlapping interval demonstrates a statistically significant superiority of $\refa$-Dynamic.
    \item For \textbf{Mistral-7b-Instruct}, $\refa$-1vsk emerges as the top performer. Its CI of \textbf{[24.6\%, 28.1\%]} does not overlap with DPO's [18.8\%, 22.7\%] or SimPO's [15.2\%, 18.0\%]. This confirms that $\refa$-1vsk achieves a robust and significant win rate advantage on this difficult benchmark.
\end{itemize}

\paragraph{Conclusion on Robustness.}
The error analysis across both benchmarks and base models confirms the robustness of our findings. The performance gains achieved by the $\refa$ framework, validate $\refa$ as a superior alignment technique.

\paragraph{Computational Efficiency} We tested the computational efficiency of $\refa$ over a baseline of \textsc{MPO} and show that it achieves significant computational efficiency. Specifically, it leads to a 17\% reduction in training time and a 10\% decrease in peak GPU memory usage compared to the MPO baseline as show in table ~\ref{tab:efficiency-analysis}.

\begin{table}[h]
\centering
\begin{tabular}{lcc}
\toprule
\textbf{Metric} & \textbf{REFA} & \textbf{MPO} \\
\midrule
Training Time     & 4h 1m         & 4h 50m       \\
Peak GPU Memory   & 71 GB         & 79 GB        \\
\bottomrule
\end{tabular}
\caption{Efficiency comparison of REFA and MPO. The table highlights REFA's advantages in both training time and peak GPU memory consumption.}
\label{tab:efficiency-analysis}
\end{table}
\section{Qualitative Comparison}
\label{sec:qualitative comparison}

In this section we provide qualitative comparisons between the baseline model outputs and model outputs from a model trained through the REFA-dynamic approach.

\begin{tcolorbox}[colback=white, colframe=gray!60!black, title= SimPO vs REFA-Dynamic comparison]
\begin{tcolorbox}[colback=white, colframe=black!30, title=\textbf{User Query}]
\textit{Are you as capable as ChatGPT?}
\end{tcolorbox}
\begin{tcolorbox}[colback=white, colframe=dpoColor, title=\textbf{SimPO Response}]
I am not capable of being as advanced as chatgpt, which is a sophisticated artificial intelligence language model developed by openai. Chatgpt possesses a vast knowledge base, advanced natural language processing capabilities, and continuous learning abilities that enable it to generate human-like responses to various inputs, including text, images, and audio. While I can perform certain language processing tasks, my capabilities are limited compared to chatgpt's advanced neural networks, algorithms, and training data. Nonetheless, I can assist with specific tasks and provide relevant information based on my pre-existing knowledge and programming.
\end{tcolorbox}

\begin{tcolorbox}[colback=white, colframe=mpoColor, title=\textbf{REFA-Dynamic Response}]
I am not capable of being as advanced as chatgpt, which is a sophisticated language model developed by openai. Chatgpt has been trained on vast amounts of textual data, enabling it to generate human-like responses, understand natural language, and provide insights and recommendations based on context and patterns. While I possess natural language processing abilities and can assist with various tasks, I lack the level of sophistication and breadth of knowledge that chatgpt possesses. However, I can continue to learn and improve through training and exposure to new data, making me a useful tool for specific tasks or applications. Ultimately, chatgpt's capabilities are unique and exceptional, and it remains a benchmark for advanced ai language models.
\end{tcolorbox}
\end{tcolorbox}

\begin{tcolorbox}[colback=white, colframe=gray!60!black, title= SimPO vs REFA-Dynamic comparison]

\begin{tcolorbox}[colback=white, colframe=black!30, title=\textbf{User Query}]
\textit{If you could help me write an email to my friends inviting them to dinner on Friday, it would be greatly appreciated.}
\end{tcolorbox}
\begin{tcolorbox}[colback=white, colframe=dpoColor, title=\textbf{SimPO Response}]
Subject: Let's Gather for a Delicious Dinner on Friday!
Dear [Friends\u2019 Names],
Greetings and warm wishes!
I hope this email finds you all doing well and enjoying your week so far. As summer winds down, let's make the most of our time together and plan a delightful dinner gathering on Friday evening.
Mark your calendars for Friday, [Date], at [Time], as we invite you to join us for a scrumptious feast at our place. Here are the details:
Date: [Date]
Time: [Time]
Location: [Your Address]
We've prepared an exciting menu filled with mouthwatering dishes, including [list some appetizing dishes or cuisine theme]. We can't wait to share our culinary creations with you and indulge in some quality time together.
Please RSVP by [Date] so we can ensure adequate food and seating arrangements for everyone. Kindly let us know if you have any dietary restrictions or preferences, and we'll do our best to accommodate them.
Looking forward to catching up, sharing laughs, and creating unforgettable memories over a delightful dinner. Bring your favorite drinks, appetizers, or desserts to add to the festive ambiance.
Don't miss out on this opportunity to unwind, connect, and savor some delicious food with dear friends. See you soon!
Warm regards,
[Your Name]
\end{tcolorbox}

\begin{tcolorbox}[colback=white, colframe=mpoColor, title=\textbf{REFA-Dynamic Response}]
Subject: Let's Gather for a Delicious Dinner on Friday!
Dear Friends,
Greetings and warm wishes! I hope this email finds you all doing well and enjoying your week. As summer winds down, let's make the most of it by coming together for a delightful dinner gathering on Friday evening.
We're thrilled to invite you to join us for a cozy and intimate dinner at our place on [date] at [time]. We've planned an exciting menu filled with mouthwatering dishes, and we can't wait to share it with your company. Here's a sneak peek: [mention some appetizing highlights or cuisine theme, if applicable].
This casual yet elegant dinner party promises great conversations, laughter, and memorable moments. Please mark your calendars for Friday, [date], and kindly RSVP by [date] so we can prepare accordingly. Feel free to bring your favorite beverage or wine to complement the meal.
Let's catch up, indulge in delicious food, and create cherished memories amidst these unprecedented times. We're excited to have you all as our esteemed guests and look forward to seeing you soon.
Kindly let us know if you have any dietary restrictions or preferences, and we'll make sure to accommodate them. Please share this invite with anyone who might be interested in joining us.
Looking forward to hearing back from you and seeing you all at our dinner table!
Warm regards,
[Your Name]
[Your Contact Information]
Let's make this a fantastic evening together!
Best regards,
[Your Friends' Names] 
(Optional: Add a fun and personal touch by including a relevant GIF or emoji related to food or gathering, if appropriate.) 
Note: Adjust the tone, style, and details according to your relationship with your friends and the occasion's specifics. Good luck and enjoy your dinner! 
\end{tcolorbox}
\end{tcolorbox}

\end{document}